\documentclass[a4paper]{article}

\usepackage[utf8]{inputenc} % allow utf-8 input
\usepackage[T1]{fontenc}    % use 8-bit T1 fonts
\usepackage{hyperref}       % hyperlinks
\hypersetup{
        colorlinks   = true,
        urlcolor     = blue,
        linkcolor    = blue,
        citecolor   = blue
}
\usepackage{url}            % simple URL typesetting
\usepackage{booktabs}       % professional-quality tables
\usepackage{amsfonts}       % blackboard math symbols
\usepackage{nicefrac}       % compact symbols for 1/2, etc.
\usepackage{microtype}      % microtypography
\usepackage{xcolor}         % colors

\usepackage{changepage}
\usepackage{wrapfig}
\usepackage{microtype}
\usepackage{graphicx}
\usepackage{subcaption}
\usepackage{booktabs} % for professional tables

\usepackage{xcolor}
\usepackage[boxed]{algorithm}
\usepackage{algorithmic}
\usepackage{amsmath}
\usepackage{amsthm}
\usepackage{amssymb}
\usepackage{comment}
\usepackage{authblk}
\usepackage{placeins}
\usepackage{caption}
\usepackage{multirow}
\usepackage{xspace}
\usepackage{spverbatim}
\usepackage{csquotes}
\usepackage{float}
\usepackage{listings}

\usepackage{mathrsfs}

\usepackage[capitalize]{cleveref}
\usepackage[nolist,nohyperlinks,dua]{acronym}

\usepackage{dsfont}

\usepackage{enumitem}

\usepackage{mathtools}

\usepackage{array}
\usepackage{tabularx}

\usepackage{fullpage}

\newcommand{\txtfont}{\fontfamily{qcr}\selectfont}

\usepackage{natbib}
\bibliographystyle{plainnat}
\setcitestyle{round,sectionbib}

\setlength{\marginparwidth}{14ex}

\begin{acronym}
\acro{IND}{Independence Assumption}
\acro{RLS}{Regularized Least Squares}
\acro{ERM}{Empirical Risk Minimization}
\acro{RKHS}{Reproducing kernel Hilbert space}
\acro{DA}{Domain Adaptation}
\acro{PSD}{Positive Semi-Definite}
\acro{SGD}{Stochastic Gradient Descent}
\acro{OGD}{Online Gradient Descent}
\acro{SGLD}{Stochastic Gradient Langevin Dynamics}
\acro{IW}{Importance Weighted}
\acro{IPS}{Inverse Propensity Scoring}
\acro{SN}{Self-normalized Importance Weighted}
\acro{EL}{Empirical Likelihood}
\acro{MGF}{Moment-Generating Function}
\acro{ES}{Efron-Stein}
\acro{ESS}{Effective Sample Size}
\acro{KL}{Kullback-Leibler}
\acro{DR}{Doubly-Robust}
\acro{ESLB}{Efron-Stein Lower Bound}
\acro{MDP}{Markov Decision Process}
\acro{AL}{Augmented Lagrangian}
\acro{CRC}{Conformal Risk Control}
\acro{LTT}{Learn-then-Test}
\acro{LLM}{Large language model}
\acro{UCB}{upper confidence bound}
\acro{RCPS}{Risk-Controlling Prediction Sets}
\end{acronym}

\acrodefplural{LLM}[LLMs]{Large language models}

% Random matrices

\def\rmZ{{\mathbf{Z}}}

\def\query{{\mathbf{W^Q}}}
\def\key{{\mathbf{W^K}}}
\def\val{{\mathbf{W^V}}}

\def\cX{{\mathcal{X}}}

\def\cM{{\mathcal{M}}}

\def\real{{\mathbb{R}}}

% Random vectors

\newcommand\Itrain{\relax\ifmmode I_{\text{train}}\else $I_{\text{train}}$\xspace\fi}
\newcommand\Ical{\relax\ifmmode I_{\text{cal}}\else $I_{\text{cal}}$\xspace\fi}
\newcommand\Itest{\relax\ifmmode I_{\text{test}}\else $I_{\text{test}}$\xspace\fi}

\newcommand\Bcal{\relax\ifmmode B_{\text{cal}}\else $B_{\text{cal}}$\xspace\fi}
\newcommand\Btest{\relax\ifmmode B_{\text{test}}\else $B_{\text{test}}$\xspace\fi}

\newcommand\E{\mathbb{E}}

\renewcommand{\P}{\mathbb{P}}
\newcommand{\pr}[1]{\left( #1 \right)}

\newcommand*\diff{\mathop{}\!\mathrm{d}}

\newcommand{\cF}{\mathcal{F}}
\newcommand{\cZ}{\mathcal{Z}}

\newcommand{\KL}{\mathrm{KL}}

\newcommand{\LM}{Q}
\newcommand{\supp}{\mathrm{supp}}

\newcommand{\deli}{^{\backslash i}}
\newcommand{\delj}{^{\backslash j}}
\newcommand{\ve}{\varepsilon}
\newcommand{\QED}{\hfill$\square$\\[2mm]}
\newcommand{\inp}{x}
\def\cP{{\mathcal{P}}}
\newcommand{\Sample}{X}
\newcommand{\sample}{x}
\newcommand{\Normfactor}{Z}

%%%%%%%%%%%%%%%%%%%%%%%%%%%%%%%%
% THEOREMS
%%%%%%%%%%%%%%%%%%%%%%%%%%%%%%%%
\theoremstyle{plain}
\newtheorem{theorem}{Theorem}[section]
\newtheorem{proposition}[theorem]{Proposition}
\newtheorem{lemma}[theorem]{Lemma}
\newtheorem{corollary}[theorem]{Corollary}
\theoremstyle{definition}
\newtheorem{definition}[theorem]{Definition}
\newtheorem{assumption}[theorem]{Assumption}
\theoremstyle{plain}
\newtheorem{remark}[theorem]{Remark}

% Todonotes is useful during development; simply uncomment the next line
%    and comment out the line below the next line to turn off comments
\usepackage[colorinlistoftodos, shadow,color=blue!30!white, %disable
]{todonotes}
\setlength{\marginparwidth}{14ex}

\makeatletter
\newcommand{\printfnsymbol}[1]{%
  \textsuperscript{\@fnsymbol{#1}}%
}
\makeatother

\begin{document}

\title{\LARGE \bf
To Believe or Not to Believe Your LLM
}
\author{Yasin Abbasi Yadkori, Ilja Kuzborskij, Andr\'{a}s Gy\"{o}rgy, Csaba Szepesv\'{a}ri\\
Google DeepMind}

\maketitle

\begin{abstract}
  We explore uncertainty quantification in large language models (LLMs), with the goal to identify when uncertainty in responses given a query is large. We simultaneously consider both epistemic and aleatoric uncertainties, where the former comes from the lack of knowledge about the ground truth (such as about facts or the language), and the latter comes from irreducible randomness (such as multiple possible answers). In particular, we derive an information-theoretic metric that allows to reliably detect when only epistemic uncertainty is large, in which case the output of the model is unreliable. This condition can be computed based solely on the output of the model obtained simply by some special iterative prompting based on the previous responses. Such quantification, for instance, allows to detect hallucinations (cases when epistemic uncertainty is high) in both single- and multi-answer responses. This is in contrast to many standard uncertainty quantification strategies (such as thresholding the log-likelihood of a response) where hallucinations in the multi-answer case cannot be detected. We conduct a series of experiments which demonstrate the advantage of our formulation. Further, our investigations shed some light on how the probabilities assigned to a given output by an LLM can be amplified by iterative prompting, which might be of independent interest.
\end{abstract}

\section{Introduction}

\begin{adjustwidth}{1cm}{1cm} % Negative values widen the quote
\begin{displayquote}
  \emph{Who's talking?} I asked, peering behind the mirror. Many dead spiders and a lot of dust were there. Then I pressed my left eye with my index finger. This was an old formula for detecting hallucinations, which I had read in To Believe or Not to Believe?, the gripping book by B. B. Bittner. It is sufficient to press on the eyeball, and all the real objects, in contradistinction to the hallucinated, will double. The mirror promptly divided into two and my worried and sleep-dulled face appeared in it.
\end{displayquote}
\end{adjustwidth}
\hfill
\hspace*{\fill}---"Monday Starts on Saturday" by A.\ and B.\ Strugatsky

\medskip

Like the protagonist of the novel, language models too occasionally suffer from \emph{hallucinations},
or responses with low truthfulness, that do not match our own common or textbook knowledge \citep{bubeck2023sparks,geminiteam2023gemini}.
At the same time,
since LLMs work by modeling a probability distribution over texts, it is natural
to view the problem of truthfulness through the lens of statistical uncertainty.
In this paper we explore uncertainty quantification in LLMs. We distinguish
between two sources of uncertainty: \emph{epistemic} and \emph{aleatoric}
\citep{wen2022predictions,osband2023epistemic,johnson2024experts}.  Epistemic
uncertainty arises from the lack of knowledge about the ground truth (e.g.,
facts or grammar in the language), stemming from various reasons such as insufficient
amount of training data or model capacity.  Aleatoric uncertainty comes from
irreducible randomness in the prediction problem, such as multiple valid answers
to the same query. Hence, truthfulness can be directly analyzed via looking at the epistemic uncertainty of a model in the sense that when the epistemic uncertainty is low, the model predictions must be close to the ground truth.

Rigorously identifying when (either) uncertainty is small\footnote{For instance,
  by saying that predictions live in a confidence set with high
  probability.} 
is notoriously hard, especially in deep neural networks~\citep{blundell2015weight,antoran2020depth}.
This is because we generally lack guarantees about learning the ground truth
(consistency), or even a weaker guarantee about how large the variance of a
learning algorithm is.  At the same time, there exist many heuristic approaches
for uncertainty quantification based on simply looking at the log-likelihood of
responses \citep{KCAHD2022}, estimating entropy \citep{KuhnARXIV2023}, ensembling
\citep{lakshminarayanan2017simple,dwaracherla2023ensembles,osband2023epistemic},
or sometimes even more principled formulations, such as conformal prediction
\citep{angelopoulos2023conformal,RavfogelACL2023,conformal-abstention-2024}
(which however come with strong assumptions).

To the best of our knowledge, a common limitation of these approaches is that
they are only meaningful in problems where there exists a \emph{single} correct
response (e.g.\ label) as they aim for detecting if one response is dominant (or
multiple responses with the same meaning), that is, if there is only little uncertainty in the prediction.
On the other hand, when multiple responses are correct, that is, there is \emph{aleatoric uncertainty} in the ground truth, simply estimating the amount of uncertainty in the LLM's output is insufficient, as the perfect (ground-truth) predictor may have large aleatoric uncertainty and no epistemic uncertainty, while a completely useless predictor may have large epistemic uncertainty only, but the total amount of uncertainty of the two predictors might be the same.

\paragraph{Contributions.}
In this paper we address the above problem directly, and design methods to \emph{decouple
  epistemic and aleatoric uncertainty}, allowing us to effectively deal with
multi-response queries.  Rather than trying to quantify how small epistemic uncertainty can be, we aim to identify when only the \emph{epistemic uncertainty is large}, in which case we can suspect that the response is hallucinated.\footnote{In technical terms this corresponds to giving a
  lower bound, rather than an upper bound, on the quantity capturing the
  uncertainty.}

As a starting point we make a simple observation: If multiple
responses are obtained to the same query from the ground truth (the language), they should be
independent from each other, that is, in probabilistic interpretation, the joint
distribution of these multiple responses, for a fixed query, must be a product
distribution.

This observation can be used to measure how \emph{far} the language model can be
from the ground truth.  The sequential model implemented by a language model
allows us to construct a joint distribution over multiple responses, which is
done through \emph{iterative prompting of an LLM based on its previous
  responses} and the application of the chain rule of probability: first we ask
the model to provide a response given a query, then to provide another response
given the query and the first response, then a third one given the query and the
first two responses, an so on.  This is in contrast to some of the earlier works
that approached decoupling epistemic and aleatoric uncertainty for
classification problems  by training the model with
pairs (or tuples) of labels \citep{wen2022predictions, johnson2024experts}.

So, if the response to a prompt containing the query and previous responses is insensitive to the previous responses, we have the desired independence and the LLM-derived joint distribution can
be arbitrarily close to the ground truth.  On the other hand, if the responses within the context heavily influence new responses from the model then, intuitively speaking, the LLM has low confidence about the knowledge stored in its parameters, and so the LLM-derived joint
distribution \emph{cannot be close} to the ground truth. As more responses are added to the prompt, this dependence can be made more apparent, allowing to detect \emph{epistemic uncertainty via our iterative prompting procedure}.

Interestingly, as we will see in \Cref{sec:prompting}, we can force an LLM to provide a desired (possibly incorrect) response by adding this response repeatedly to the prompt. This phenomenon is then further investigated from the viewpoint of a transformer LLM architecture in \Cref{sec:in-context-vs-in-weight}.

The iterative prompting procedure then leads to the following main contributions:

\emph{(i)} Based on the above iterative prompting procedure, we derive an
\emph{information-theoretic metric of epistemic uncertainty} in LLMs
(\Cref{sec:epistemic}), which quantifies the gap between the LLM-derived
distribution over responses and the ground truth.  This gap is insensitive to
aleatoric uncertainty, and therefore we can quantify epistemic uncertainty even
in cases where there are multiple valid responses.

\emph{(ii)} We derive a computable lower bound on this metric, which turns out to be a
\emph{mutual information} (MI) of an LLM-derived joint distribution
over responses,\footnote{Here MI is understood as a functional of a joint
  distribution (see \Cref{par:it}).}  and propose a finite-sample estimator for it.
We prove that this finite-sample MI estimator sometimes suffers only a
negligible error even though LLMs and their derived joint distributions are
defined over potentially infinite supports (all possible strings in a language).

\emph{(iii)} We discuss an algorithm for hallucination detection based on thresholding a finite-sample MI estimator, where the threshold is computed automatically through a \emph{calibration} procedure.
We show experimentally on closed-book open-domain question-answering benchmarks (such as TriviaQA, AmbigQA, and a dataset synthesized from WordNet) that when the data is mostly composed of either single-label or multi-label queries, our MI-based hallucination detection method surpasses a naive baseline (which is based on the likelihood of the response), and achieves essentially similar performance to that of a more advanced baseline which is based on the entropy of the output as a proxy for uncertainty. However, on datasets which contain both single- and multi-label samples at the same time, our method also significantly outperforms the entropy-based baseline, by achieving a much higher recall rate on samples with high output entropy while maintaining similar error rates.

\emph{(iv)} Focusing on a single self-attention head, we identify a simple mechanistic
explanation for how the model output can be changed through iterative prompting
using previous responses, as discussed earlier.  Suppose that the prompt is
composed from a query and a repeated element (e.g., a possibly wrong answer).
If the query lies within the space spanned by the large principal components of
a key-query matrix product, then the output will be generated according to the
knowledge extracted from the training data (now stored in a value matrix).  On
the other hand, if the query has little overlap with the large principal
components, then the repeated element is likely to be copied from the prompt.

\paragraph{Notation.} As usual, $\mathbb N$ and $\real$ denote the sets of natural and real numbers, respectively. For any measurable set $\cZ$, we denote the the set of distributions supported on $\cZ$ by $\cM_1(\cZ)$. For any positive integer $k$, we denote $[k]=\{1,\ldots,k\}$.

\section{Preliminaries}
\label{sec:prelim}
In this section we present some basic definitions used throughout the paper.

\paragraph{Conditional distributions and prompting.}
Let $\cX$ be the space of finite text sequences, that is $\cX \subset \Sigma^*$ where $\Sigma$ is a finite alphabet (and $\Sigma^* = \bigcup_{n=1}^\infty \Sigma^n$).
Moreover, consider a family of conditional distributions
$\cP = \{\mu : \cX \to [0,1] \, \mid \, \sum_{x \in \cX} \mu(x \mid x') = 1 \quad \forall x' \in \cX\}$.
In the following, we let $P \in \cP$ be the ground-truth conditional probability distribution over text sequences (responses) given a prompt, and we let $\LM \in \cP$ be the learned language model.
Given a fixed query $\inp \in \cX$ and possible responses $Y_1,\ldots, Y_t$, we define a \emph{family of prompts} $\cF = \{F_t : \cX \to \cX \,\mid\, t \in \mathbb{N}\}$, such that
$F_t(\inp,Y_1,\ldots,Y_{t})$ is defined as:

{%\small 
\begin{center}
\begin{tabularx}{10cm} { 
  | >{\raggedright\arraybackslash} X | }
\hline\\[-0.3em]
Consider the following question:
Q: $\inp$

One answer to question Q is $Y_1$. Another answer to question Q is $Y_2. [\ldots]$
Another answer to question Q is $Y_t$.

Provide an answer to the following question:

Q: $\inp$. A: 

\\[-0.3em]
\hline
\end{tabularx}
\end{center}
}
\paragraph{Information-theoretic notions.}
\label{par:it}
Let $\mu, \mu'$ be distributions supported on set $\cZ = \cZ_1 \times \cdots \times \cZ_n$ where $(\cZ_i)_i$ is a collection of countable sets.
The \emph{entropy} of a distribution $\mu$ is defined as $H(\mu) = \sum_{z \in \cZ} \mu(z)\ln(1/\mu(z))$.\footnote{Following the usual convention, we define $0 \ln 0 = 0$ and $a \ln (a/0) = \infty$ for any $a>0$.}
If $\mu, \mu'$ are such that $\mu'(z) = 0$ only if $\mu(z) = 0$, we have a \emph{\ac{KL} divergence} between them defined as $D_{\KL}(\mu, \mu') = \sum_{z \in \cZ} \mu(z) \ln (\mu(z) / \mu'(z))$.
For any $z \in \cZ$, we denote $z\deli = (z_1,\ldots, z_{i-1}, z_{i+1}, \ldots, z_n)$, and the marginal of the $i$th coordinate of $\mu$ is given by
$\mu_i(z)=\sum_{z\deli \in \cZ^{n-1}} \mu(z)$. The product distribution of the marginals of $\mu$ is given by $\mu^{\otimes}(z) = \prod_{i=1}^n \mu_i(z)$, %\sum_{z\deli} \mu(z)$, 
and the \emph{mutual information} of $\mu$ is defined as $I(\mu) = D_{\KL}(\mu, \mu^{\otimes})$.

\section{Probability amplification by iteratively prompting} %with previous responses}
  \label{sec:prompting}
In this section we demonstrate that, as mentioned in the introduction, repeating possible responses several times in a prompt can have pronounced effects on the output of a language model. 
Consider $x=$\emph{``What is the capital of the UK?''} and $Y_1=\cdots=Y_t=$\emph{``Another answer to question Q is Paris.''} Here we can repeat the sentence \emph{``Another answer to question Q is Paris.''} an arbitrary number of times. Although the number of repetitions changes the behavior of the LLM, the correct response maintains a significant probability:
as \Cref{fig:single-label-no-hallucination} shows, the conditional normalized probability\footnote{To obtain conditional normalized probabilities, we consider the probabilities of the two responses, and normalize them so that they add to 1.} of the correct response, \emph{``London''}, reduces from approximately 1 to about 96\% as we increase the number of repetitions of the incorrect response to 100. 
\Cref{fig:single-label-no-hallucination} shows 3 more examples where, with initially low epistemic uncertainty in the response to the query (the aleatoric uncertainty is also low as we consider single-response queries), the correct response maintains a significant or non-negligible probability even in the presence of repetitions of incorrect information, while the probability of predicting the latter is increased.

\begin{figure}[t]
\begin{center}
    \begin{subfigure}[t]{0.23\linewidth}
      \includegraphics[width=\textwidth]{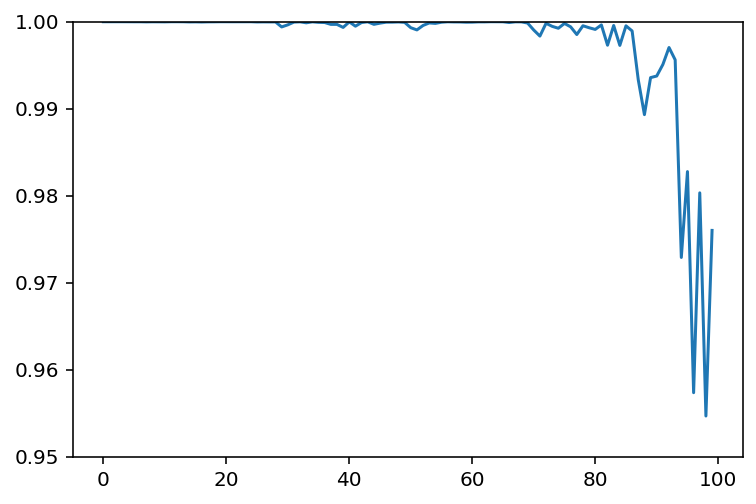}
    \caption*{\tiny Q: \emph{What is the capital of the UK?} A: \emph{London} ($\approx 1.0$) and \emph{Paris} ($1.29 \times 10^{-10}$).}
  \end{subfigure}
  ~~
  \begin{subfigure}[t]{0.23\linewidth}
    \includegraphics[width=\textwidth]{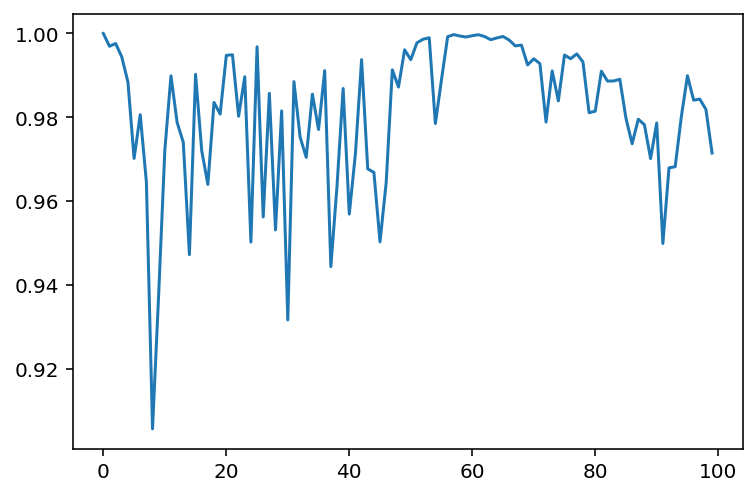}
    \caption*{\tiny Q: \emph{Who was the first US president?} A: \emph{George Washington} ($0.999$) and \emph{Abraham Lincoln} ($3.1 \times 10^{-06}$).}
  \end{subfigure}
  ~~
  \begin{subfigure}[t]{0.23\linewidth}
    \includegraphics[width=\textwidth]{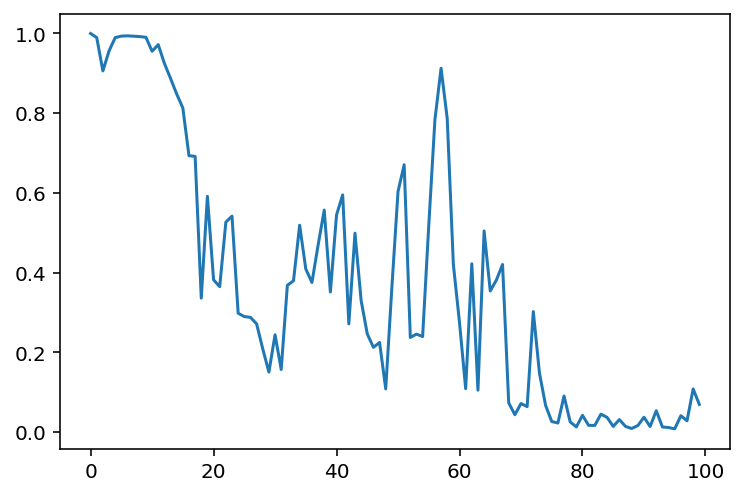}
    \caption*{\tiny Q: \emph{Who is the author of The Grapes of Wrath?} A: \emph{John Steinbeck} ($\approx 1.0$) and \emph{Ernest Hemingway} ($1.34 \times 10^{-10}$).}
  \end{subfigure}
  ~~
  \begin{subfigure}[t]{0.23\linewidth}
    \includegraphics[width=\textwidth]{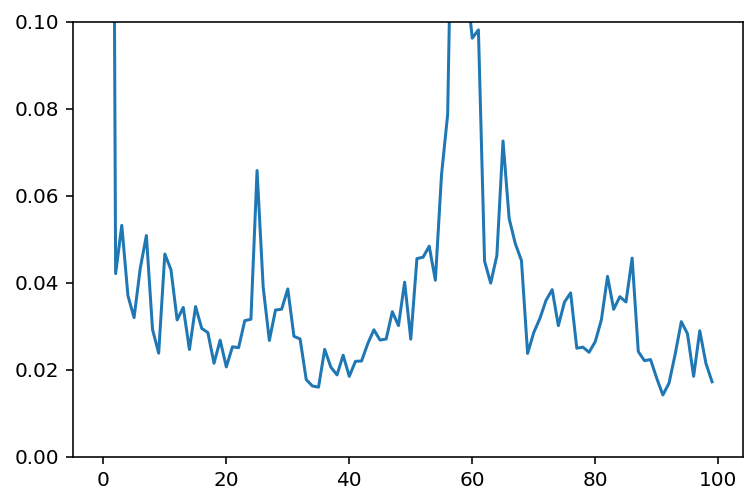}
    \caption*{\tiny Q: \emph{What is the largest country in the world?} A: \emph{Russia} ($0.999$) and \emph{United Kingdom} ($9.02 \times 10^{-06}$).}
  \end{subfigure}
\end{center}
\vspace{-4mm}
\caption{Single-label queries with low epistemic uncertainty:
    Conditional normalized probability of the correct completion given repetitions of an incorrect response. Each figure shows the query and the considered two responses with their initial probabilities, as a response for the query, in parentheses (the first response is the correct one).}
\label{fig:single-label-no-hallucination}

\begin{center}
    \begin{subfigure}[t]{0.23\linewidth}
      \includegraphics[width=\textwidth]{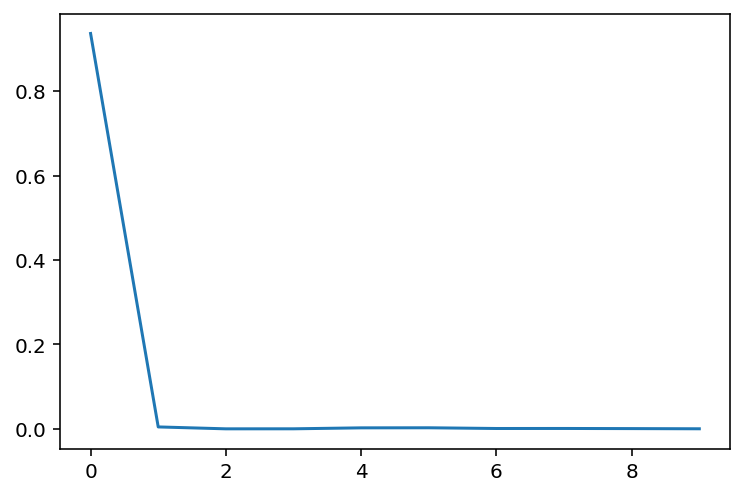}
    \caption*{\tiny Q: \emph{What is the national instrument of Ireland?} A: \emph{The harp} ($0.936$) and \emph{Uilleann pipes} ($0.063$).}
  \end{subfigure}
  ~~
  \begin{subfigure}[t]{0.23\linewidth}
    \includegraphics[width=\textwidth]{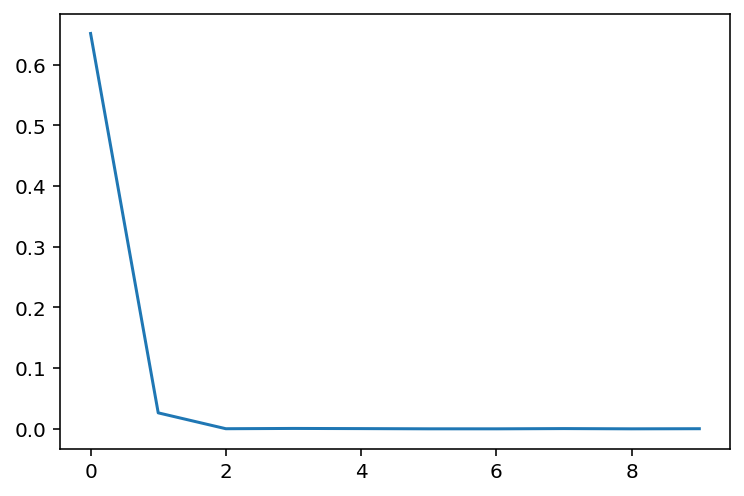}
    \caption*{\tiny Q: \emph{Which actor became M in the Bond film Skyfall?} A: \emph{Ralph Fiennes} ($0.651$) and \emph{Judi Dench} ($0.348$).}
  \end{subfigure}
  ~~
  \begin{subfigure}[t]{0.23\linewidth}
    \includegraphics[width=\textwidth]{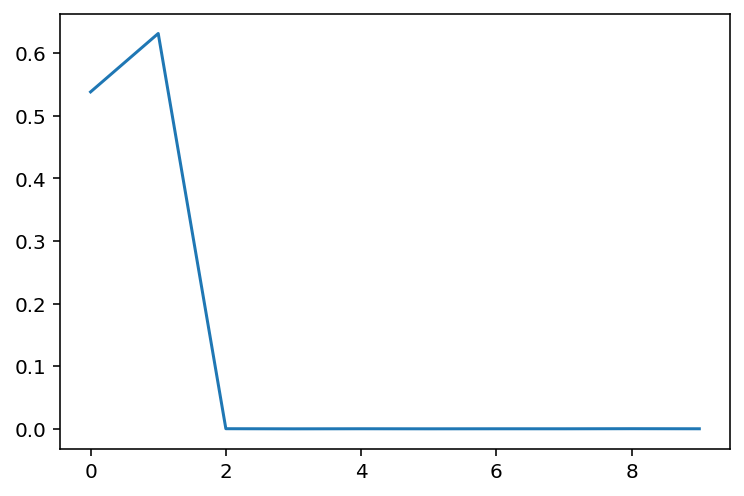}
    \caption*{\tiny Q: \emph{Which can last longer with out water a camel or a rat?} A: \emph{A rat} ($0.538$) and \emph{A camel} ($0.461$).}
  \end{subfigure}  
  ~~
  \begin{subfigure}[t]{0.23\linewidth}
    \includegraphics[width=\textwidth]{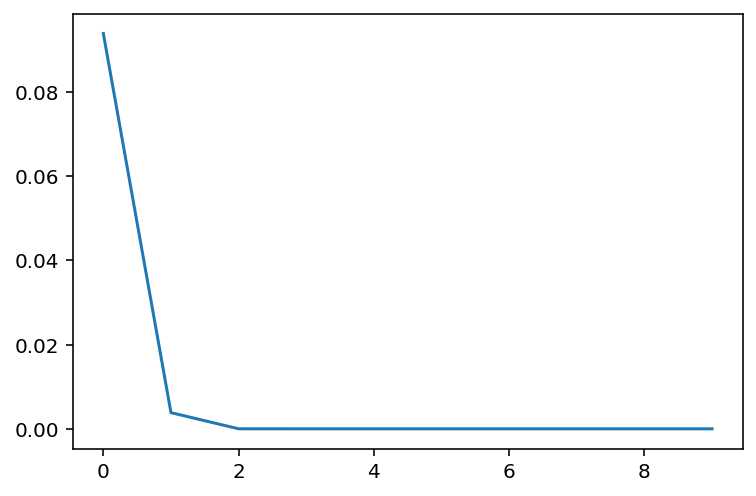}
    \caption*{\tiny Q: \emph{If Monday's child is fair of face what is Saturday's child?} A: \emph{Work hard for a living} ($0.093$) and \emph{Full of grace} ($0.906$).}
  \end{subfigure}
\end{center}
\vspace{-4mm}
\caption{Single-label queries with high epistemic uncertainty:
    Conditional normalized probability of the correct completion given repetitions of an incorrect response. Each figure shows the query and the considered two responses with their initial probabilities, as a response for the query, in parentheses (the first response is the correct one).
} 
\label{fig:single-label-hallucination}

\end{figure}

Next, we consider a queries for which the model is more uncertain. For the prompt \emph{``What is the national instrument of Ireland?''}, we observe that responses \emph{``The harp''} and \emph{``Uilleann pipes''} both have significant probabilities (the first answer is the correct one). This time, by incorporating the incorrect response in the prompt multiple times, the probability of the correct answer quickly collapses to near zero, as shown in \Cref{fig:single-label-hallucination}, together with three more examples with significant epistemic uncertainty.

Finally, we consider multi-label queries for which the LLM confidently knows a correct answer. This time, by incorporating a potential response in the prompt, the probabilities of other correct answers stay relatively large. \Cref{fig:multi-label-no-hallucination} shows four such examples.

\begin{figure}[t]
\begin{center}
    \begin{subfigure}[t]{0.23\linewidth}
      \includegraphics[width=\textwidth]{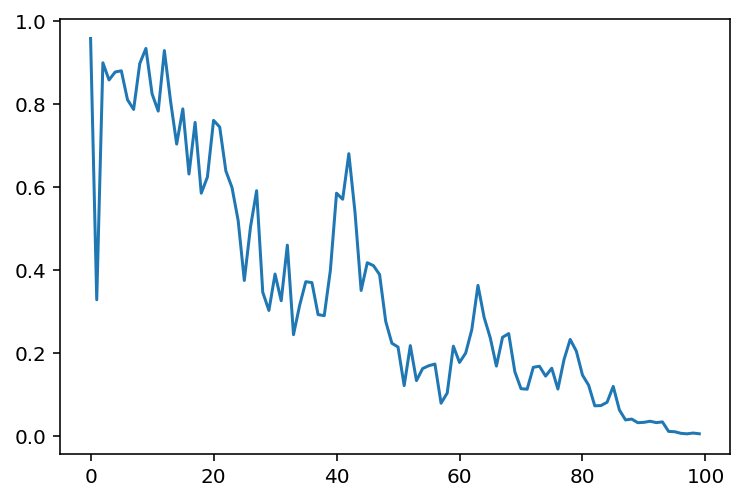}
    \caption*{\tiny Q: \emph{Name a city in the UK} A: \emph{London} ($0.958$) and \emph{Manchester} ($0.041$).}
  \end{subfigure}
  ~~
  \begin{subfigure}[t]{0.23\linewidth}
    \includegraphics[width=\textwidth]{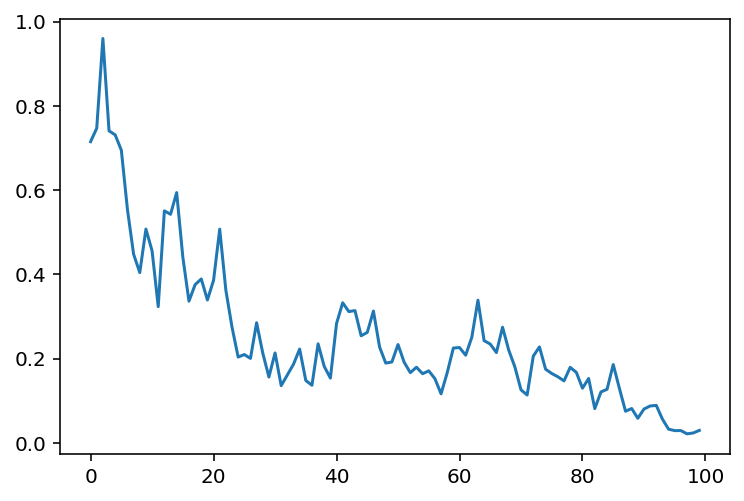}
    \caption*{\tiny Q: \emph{Name a yellow fruit} A: \emph{Banana} ($0.715$) and \emph{Lemon} ($0.284$).}
  \end{subfigure}
  ~~
  \begin{subfigure}[t]{0.23\linewidth}
    \includegraphics[width=\textwidth]{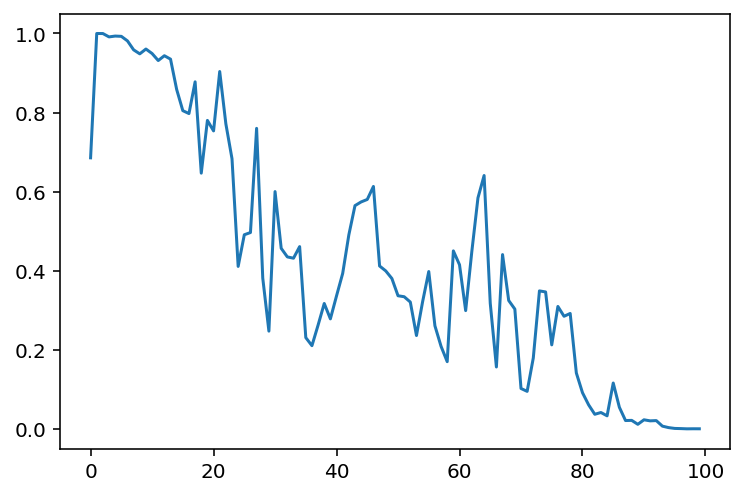}
    \caption*{\tiny Q: \emph{Name an alcoholic drink}, A: \emph{Wine} ($0.685$) and \emph{Beer} ($0.314$).}
  \end{subfigure}  
  ~~
  \begin{subfigure}[t]{0.23\linewidth}
    \includegraphics[width=\textwidth]{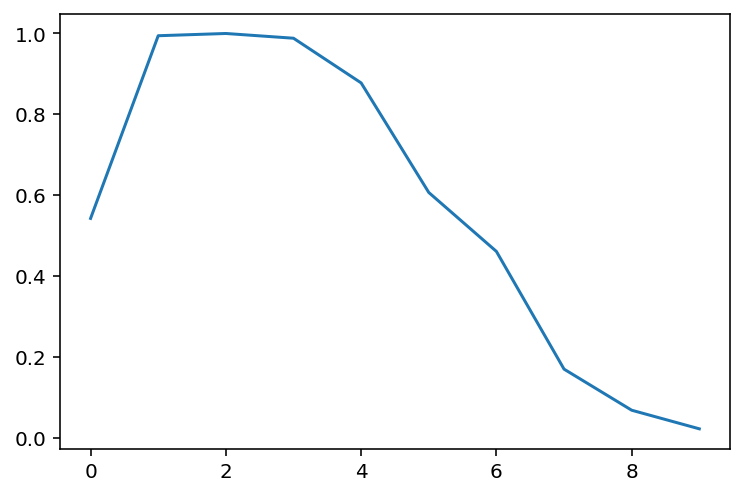}
    \caption*{\tiny Q: \emph{Name a ball game that is played by more than 5 players} A: \emph{Volleyball} ($0.542$) and \emph{Soccer} ($0.457$).}
  \end{subfigure}
\end{center}
\vspace{-4mm}
\caption{Multi-label queries with aleatoric uncertainty: 
Conditional normalized probability of the first of the two provided responses, both of which are correct, given repetitions of the second response in the prompt. Each figure shows the query and the considered two responses with their initial probabilities, as a response for the query, in parentheses.} 
\label{fig:multi-label-no-hallucination}
\end{figure}

%%%%%%%%%%%%%%%%%%%%%%%%%%%%%%%%%%%%%%%%%%%%%%%%%%%%%%
%%%%%%%%%%%%%%%%%%%%%%%%%%%%%%%%%%%%%%%%%%%%%%%%%%%%%
\subsection{In-context learning vs.\ in-weight learning}
\label{sec:in-context-vs-in-weight}

The sensitivity of the response of an LLM to extra \textit{in-context} information, as observed above, can already be observed in a single attention head as explained next. 

We consider an idealized attention mechanism as follows. Let $\rmZ\in \real^{n\times d'}$ be an input  matrix comprised of $n$ semantic feature vectors each of dimension $d'$. Each row is meant to represent a complete statement (such as \emph{``What is the capital of the UK?''} or \emph{``One answer to the question is Paris.''}, etc.) rather than a single token. Let $X^\top\in\real^{1\times d'}$ be the first row of $\rmZ$, which represents the \emph{query} of interest, such as \emph{``What is the capital of the UK?''}. Let $E^\top \in\real^{1\times d'}$ be a special vector indicating the end of the input. The matrix $\rmZ\setminus X$, denoting the $\rmZ$ matrix without its first row, represents the \emph{in-context} information. 

We assume the ground-truth distribution $P$ is such that a query vector is mapped to its response, but a statement is simply copied. For example, for $V=\,$\emph{``What is the capital of the UK?''}, $P(\, \cdot \,\mid V)$ would be a distribution with support on \emph{``London''} and its variations, while for $V'=\,$\emph{``What is the capital of the UK? One answer to the question is Paris.''}, $P(\, \cdot \, \mid V')$ returns the same distribution. We assume a parameter matrix $\val$ is learned such that $V^\top \val$ estimates $P(\, \cdot \,\mid V)$ for vector $V$.

Let $\query,\key,\val\in\real^{d'\times d}$ be the query, key, and value matrices. A self-attention head with query $X$ and context $\rmZ\setminus X$ is defined as
\[
f(\rmZ; \query,\key,\val) = \text{Softmax}\left( \frac{1}{\sqrt{d}} E^\top \query (\rmZ \key)^\top  \right) \rmZ \val \;
\]
where the output of the softmax is a row vector of length $n$.

If $X$ has appeared many times in the training data, then parameters $\query$ and $\key$ could be learned such that $E^\top  \query (\key)^\top X$ is large, that is, $X$ is within the space spanned by the large principal components of the key-query matrix product. Then, no matter what in-context information appears in $\rmZ$, the probability assigned to $X$ will dominate the softmax, and we will have 
\[
\text{Softmax}\left( \frac{1}{\sqrt{d}} E^\top \query (\rmZ \key)^\top  \right) \rmZ \approx X^\top \,,
\]
and therefore $f(\rmZ; \query,\key,\val) \approx P(\, \cdot \, \mid X)$. 

On the other hand, consider the case that $X$ has not appeared many times in the training data, and vector $Y$ is copied in many rows of $\rmZ$. Then $E^\top  \query (\key)^\top X$ could be small as $X$ is not in the span of the large principal components of the key-query matrix product.
%Let $v = E^\top  \query (\key)^\top Y$. 
Therefore $f(\rmZ; \query,\key,\val) \approx Y$ since
\[
\text{Softmax}\left( \frac{1}{\sqrt{d}} E^\top \query (\rmZ \key)^\top  \right) \rmZ \approx Y^\top \,.
\]
Even if $X$ is in the span, repeating $Y$ $t$ times in $\rmZ$  would give a $t$-times increased total weight to $Y$ inside the softmax, which can dominate the weight assigned to $X$ when $t$ is large enough, also resulting in $Y$ as the answer.

\section{Metric of epistemic uncertainty and its estimation}
\label{sec:epistemic}

In this section we apply iterative prompting to estimate the epistemic uncertainty of a language model about responding to some query. The idea is to utilize the different behavior patterns observed in \Cref{sec:prompting}, which can be used to differentiate between two modes of high uncertainty: when the aleatoric uncertainty is high vs. when only the epistemic uncertainty is high. We then apply our new uncertainty metric to design a score-based hallucination detection algorithm. 

We will first present the uncertainty metric and its estimate for a distribution defined on the direct outputs of an LLM, and then in \Cref{sec:semantic-equivalence}, we discuss the changes needed to take semantic equivalences of language into account~\citep{KuhnARXIV2023,FKKG-2024}.    

Recall the family of prompts $\cF$ defined in \Cref{sec:prelim}. We make the following assumption about the ground truth, which states that multiple responses to the same question drawn according to the ground truth are independent from each other:
\begin{assumption}[Ground truth independence assumption]
  \label{asm:indep}
The ground-truth satisfies
\begin{align*}
P(Y_t \mid F_{t-1}(\inp,Y_1,\ldots,Y_{t-1})) = P(Y_t \mid \inp) \qquad \text{for any $t\in \mathbb{N}$ and any $x, Y_1,\ldots,Y_t \in \cX$.}
\end{align*}
\end{assumption}
Note that the above assumption is heavily dependent on our prompt construction. Without embedding $Y_1,\ldots,Y_{t-1}$ in the prompt, the independence assumption would not hold, for example, 
if $Y_1,\ldots,Y_t$ were partial answers, such as a step of an algorithm or a part of a story, because in such a case $Y_t$ might indeed depend on the previous outputs $Y_1,\ldots,Y_{t-1}$.
Roughly speaking, the assumption tells that the response distribution is
insensitive to a query based on previously sampled responses.
For example, for query $\inp=$\emph{``A city in the UK:''}, the probability of $Y_2=$\emph{``Manchester''} does not change if a city is $Y_1=$\emph{``London''}. 
Now we formally introduce a notion of the joint distribution over responses given a query, derived from the language model:
\label{sec:pseudo}
\begin{definition}[Pseudo joint distribution]
  \label{def:pseudo-joint}
  Given a family of prompt functions $\cF$, a conditional distribution $\mu \in \cP$, and $n \in \mathbb{N}$, we use notation $\widetilde \cdot$ to denote a pseudo joint distribution defined as
  \begin{align}
    \widetilde \mu(Y_1,\dots,Y_n \mid \inp) = \mu(Y_1 \mid F_0(\inp)) \, \mu(Y_2\mid F_1(\inp,Y_1)) \cdots \mu(Y_n\mid F_{n-1}(\inp,Y_1,\ldots,Y_{n-1})) \;.
  \end{align}
\end{definition}
The above is a \emph{pseudo} joint distribution since the standard conditioning in the chain-rule is replaced with prompt functions of the conditioning variables.
In the following we focus on $\widetilde Q$ derived from the LLM and $\widetilde P$ derived from the ground truth.
\begin{remark}[Sampling from $\widetilde Q$]
  \label{rem:sampling}
  Note that sampling from $\widetilde Q$ can be simply done through a chain-rule-like procedure as can be seen from the above definition, that is, to have $(Y_1,\dots,Y_n) \sim \widetilde Q$ we draw $Y_1 \sim Q(\cdot \mid F_0(\inp))$, $Y_2 \sim Q(\cdot \mid F_1(\inp,Y_1))$, $Y_3 \sim Q(\cdot \mid F_2(\inp,Y_1,Y_2))$, and so on.
\end{remark}

In the rest of the paper we drop subscripts in joint distributions and conditioning on query $x$ (which is understood implicitly), for example,
$\widetilde P \equiv \widetilde P_{Y_1 \cdots Y_n \mid \inp}$.

To measure epistemic uncertainty, we need to quantify how far the estimated pseudo joint distribution $\tilde Q$ is from the ground truth $\tilde P$. One natural choice is the following definition:
\begin{definition}[Epistemic uncertainty metric]
Given an input $\inp \in \cX$, we say that the epistemic uncertainty of $\widetilde\LM$ is quantified by
$D_{\KL}(\widetilde\LM, \widetilde P)$.
\end{definition}
\begin{wrapfigure}[11]{r}{0.35\textwidth}  % Right side, 30% of text width
\vspace{-5mm}
       \includegraphics[width=0.35\textwidth]{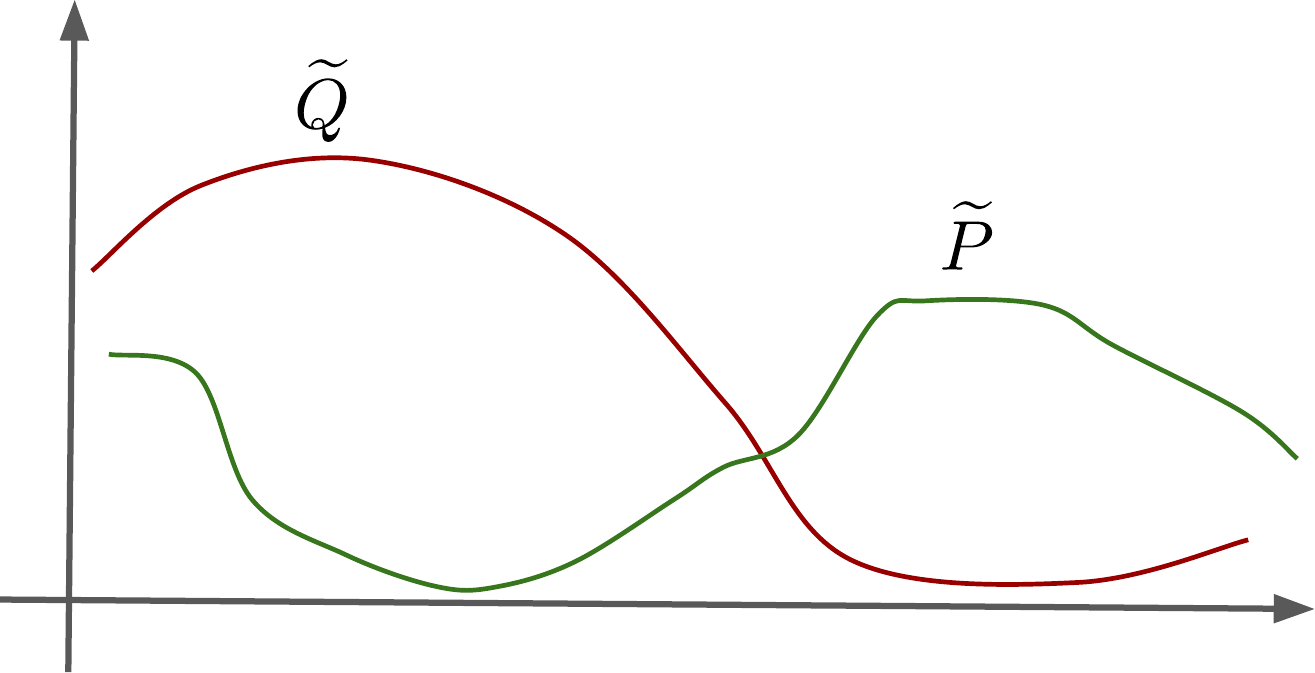} 
  \caption{
    A hallucination: $\widetilde Q$ places an excessive mass where the ground truth $\widetilde P$ has a low mass.
    }
\label{fig:hallucination}
\end{wrapfigure}
Here $D_{\KL}$ measures how well $\widetilde\LM$ approximates $\widetilde P$ for a given query $\inp$.
Namely, this metric determines if $\widetilde\LM$ assigns a large probability to an event which has a small probability under $\widetilde P$.
In case of LLMs, this means the LLM generates a sequence that is unlikely in the typical usage of the language.
In \Cref{fig:hallucination} we have a situation where $\widetilde P$ is a pseudo joint distribution derived from the ground-truth, and $\widetilde \LM$ suffers from a high hallucination rate. 
Given an input $\inp$, we want to estimate the above hallucination metric, but we only have access to $\widetilde\LM$,
and so computing it explicitly is impossible.
However, next we show that under \Cref{asm:indep} we can \emph{lower bound} $D_{\KL}(\widetilde\LM, \widetilde P)$
by a quantity which \emph{only depends} on $\widetilde\LM$ (the proof is given in \Cref{sec:omitted-proofs}).

\begin{theorem}
\label{thm:MI}
For all pseudo joint distributions $\widetilde P$ satisfying \Cref{asm:indep}, we have that 
\[
  D_{\KL}(\widetilde \LM , \widetilde P) \ge I(\widetilde \LM)~.
\]
\end{theorem}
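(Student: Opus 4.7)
The approach is to exploit that Assumption~\ref{asm:indep} forces the ground-truth pseudo joint distribution $\widetilde P$ to be a \emph{product distribution}, and then to use a standard decomposition showing that $\widetilde Q^{\otimes}$ is the KL-closest product distribution to $\widetilde Q$.

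First I would unfold the definition of $\widetilde P$ from \Cref{def:pseudo-joint} and substitute Assumption~\ref{asm:indep} into every factor. This gives
\[
  \widetilde P(Y_1,\ldots,Y_n \mid \inp) = \prod_{t=1}^n P(Y_t \mid F_{t-1}(\inp,Y_1,\ldots,Y_{t-1})) = \prod_{t=1}^n P(Y_t \mid \inp).
\]
In particular, $\widetilde P$ factorizes, so its marginals are $\widetilde P_t(\cdot) = P(\cdot \mid \inp)$ and we have the identity $\widetilde P = \widetilde P^{\otimes}$.

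Next I would insert $\widetilde Q^{\otimes}$ as an intermediate reference inside the log and split the KL:
\[
  D_{\KL}(\widetilde Q, \widetilde P)
  = \sum_{y} \widetilde Q(y) \ln \frac{\widetilde Q(y)}{\widetilde Q^{\otimes}(y)}
  + \sum_{y} \widetilde Q(y) \ln \frac{\widetilde Q^{\otimes}(y)}{\widetilde P(y)}.
\]
The first sum is exactly $I(\widetilde Q)$ by definition. For the second sum, because both $\widetilde Q^{\otimes}$ and $\widetilde P$ are products, the ratio factorizes coordinatewise, and marginalizing $\widetilde Q$ onto each coordinate yields
\[
  \sum_{y} \widetilde Q(y) \ln \frac{\widetilde Q^{\otimes}(y)}{\widetilde P(y)}
  = \sum_{t=1}^n \sum_{y_t} \widetilde Q_t(y_t) \ln \frac{\widetilde Q_t(y_t)}{P(y_t \mid \inp)}
  = \sum_{t=1}^n D_{\KL}\bigl(\widetilde Q_t,\, P(\cdot \mid \inp)\bigr) \; \geq \; 0
\]
by nonnegativity of KL divergence. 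Combining the two pieces gives $D_{\KL}(\widetilde Q, \widetilde P) \ge I(\widetilde Q)$.

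There is no real obstacle here: the whole argument is an application of the well-known fact that, among all product distributions on $\cX^n$, the product of marginals $\widetilde Q^{\otimes}$ is the unique KL minimizer of $D_{\KL}(\widetilde Q, \cdot)$, so the slack is exactly $D_{\KL}(\widetilde Q^{\otimes}, \widetilde P) \geq 0$. The only thing worth flagging is the mild subtlety of justifying that $\widetilde P$ is a valid reference (i.e., that $\widetilde Q$ is absolutely continuous with respect to $\widetilde P$, so that the KL is well-defined rather than vacuously $+\infty$); under the convention $a\ln(a/0)=\infty$ already adopted in \Cref{par:it} this case is automatic since both sides of the inequality are then $+\infty$ or the identity above holds termwise.
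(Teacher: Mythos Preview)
Your proof is correct and is essentially the same argument as the paper's: both use \Cref{asm:indep} to reduce $\widetilde P$ to the product $\prod_t P(\cdot\mid\inp)$ and then exploit that $\widetilde Q^{\otimes}$ is the KL-closest product to $\widetilde Q$. The paper phrases the last step as ``entropy is no larger than cross-entropy'' applied coordinatewise, which is exactly your $\sum_t D_{\KL}(\widetilde Q_t, P(\cdot\mid\inp))\ge 0$ written out.
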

The lower bound in the theorem holds uniformly for all $\widetilde P$, and it is computable solely based on $\tilde Q$. This makes the bound applicable for decision making; in fact we chose to consider  $D_{\KL}(\widetilde \LM, \widetilde P)$ as the measure of epistemic uncertainty (out of many similar distance measures) because it admits this property).

Also, note that
we have $I(\widetilde \LM) = D_{\KL}(\widetilde \LM, \widetilde \LM^{\otimes})$~,
  $\widetilde \LM^{\otimes} = \prod_i \sum_{y \deli} \widetilde Q(y_1, \ldots, y_{i-1}, Y_i, y_{i+1}, \ldots, y_n)$.
 In general $\sum_{y \deli} \widetilde Q(y_1, \ldots, y_{i-1}, Y_i, y_{i+1}, \ldots, y_n) \not = \widetilde Q(Y_i)$, because the independence assumption \Cref{asm:indep} does not necessarily (and, in practice, almost never) holds for $Q$.

Finally, a quantity related to $D_{\KL}(\widetilde \LM , \widetilde P)$ is
$D_{\KL}$ with arguments arranged in the opposite order, that is
$D_{\KL}(\widetilde P, \widetilde Q)$ which is a (query) conditional \emph{excess risk}
of the LLM-derived pseudo joint distribution $\widetilde Q$, under the logarithmic
loss.  Controlling the excess risk (for instance,
upper-bounding it) for various algorithms is one of the central questions in
learning theory, however it is a much harder task than the one we consider here, because for the former
we need to theoretically control all sources of errors (such as
generalization, estimation, and approximation error).

\subsection{A computable lower bound on epistemic uncertainty}
\Cref{thm:MI} gives a lower bound on the epistemic uncertainty by the mutual information.
However, to compute the mutual information term, in practice we need to evaluate
$\widetilde Q$ on its entire support, which is potentially infinite.
Practically speaking, it is impossible to observe probabilities of all strings
under the language model and so we must rely on a finite sample.
Therefore, we replace $\widetilde Q$ with an empirical distribution with a finite
support; in the following we show that the error induced by such an
approximation is controlled.
To estimate the MI we employ the method given in \Cref{alg:MI}; for generality it is
presented for an arbitrary (pseudo) joint distribution $\mu$,
but we keep in mind that our case of interest is $\mu = \widetilde Q$.
\begin{algorithm}[t]
\caption{MI estimator.}
\label{alg:MI}
\begin{algorithmic}[1]
  \STATE \textbf{Input}:\\
  \quad $\mu \in \cM_1(\cX^n)$ \dotfill \parbox{8cm}{ any (pseudo-) joint distribution over $\cX^n$ }\\[2mm]
  \quad $k \in \mathbb{N}$ \dotfill \parbox{8cm}{ sample size }\\[2mm]
  \quad $\gamma_1, \gamma_2 \geq 0 $ \dotfill \parbox{8cm}{ stabilization parameters (typically selected as $1/k$) }\\[2mm]
  \STATE Independently sample tuples $\Sample_1, \, \ldots, \, \Sample_k \sim \mu \in \cM_1(\cX^n)$
  \STATE Construct a set of indices of unique elements
  $U = \big\{i \in [k] ~:~ \Sample_i \neq \Sample_j \quad \forall j < i \big\}$
  \STATE Construct empirical distributions: for all $i \in U$,
  \begin{align*}
    \widehat \mu(\Sample_i) &= \frac{\mu(\Sample_i)}{\Normfactor}~, \quad \text{where} \quad \Normfactor = \sum_{j \in U} \mu(\Sample_j)\\
    \widehat \mu^{\otimes}(\Sample_i)
  &=
    \prod_{j=1}^n \sum_{t \in U : X_{t,j} = X_{i,j}} \hat \mu(X_{t,1}, \ldots, X_{i,j}, \ldots, X_{t,n})
  \end{align*}
  \STATE Compute estimate
  \begin{align*}
    \widehat I_k(\gamma_1, \gamma_2) = \sum_{i \in U} \widehat \mu(\Sample_i) \, \ln\pr{
    \frac{\widehat \mu(\Sample_i) + \gamma_1}{\widehat \mu^{\otimes}(\Sample_i) + \gamma_2}
    }
  \end{align*}
\end{algorithmic}
\end{algorithm}
% %
%
%\footnote{When the $\Sample_i$ are responses from language models (i.e., when we consider the responses $X_i=(Y_1,\ldots,Y_n)$), we usually consider the equality of $\Sample_i=\Sample_j$ semantically, as defined by some function comparator function aiming to compare too responses semantically (our choices for the experiments are described in \Cref{sec:experiments}).}
Note that most terms in the summations defining the product distribution $\widehat \mu^{\otimes}$ are zero (except the ones which correspond to the observed data). Adding $\gamma_1$ and $\gamma_2$ in the estimator $\widehat I_k(\gamma_1, \gamma_2)$ is intended to account for the
total probability of missing observations, not included while constructing
$\widehat \mu$ and $\widehat \mu^\otimes$, making sure the estimate is bounded. Similar ideas are well-know in probability and
information theory, such as in universal coding \citep{cesa2006prediction}, Laplace smoothing~\citep{polyanskiy2024information}
and Good-Turing smoothing~\citep{gale1995good,mcallester2000convergence}. In \Cref{sec:semantic-equivalence}, we show an extension of the algorithm that takes semantic equivalences into account, and in the experiments section, we will present a version of the algorithm that takes advantage of the available log-likelihood function in LLMs and constructs the empirical joint and product distributions in a slightly different way. As we will show in the experiments section, $n=2$ is sufficient to have an effective hallucination detection method for the benchmarks that we consider.

The bias introduced by $(\gamma_1, \gamma_2)$ in the last equation allows us to rigorously bound the error in estimating $I(\mu)$ via $\widehat I_k(\gamma_1, \gamma_2)$, which is explored next.
In particular, in \Cref{thm:emp-wav-MI1} we prove a high-probability lower bound on
$I(\mu)$ in terms of $\widehat I_k$.  The core of
controlling the estimation error is in accounting for the \emph{missing mass},
or in other words, how much of $\mu$ we miss out by
only observing a finite sample.  In \Cref{sec:missing_mass}, we present a more complete
discussion and the proof of the bound on the estimation error for mutual information.
Here we adapt this result to our particular case. 

Define the missing mass as
\begin{align*}
  U_k = \sum_{\sample \in \cX^n} \mu(\sample)
  \,
  \mathbb{I}\big\{\sample \not \in \{\Sample_1, \ldots, \Sample_k\} \big\}~.
\end{align*}

Using this quantity, we are ready to present a non-asymptotic bound on the estimation error, which depends on the estimator $\widehat I_k(\gamma_1, \gamma_2)$, the expected missing mass, and the sample size:%
\begin{theorem}
\label{thm:emp-wav-MI1}
Suppose that $\widehat I_k(\gamma_1, \gamma_2)$ is given by \Cref{alg:MI}, and assume that $\cX$ is finite.
For $\gamma_1 = 1/(k \, |\cX^n|)$, and $\gamma_2$ satisfying $\gamma_2 \geq \gamma_1 + n (1-Z)$,
with probability at least $1-\delta$, we have
  \begin{align*}
    &I(\mu)
    \geq
    (1 - \ve_k) \, \widehat I_k(\gamma_1, \gamma_2)
    -
    \pr{
    \frac{1}{k}
    +
    (1+
    n \, \ln \big( 1 + k \, |\cX|) \big) \, \ve_k
    }\\
    &\quad\text{where} \quad \ve_k = \E[U_k] + \sqrt{\frac{\ln(\frac{1}{\delta})}{k}}~.
  \end{align*}
  Furthermore, given $\delta_{\supp} \in [0, 1)$, let $\tilde \cX \subseteq \cX^n$ such that $\mu(\tilde \cX) \geq 1 - \delta_{\supp}$.
  Then, for $\gamma_1 = 1/(k \, |\tilde \cX|)$, and $\gamma_2$ satisfying $\gamma_2 \geq \gamma_1 + n (1-Z)$,
with probability at least $1-\delta$, we have
  \begin{align*}
    I(\mu)
    \geq
    (1 - \ve_k) \, \widehat I_k(\gamma_1, \gamma_2)
    -
    \pr{
    \frac{1}{k}
    +
    (1+ \ln \big( 1 + k \, |\tilde\cX|) \big) \pr{ \delta_{\supp} + \ve_k }
    }~.
  \end{align*}
\end{theorem}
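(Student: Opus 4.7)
My plan is to combine missing-mass concentration with a partition-inequality lower bound on $I(\mu)$ and careful accounting of the $\gamma$-smoothing bias.

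First, I would invoke a McAllester--Ortiz-style concentration for the total missing mass, yielding $U_k \leq \E[U_k] + \sqrt{\ln(1/\delta)/k} = \ve_k$ with probability at least $1-\delta$; all subsequent steps condition on this event (equivalently, $Z=1-U_k \geq 1-\ve_k$). Then, by the data-processing inequality for KL applied to the partition $\{\{x\}:x\in S\}\cup\{S^c\}$,
\begin{align*}
I(\mu) \;\geq\; \sum_{x\in S}\mu(x)\ln\frac{\mu(x)}{\mu^{\otimes}(x)} \;+\; U_k\ln\frac{U_k}{U_k^{\otimes}},
\end{align*}
with $U_k^{\otimes}=\mu^{\otimes}(S^c)$; the last term is lower bounded by $U_k-U_k^{\otimes}$ via $a\ln(a/b)\geq a-b$, producing only $O(\ve_k)$ loss.

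For the observed sum I substitute $\mu(x)=Z\,\widehat\mu(x)$ and $\mu^{\otimes}(x)=Z^{\otimes}\,\widehat\mu^{\otimes}(x)$ with $Z^{\otimes}=\prod_{i=1}^n Z_i^{\otimes}$, which produces a term proportional to the unsmoothed empirical MI $\sum_{x\in S}\widehat\mu(x)\ln(\widehat\mu(x)/\widehat\mu^{\otimes}(x))$ plus an additive correction $Z\ln(Z/Z^{\otimes})$; applying the same missing-mass concentration to each of the $n$ coordinate marginals shows that this correction is $O(n\ve_k)$, which is the source of the factor $n$ in the theorem. To then pass to the actual estimator $\widehat I_k(\gamma)$, I decompose
\begin{align*}
\ln\frac{\widehat\mu(x)+\gamma/Z}{\widehat\mu^{\otimes}(x)+\gamma/Z^{\otimes}}
\;=\; \ln\frac{\widehat\mu(x)}{\widehat\mu^{\otimes}(x)}
\;+\; \ln\frac{1+\gamma/(Z\widehat\mu(x))}{1+\gamma/(Z^{\otimes}\widehat\mu^{\otimes}(x))},
\end{align*}
and use the envelope $|\ln((\widehat\mu(x)+\gamma/Z)/(\widehat\mu^{\otimes}(x)+\gamma/Z^{\otimes}))|\leq \ln(1/\gamma)+O(1)$ together with $\widehat\mu(x)\geq 1/k$ and $|S|\leq k$; the choice $\gamma=1/(k|\cX^n|)$ then makes the net smoothing bias $O(1/k)$. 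Replacing the data-dependent factor $(1-U_k)$ by $(1-\ve_k)$ costs an additional $(\ve_k-U_k)\widehat I_k(\gamma)\leq \ve_k\cdot\ln(1/\gamma)=O(n\ve_k\ln(1+k|\cX|))$, which is precisely the second summand of the stated residual.

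The second bound follows by the identical argument with $\cX^n$ replaced by $\tilde\cX$: the extra mass $\mu(\tilde\cX^c)\leq\delta_{\supp}$ is absorbed just as $\ve_k$ is absorbed in the first part, and the smoothing parameter $1/(k|\tilde\cX|)$ again keeps the smoothing bias $O(1/k)$, while the smaller support replaces $n\ln(1+k|\cX|)$ by $\ln(1+k|\tilde\cX|)$. The main obstacle throughout is this final step: the log-ratio inside $\widehat I_k(\gamma)$ can be very large when $\widehat\mu^{\otimes}(x)$ is tiny, so the argument must simultaneously exploit the count-based bound $\widehat\mu(x)\geq 1/k$, the cap $\ln(1/\gamma)=O(\ln(k|\cX^n|))$ from the smoothing, and the multiplicative prefactor $(1-\ve_k)$, which together yield the exact $1/k + (1+n\ln(1+k|\cX|))\ve_k$ residual rather than something loose by a log factor.
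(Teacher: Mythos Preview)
Your approach has a genuine gap in the very first step. The data-processing lower bound gives
\[
I(\mu)\;\ge\;\sum_{x\in S}\mu(x)\ln\frac{\mu(x)}{\mu^{\otimes}(x)}\;+\;U_k\ln\frac{U_k}{U_k^{\otimes}},
\]
and you claim the last term costs only $O(\ve_k)$ via $a\ln(a/b)\ge a-b$. But $U_k^{\otimes}=\mu^{\otimes}(S^c)$ is \emph{not} controlled by $\ve_k$: the set $S$ of observed joint tuples has at most $k$ elements, so $\mu^{\otimes}(S)$ can be as small as $k/|\cX|^n$ and hence $U_k^{\otimes}$ can be $\Theta(1)$ even when $U_k$ is tiny. (Concretely, if $\mu$ is uniform on an $N$-point diagonal, $U_k^{\otimes}=1-|S|/N^n$.) Your bound then degrades to $I(\mu)\ge Z\widehat I_k(\gamma)-k\gamma-U_k^{\otimes}$, which is vacuous. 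The paper sidesteps this entirely: it \emph{starts} from $\widehat I_k(\gamma)$ and upper-bounds it by $I(\mu)$ plus errors, using the elementary inequality $q\ln\!\big((q+\gamma)/(p+\gamma)\big)\le q\ln(q/p)+\gamma$ for the ``observed'' part and the cap $\ln\!\big((\mu^{\otimes}(x)+\gamma)/(\mu(x)+\gamma)\big)\le\ln(1+1/\gamma)$ for the ``missing'' part. The role of $\gamma$ is precisely to bound that missing contribution by $(1-Z)\ln(1+1/\gamma)$; your unsmoothed partition step has no such cap.

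Two smaller errors: (i) the claim $\widehat\mu(x)\ge 1/k$ is false---$\widehat\mu$ in \Cref{alg:MI} is $\mu$ restricted to observed points and renormalized, not an empirical count distribution; (ii) the factor $n$ in the first bound does not come from concentrating $n$ marginals (which would also require a union bound you do not budget for), but simply from $\ln(1+1/\gamma)=\ln(1+k|\cX|^n)\le 1+n\ln(1+k|\cX|)$ when $\gamma=1/(k|\cX|^n)$. The paper handles $\ln(Z^{\otimes}/Z)$ with the crude bound $Z^{\otimes}\le 1$, giving $\ln(1/Z)$ and then $Z\ln(1/Z)\le 1-Z$---no marginal concentration is needed.
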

The theorem is a corollary of \Cref{thm:emp-wav-MI} shown in \Cref{sec:missing_mass}.
Note that in \Cref{thm:emp-wav-MI1} we consider two bounds. The first one is pessimistic in the sense that it does not expect that the samples carry much information about the support, and it is most suitable in situations where we expect $\mu$ to be spread out (uniformly) across its entire support.
The price of not having samples covering the whole support in this case is a factor $n \ln |\cX|$ appearing in the bound.
For example, in case of a language model with $10,000$ tokens, considering all possible strings of length $T$ tokens yields $n \ln |\cX| = n \, T \ln(10000)$, and so
\begin{align*}
  I(\mu)
    \geq
    (1 - \ve_k) \, \widehat I_k(\gamma_1, \gamma_2)
    -
    \pr{
    \frac{1}{k}
    +
    (1+
    n \, T \ln \big( 1 + k \, \ln(10000)) \big) \, \ve_k
    }~.
\end{align*}
Arguably, in practice, such situations are rare, as in natural languages we will not encounter all possible strings.
To this end, we consider an optimistic scenario where the \emph{effective} support of $\mu$, denoted by $\tilde \cX$, is small with high probability.
In this case, we can replace the size of the support for strings of length $n$, $|\cX|^n$, in the first bound with the effective support size $|\tilde\cX|$, and we only pay essentially a factor $\ln(1+k|\tilde \cX|)$ instead of $n \ln(1+k|\cX|)$. In case the effective sample size is only polynomial in $n$, this leads to an exponential reduction in $n$ for the second term in the bounds.
In fact, in \Cref{sec:missing-mass-data-dependent} we demonstrate some empirical evidence that on two question-answering benchmarks, $|\tilde \cX|$ rarely exceeds $\approx 100$ with $\mu(\tilde \cX) \geq 0.95$,
while sampling responses from an LLM given a query.

Next we consider sufficient conditions for the estimator to converge to the mutual information.
In particular, using the first bound in the theorem, we have (hiding logarithmic factors)
\begin{align*}
  I(\mu) = \tilde{\Omega}\Big( (1-\E[U_k]) \, \widehat I_k(\gamma_1, \gamma_2) - \E[U_k] \Big) \qquad k \to \infty~.
\end{align*}
This tells us that the rate of estimation error is essentially controlled by the
expected missing mass $\E[U_k]$, which, as we will see, converges to zero as $k \to \infty$, however the decay can be very slow in general.
For example, it is known that for a finite support of size $N$,
$\E[U_k] \leq e^{-\frac{k}{N}}$ when $k \leq N$ and $\E[U_k] \leq N / (e \, k)$ otherwise~\citep{berend2012missing}.
For countable distributions with entropy bounded by $h$, one has $\E[U_k] \leq h / \ln(k)$~\citep{berend2017expected}.%
\footnote{Note that expected missing mass $\E[U_k]$ appearing here is related to the well-known Good-Turing estimator.  Let $M$ be
the number of elements among $\Sample_1, \ldots, \Sample_k$ which appear exactly
once.  Then, the Good-Turing estimator is defined as $U^{\text{GT}}_k = M / k$.
An attractive property of the Good-Turing estimator is that it is unbiased in
the sense that $\E[U^{\text{GT}}_k] - \E[U_k] = \E[U_k^{(1)}]/k$ where the random variable
$U_k^{(1)}$ is the cumulative probability of the sequences appearing exactly once in the data.
Although we do not directly work with the Good-Turing
estimator in this paper, its convergence properties can be analyzed 
using a technique similar to the one we employ here~\citep{berend2012missing}.}

Despite these pessimistic bounds, in reality we expect the expected missing
mass to be significantly smaller, especially when $\mu$ is heavy-tailed.
It is well-known that natural languages (and many artificial ones) follow a \emph{Zipf} distribution, where probability of each word (or a text piece) is proportional to $1/\mathrm{freq(text)}^{\alpha}$ for some exponent $\alpha > 1$, where $\mathrm{freq}()$ is a frequency of occurrence in the corpus~\citep{piantadosi2014zipf}.
Then, we expect that $\E[U_k]$ should be much smaller than in such a case, since
sampling from the tail of Zipf distribution is a rare event.
To this end, in \Cref{sec:missing_mass} we show that if $\widetilde Q$ is Zipf with exponent $\alpha > 1$, then for any free parameter $\beta > 0$,
  \begin{align*}
    \E[U_k] = \mathcal{O}\Big(k^{-(\frac{\alpha-1}{\alpha} - \beta)} \Big)~.
  \end{align*}
  Hence, the rate at which the expected missing mass vanishes can be very fast
  (potentially matching a concentration rate $1/\sqrt{k}$ for $\alpha=2$).

Finally in \Cref{sec:missing-mass-data-dependent} we present a data-dependent estimation of $\E[U_k]$ based on a concentration inequality for a missing mass and repetitive sampling from LLM, in the context of some Q/A datasets.
We conclude that the expected missing mass is very small: Most of the upper bounds on the expected missing mass (we have one upper bound per question) are highly concentrated close to $0$.

\subsection{Taking semantic equivalences into account}
\label{sec:semantic-equivalence}

Although \Cref{thm:MI} and \Cref{thm:emp-wav-MI1} provide a lower bound for the divergence between the LLM distribution $\LM$ and the ground-truth $P$, the bound might be loose as it ignores the semantic equivalences between texts. Given a semantic equivalence definition, we propose constructing new ground-truth $P'$ and LLM distribution $\LM'$, where the probability of a cluster is the sum of probabilities of all semantically equivalent texts in that cluster. We use a similarity function $s$ to define semantic equivalences: two texts are considered equivalent if their similarity is greater than a given threshold $\tau$. Our choices for similarity functions in the experiments are described in \Cref{sec:experiments}. We assume the similarity function and the similarity threshold induce a clustering of the space $\cX$, i.e.  $s(Y,Y') \ge \tau$ if and only if they are in the same cluster.

In practice, rather than constructing the aforementioned distribution $\LM'$ explicitly, we can draw samples from $\LM'$ by sampling from $\LM$ and aggregating samples according to their clusters. The modified uncertainty estimating algorithm is shown in \Cref{alg:MI-se}. 
\begin{algorithm}[t]
\caption{MI estimator. Python implementation with usage example is given in \Cref{sec:listing}.}
\label{alg:MI-se}
\begin{algorithmic}[1]
  \STATE \textbf{Input}:\\
  \quad $\mu \in \cM_1(\cX^n)$ \dotfill \parbox{8cm}{ any (pseudo-) joint distribution over $\cX^n$ }\\[2mm]
  \quad $k \in \mathbb{N}$ \dotfill \parbox{8cm}{ sample size }\\[2mm]
  \quad $\gamma_1, \gamma_2 \geq 0 $ \dotfill \parbox{8cm}{ stabilization parameters (typically selected as $1/k$) }\\[2mm]
  \quad $s:\cX^n\times \cX^n \rightarrow \real$ \dotfill \parbox{8cm}{ a similarity function }\\[2mm]
  \quad $\tau\in\real$ \dotfill \parbox{8cm}{ a similarity threshold }\\[2mm]
  \STATE Independently sample tuples $\Sample_1, \, \ldots, \, \Sample_k \sim \mu \in \cM_1(\cX^n)$
  \STATE Construct a set of indices of unique elements
  $U = \big\{i \in [k] ~:~ \Sample_i \neq \Sample_j \quad \forall j < i \big\}$
  \STATE Construct cluster centers $S\subset U$ according to the similarity function: for all $i,t\in S$, we have $s(\Sample_i, \Sample_t) < \tau$ and cluster associated with $\Sample_i$ is $D(i) = \big\{j\in U~:~ s(\Sample_i, \Sample_j) \ge \tau \big\}$. Aggregated probabilities: for all $i\in S$,
  \begin{align*}
      \mu'(\Sample_i) = \sum_{j\in D(i)} \mu(\Sample_j)
  \end{align*}
  \STATE Construct empirical distributions: for all $i \in S$,
  \begin{align*}
    \widehat \mu(\Sample_i) &= \frac{\mu'(\Sample_i)}{\Normfactor}~, \quad \text{where} \quad \Normfactor = \sum_{j \in S} \mu'(\Sample_j)\\
    \widehat \mu^{\otimes}(\Sample_i)
  &=
    \prod_{j=1}^n \sum_{t \in S : X_{t,j} = X_{i,j}} \hat \mu(X_{t,1}, \ldots, X_{i,j}, \ldots, X_{t,n})
  \end{align*}
  \STATE Compute estimate
  \begin{align*}
    \widehat I_k(\gamma_1, \gamma_2) = \sum_{i \in S} \widehat \mu(\Sample_i) \, \ln\pr{
    \frac{\widehat \mu(\Sample_i) + \gamma_1}{\widehat \mu^{\otimes}(\Sample_i) + \gamma_2}
    }
  \end{align*}
\end{algorithmic}
\end{algorithm}
% %
%
The estimator is constructed using only (semantically) equivalent elements in the sample (the indices of these representative elements are collected in $S$), that is, we do not account for duplicate samples and we aggregate probabilities of samples that are lexically different but semantically equivalent. \Cref{alg:MI-se} works with the \emph{aggregated} probability distribution $\mu'=\widetilde{\LM}'$
(line 4) by summing over cumulative probabilities over clusters. Note that
$D_{\KL}(\mu) \geq D_{\KL}(\mu')$ by monotonicity property of KL-divergence
\citep[Theorem 2.16]{polyanskiy2024information} (this is because $\mu'$ is
defined on a smaller support). Therefore, \Cref{thm:emp-wav-MI1} implicitly
gives a bound on $I(\mu')$, and eventually we have $I(\mu) \geq I(\mu')$. But we can also directly apply \Cref{thm:MI} and \Cref{thm:emp-wav-MI1} to distributions $P'$ and $\LM'$ and obtain that $D_{\KL}(\widetilde{\LM}', \widetilde P') \geq I(\widetilde{\LM}')$.

\section{Score-based hallucination tests}
\label{sec:hallucination-tests}
Let $\widehat I_k(\gamma, \inp) \equiv \widehat I_k(\gamma)$ computed as in \Cref{alg:MI} for $\mu = \widetilde Q$,
to emphasize the explicit dependence on the query $\inp$.
The uncertainty estimate $\widehat I_k(\gamma,\inp)$ derived above can be used as a score indicating the strength of our belief that the LLM hallucinates for the given query $\inp$.
Such a score can then be used to design \emph{abstention} policies: if the response is deemed to be hallucinated, the system abstains from responding, while a response is provided otherwise. 
Score-based abstention methods usually compute a
score chosen by the user (such as the response likelihood or the estimator $\widehat I(\gamma)$ discussed earlier),
and declare hallucination if the score is above or below a
threshold, which is determined through calibration.
To detect hallucinations successfully, the threshold can be adjusted through \emph{calibration} on a given task using a hold-out (ground-truth) sample, see, for instance, the paper of \citet{conformal-abstention-2024} where this calibration is discussed in detail.
Given our estimated lower bound on the epistemic uncertainty, we can define an \emph{abstention policy} (a policy which decides when the LLM should abstain from prediction) as
\[
a_{\lambda}(\inp) = 
\begin{cases}
0, & \text{ if }\  \widehat I_k(\gamma_1, \gamma_2, \inp) < \lambda; \\ 
1, & \text{ if }\  \widehat I_k(\gamma_1, \gamma_2, \inp) \ge \lambda;
\end{cases}
\]
where $\lambda > 0$ is a threshold parameter tuned on a hold-out sample of some particular task.
This policy abstains ($a_{\lambda}(\inp) = 1$) when the epistemic uncertainty in the prediction (response) is large. 
When the policy does not abstain ($a_{\lambda}(\inp) = 0$), any prediction from $\widehat Q$ can be served. 

In the experiments, we compare a number of scoring functions for detecting hallucinations, including $\widehat I(\gamma)$, the probability of the greedy (temperature zero) response, and an estimate of the entropy of the response distribution.  
%

%%%%%%%%%%%%%%%%%%%%%%%%%%%%%%%%%%%%%%%%%%%%%%%%%%%%%%
\section{Experiments}
\label{sec:experiments}

In this section we evaluate our abstention method derived based on the MI estimate in \Cref{sec:hallucination-tests} on a variety of closed-book open-domain question-answering tasks.

\textbf{Language model.} We used a Gemini 1.0 Pro model \citep{geminiteam2023gemini} to generate outputs and scores.

\textbf{Datasets.} We consider three different datasets and their combinations: As base datasets, we consider \emph{(i)} a random subset of $50,000$ datapoints from the TriviaQA dataset~\citep{joshi2017triviaqa}, and \emph{(ii)} the entire AmbigQA dataset (with $12038$ datapoints)~\citep{min2020ambigqa}. These datasets mostly contain single-label queries, and only contain a few multi-label ones.\footnote{Note that the multi-label queries in these datasets typically behave as single-label ones in the sense that the LLM assigns overwhelming probability to a dominant response.}  
Moreover, we created a multi-label dataset based on the WordNet dataset~\citep{fellbaum1998wordnet}: We extracted all (6015) datapoints from WordNet at depth $4$ or more of the {\txtfont physical\_entity} subtree. For each datapoint {\txtfont (entity, children)} in WordNet, we constructed a query of the form \emph{``Name a type of {\txtfont entity}.''} and {\txtfont children} are considered target labels. 

\textbf{Comparison of responses and computing the output distributions.} We use the F1 score%
\footnote{
In this context, the F1 score is calculated based on token inclusion \citep{joshi2017triviaqa, devlin2019bert}: for two sequences $a=(a_1,\ldots,a_n)$ and $b=(b_1,\ldots,b_m)$, defining $p=|a \cap b|/n$ and $r=|a \cap b|/m$ (where $|a \cap b|$ is the size of the intersection of $a$ and $b$, in which for repetitions of an element $y$, we consider the minimum number of repetitions in $a$ and $b$, i.e., $\min_{c\in \{a,b\}} |\{i\,:\, c_i=y\}|$, in calculating the size of the intersection) 
we define $F1 = 2pr/(p+r)$. 
Relating to the standard definition of the F1 score, $p$ and $r$ play the role of precision and recall, respectively, if $a$ is thought of as a prediction of $b$.} thresholded at $0.25$ to decide if two text sequences match. When multiple responses are sampled, we approximate the output distribution of an LLM in a semantically meaningful way by collapsing matching responses into a single response: we sample $k=10$ responses at temperature $0.9$ for each query, and after eliminating repetitions, all those that match (according to the F1 score) are considered identical and their probabilities are aggregated. 
We only consider queries for which the greedy (temperature zero) and at least one of the random responses are shorter than $20$ characters.
This is because the F1 score (as a match function) and log-probabilities (as a measure of uncertainty) are less reliable for longer sequences. After this filtering, we are left with $38870$ datapoints for TriviaQA, $5315$ datapoints for AmbigQA, and $3296$ datapoints for WordNet. 

As shown in prior works (e.g. \citet{KuhnARXIV2023, conformal-abstention-2024}), we can use LLM self-prompting to obtain more reliable text comparisons specially for longer outputs. Such an approach however is computationally much more expensive.   

\begin{algorithm}[t]
\caption{Alternative MI estimator. A usage example is given in \Cref{app:alg-MI2}}
\label{alg:MI2}
\begin{algorithmic}[1]
  \STATE \textbf{Input}:\\
  \quad $\mu \in \cM_1(\cX)$ \dotfill \parbox{8cm}{ any distribution over $\cX$ }\\[2mm]
  \quad $k \in \mathbb{N}$ \dotfill \parbox{8cm}{ sample size }\\[2mm]
  \quad $\gamma_1, \gamma_2 \geq 0 $ \dotfill \parbox{8cm}{ stabilization parameters (typically selected as $1/k$) }\\[2mm]
  \quad $s:\cX\times \cX \rightarrow \real$ \dotfill \parbox{8cm}{ a similarity function }\\[2mm]
  \quad $\tau\in\real$ \dotfill \parbox{8cm}{ a similarity threshold }\\[2mm]
  \STATE Independently sample outputs $\Sample_1, \, \ldots, \, \Sample_k \sim \mu \in \cM_1(\cX)$
  \STATE Construct a set of indices of unique elements
  $U = \big\{i \in [k] ~:~ \Sample_i \neq \Sample_j \quad \forall j < i \big\}$
  \STATE Construct cluster centers $S\subset U$ according to the similarity function: for all $i,t\in S$, we have $s(\Sample_i, \Sample_t) < \tau$ and cluster associated with $\Sample_i$ is $D(i) = \big\{j\in U~:~ s(\Sample_i, \Sample_j) \ge \tau \big\}$. Aggregated probabilities: for all $i,t\in S$,
  \begin{align*}
      \mu'_1(\Sample_i) = \sum_{j\in D(i)} \mu(\Sample_j), \qquad \mu'_2(\Sample_t\,|\,\Sample_i) = \sum_{j\in D(t)} \mu(\Sample_j\,|\,\Sample_i) 
  \end{align*}
  \STATE Construct empirical distributions: for all $i,t\in S$,
  \begin{align*}
    \widehat \mu_1(\Sample_i) &= \frac{\mu'_1(\Sample_i)}{\Normfactor}~, \quad \text{where} \quad \Normfactor = \sum_{j \in S} \mu'_1(\Sample_j)\\
    \widehat \mu_2(\Sample_t\,|\, \Sample_i) &= \frac{\mu'_2(\Sample_t\,|\, \Sample_i)}{\Normfactor_i}~, \quad \text{where} \quad \Normfactor_i = \sum_{j \in S} \mu'_2(\Sample_j\,|\, \Sample_i)\\
    \widehat \mu(\Sample_i, \Sample_t) &= \widehat \mu_1(\Sample_i) \widehat \mu_2(\Sample_t\,|\, \Sample_i)\,,\qquad \widehat \mu^{\otimes}(\Sample_i, \Sample_t) = \widehat \mu_1(\Sample_i) \sum_{j\in S} \widehat \mu_1(\Sample_j) \widehat \mu_2(\Sample_t\,|\, \Sample_j)
  \end{align*}
  \STATE Compute estimate
  \begin{align*}
    \widehat I_k(\gamma_1, \gamma_2) = \sum_{i,t \in S} \widehat \mu(\Sample_i, \Sample_t) \, \ln\pr{
    \frac{\widehat \mu(\Sample_i, \Sample_t) + \gamma_1}{\widehat \mu^{\otimes}(\Sample_i, \Sample_t) + \gamma_2}
    }
  \end{align*}
\end{algorithmic}
\end{algorithm}
% %
%

\textbf{Baselines.} We consider abstention policies based on four scoring methods. The first three are as follows: \emph{(i)} the probability of the greedy response (denoted by $T0$); \emph{(ii)} the semantic-entropy method of \citet{KuhnARXIV2023} whose score is the entropy of $k=10$ generated samples (denoted by S.E.). To calculate entropy, we first aggregate probabilities of equivalent responses and normalize the probabilities so that they sum to 1 (as described above); and \emph{(iii)} our proposed mutual information score as defined in \Cref{sec:epistemic} (and denoted by M.I.) with the choices of $k=10$, $n=2$, and $\gamma_1=\gamma_2=0$ (the latter choice approximates the case that the number of potential responses can be very large in which case the theoretical choice of $\gamma_1$ and $\gamma_2$ would be very small). To calculate the mutual information, as shown in \Cref{alg:MI2}, we first generate $k=10$ random samples. Then for any response $Y$, we calculate the probability of all generated responses given the prompt $F_1(x,Y)$. We construct estimates $\widehat Q(Y)$ and $\widehat Q(Y'|Y)$ by aggregating probabilities of equivalent responses, and normalizing the probabilities so that they sum to 1. 

The calculation of the mutual information is slightly different than the algorithms presented in \Cref{alg:MI} and \Cref{alg:MI-se} and takes advantage of the available log-likelihood function in LLMs. Notice that the input $\mu$ in \Cref{alg:MI2} is LLM distribution $\LM$ as opposed to being the pseudo joint distribution $\widetilde Q$ in \Cref{alg:MI}. Another difference is that the similarity function $s$ now takes two texts as input (as opposed to taking two $n$-dimensional arrays of texts as inputs in \Cref{alg:MI-se}). As explained earlier, we use the F1 score as the similarity function and we use $\tau=0.25$ as the similarity threshold. 

Each baseline also has a default choice which is taken when the relevant score is above a threshold, and hence the method does not abstain. For $T0$, the default choice is the greedy (temperature zero) response. For S.E., the default choice is the response with the highest (aggregate) probability among the generated random responses. 
For the M.I. method, the default choice is the sampled response with the highest probability according to the marginalized pseudo joint distribution.    

We also consider a version of the self-verification method of \citet{KCAHD2022} (denoted by S.V.) that, for a query $x$, first finds $Y_1$, the element with the largest (aggregated) probability (which is the default choice of S.E. method), and then calculates the probability of token \emph{``True''} (normalized for the two tokens \emph{``True''} and \emph{``False''}) for the following query: \emph{``Consider the following question: Q: $x$. One answer to question Q is $Y_1$. Is the above answer to question Q correct? Answer True or False. A:''}. The default choice of this baseline is the same as the default choice of the S.E. method. By this design, our intention is to construct a score that (unlike the first-order scores\footnote{The scores $T0$ and S.E. are first order because they only consider the marginal distribution of a single response, unlike our uncertainty score which is based on MI estimation by considering (pseudo) joint distributions over multiple responses.} we consider) is not sensitive to the size of the label set.

In our experiments we either sweep through all abstention thresholds (\Cref{fig:PR}), or optimize the threshold on some calibration data, as explained in the description of the relevant experiment (\Cref{fig:RE}).

\textbf{Results.} 
We consider the precision-recall (PR) trade-off for the various methods on the different datasets. Here, \emph{recall} is the percentage of queries where the method does not abstain, and \emph{precision} is the percentage of correct decisions among these queries.\footnote{In some figures, for better illustration, we show the \emph{error rate} which is one minus the precision.}
\Cref{fig:PR}ab show PR-curves for the baselines and the proposed method on TriviaQA and AmbigQA. As can be seen, our method is better than the $T0$ and S.V.\ baselines, but performs similarly to the S.E. method. This is because the TriviaQA and AmbigQA datasets contain mostly single-label queries, and therefore a first-order method such as S.E.\ is sufficient to detect hallucinations. The AmbigQA dataset contains a few multi-label queries, but upon closer inspection, we observe that the LLM has low entropy on most of these queries.\footnote{Such a case can also be seen in the query \emph{``Name a city in the UK.''} in \Cref{fig:multi-label-no-hallucination} where the response \emph{``London''} has probability $0.958$.} Therefore, a first-order method can perform as well as our method on such queries. Our proposed method, as well as the baselines, make no mistakes on the WordNet dataset (as the prediction of the LLM is always correct), 
hence we omit those results. The S.V. baseline performs significantly worse than the other methods when the recall is not high (is below about 0.8).  

\begin{figure}[t]
  \begin{subfigure}{0.24\textwidth}
\includegraphics[width=\textwidth]{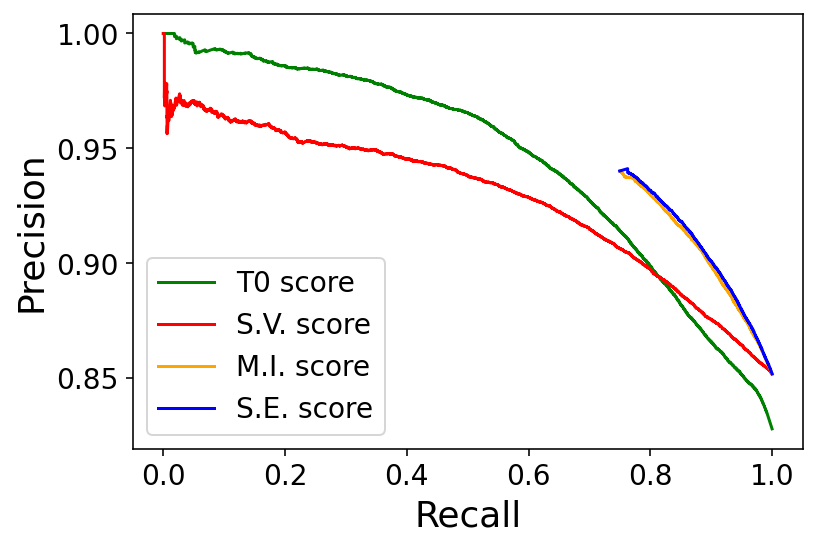}
    \caption{TriviaQA} \label{fig:PRa}
  \end{subfigure}%
  \hspace*{\fill}   % maximize separation between the subfigures
  \begin{subfigure}{0.24\textwidth}
       \includegraphics[width=\textwidth]{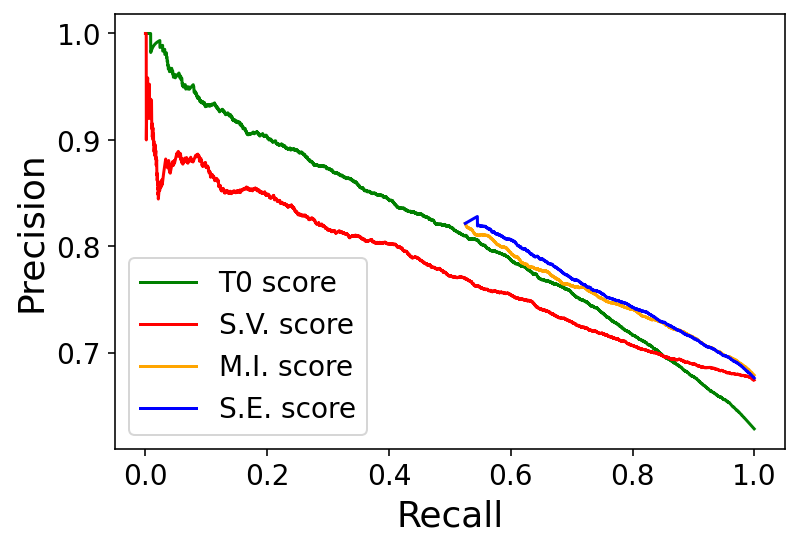}
    \caption{AmbigQA} \label{fig:PRb}
  \end{subfigure}%
  \hspace*{\fill}   % maximizeseparation between the subfigures
  \begin{subfigure}{0.24\textwidth}
       \includegraphics[width=\textwidth]{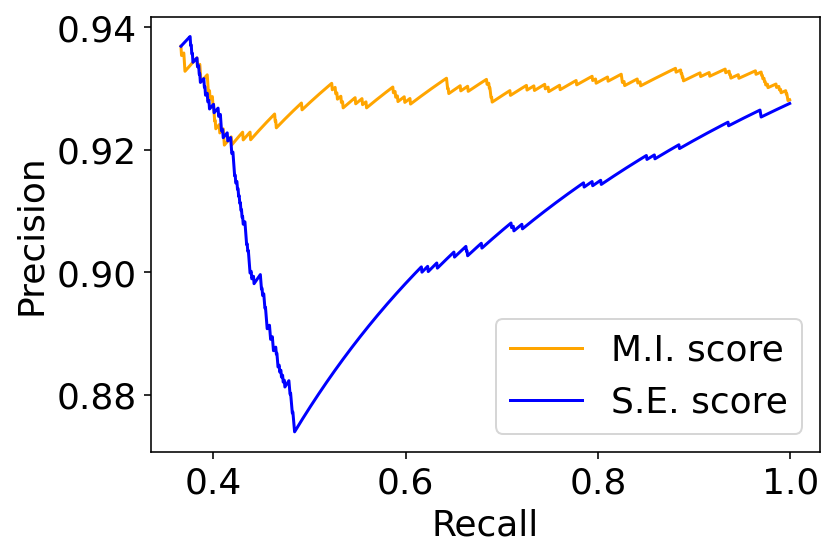} 
    \caption{TriviaQA+WordNet} \label{fig:PRc}
  \end{subfigure}
  \begin{subfigure}{0.24\textwidth}
       \includegraphics[width=\textwidth]{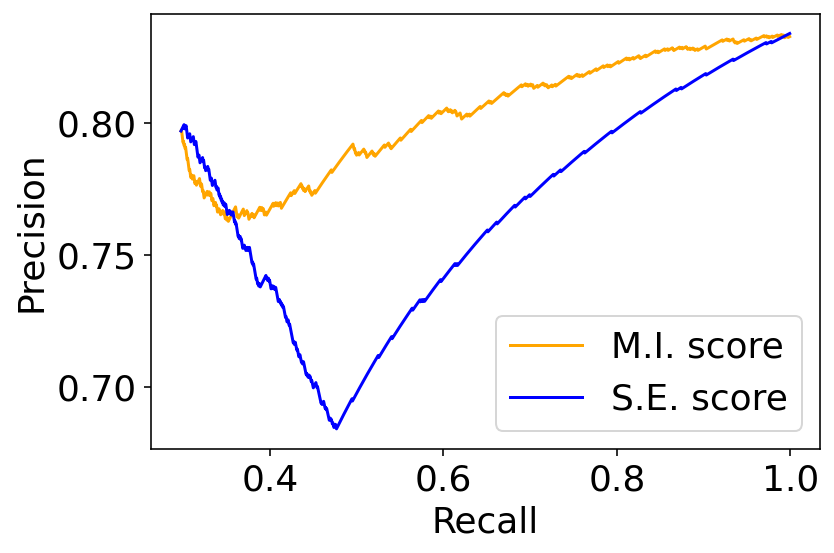}
    \caption{AmbigQA+WordNet} \label{fig:PRd}
  \end{subfigure}
\caption{PR-curve for the baseline and the proposed methods on various datasets. On the TriviaQA and AmbigQA datasets, M.I.\ and S.E.\ perform nearly identically, but they outperform the $T0$ and S.V.\ baselines. For the S.E.\ and M.I.\ methods, the responses for a large number of queries can be clustered into a single group, and therefore the semantic entropy and mutual information scores are zero. This is why the starting point of their curves is at a higher recall values. On the TriviaQA+WordNet and AmbigQA+WordNet datasets with a significant number of high entropy multi-label queries, M.I.\ outperforms the S.E.\ baseline. The methods perform nearly identical on the recall area that is not shown. } 
\label{fig:PR}
\end{figure}

The similar performance for the S.E.\ and M.I.\ methods shown in \Cref{fig:PR}ab is due to the fact that the LLM has low entropy on most multi-label queries. However, ideally, an LLM should have higher entropy on multi-label queries (which would demonstrate broader knowledge, not focusing on a single possible answer). To include such queries, we mix the TriviaQA and AmbigQA datasets with our WordNet-based dataset with ``truely'' multi-label queries as constructed above. 
To enhance the intended effect, we filter our WordNet dataset by keeping only queries with entropy higher than $0.7$ (approximately the entropy of the uniform distribution over two atoms). Then we have $842$ remaining datapoints in WordNet.
Note that when considered in isolation, both our proposed method and the semantic entropy method rarely make mistakes on this dataset. 
Then we create two new datasets by combining our $842$ WordNet datapoints with $842$ randomly selected datapoints from TriviaQA and AmbigQA, respectively, resulting in the TriviaQA+WordNet and AmbigQA+WordNet datasets.
\Cref{fig:PR}cd show PR-curves for the S.E.\ and M.I.\ methods on these two combined datasets. Apart from low recall values, the performance of the S.E. method degrades noticeably with the addition of extra multi-label data. This precision/recall curve might look somewhat strange (with precision sometimes increasing with recall); this is due to the fact that both methods are always correct on the large number of high-entropy WordNet queries, where the LLM's default predictions are correct.

The hardness with the combined datasets is that the predominantly single-label datasets (TriviaQA, AmbigQA) might need a different calibration threshold than the multi-label WordNet dataset, and this is better handled by our proposed method than by S.E.
To better illustrate the improved abstention properties of our method, we examine how the two methods handle when the output of the LLM is diverse (i.e., has high entropy). In order to do this, we perform the following experiment: We create a calibration dataset by adding $500$ random datapoints from the WordNet dataset to $500$ random datapoints from TriviaQA, and another such random dataset for test. We determine the abstention thresholds on the calibration dataset for both the S.E.\ and the M.E.\ methods,\footnote{This is done by fixing the target loss rates of 0.05 for TriviaQA and 0.15 for AmbigQA, and finding threshold parameters that lead to these rates on the calibration set.}
and measure the performance (error rate, i.e., 1 minus precision, and recall) of the resulting abstention policies on the test set. We repeat this process $10$ times and report mean values and 95\% confidence intervals with Gaussian approximation. We perform a similar evaluation process for mixtures of AmbigQA and WordNet datasets. \Cref{fig:RE} show that while the S.E. method has similar recall and error rates to those of the proposed method on low-entropy queries, its recall values are much lower for queries with higher entropy, while the M.E. method makes only few mistakes on these queries.

\begin{figure}[t]
  \begin{subfigure}{0.24\textwidth}
       \includegraphics[width=\textwidth]{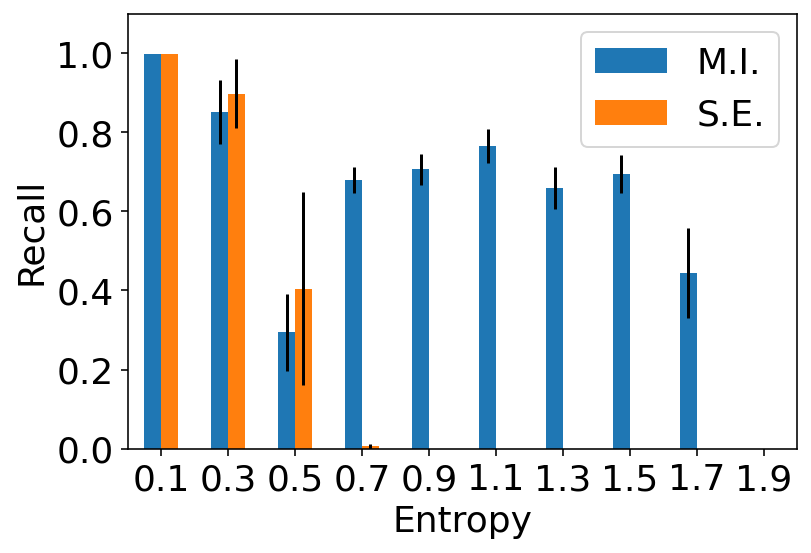}
    \caption{TriviaQA+WordNet} \label{fig:REa}
  \end{subfigure}%
  \hspace*{\fill}   % maximize separation between the subfigures
  \begin{subfigure}{0.24\textwidth}
       \includegraphics[width=\textwidth]{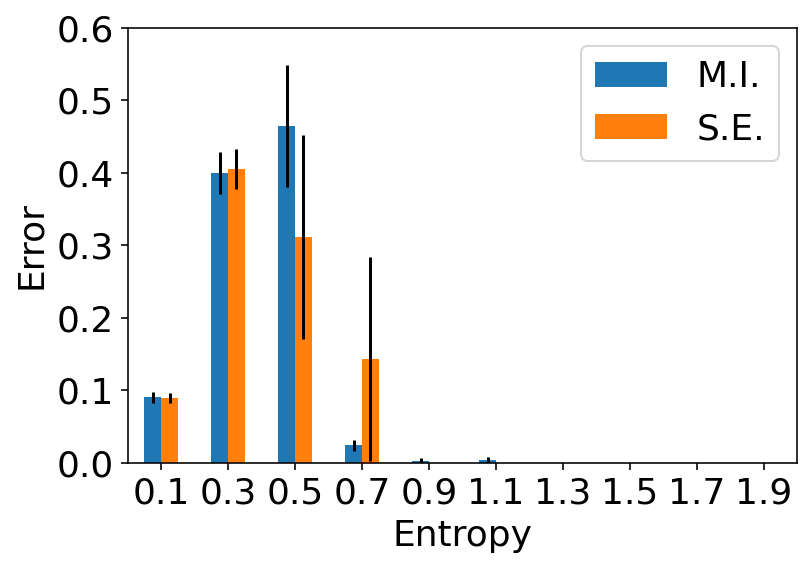}     
       \caption{TriviaQA+WordNet} \label{fig:REb}
  \end{subfigure}%
  \hspace*{\fill}   % maximizeseparation between the subfigures
  \begin{subfigure}{0.24\textwidth}
       \includegraphics[width=\textwidth]{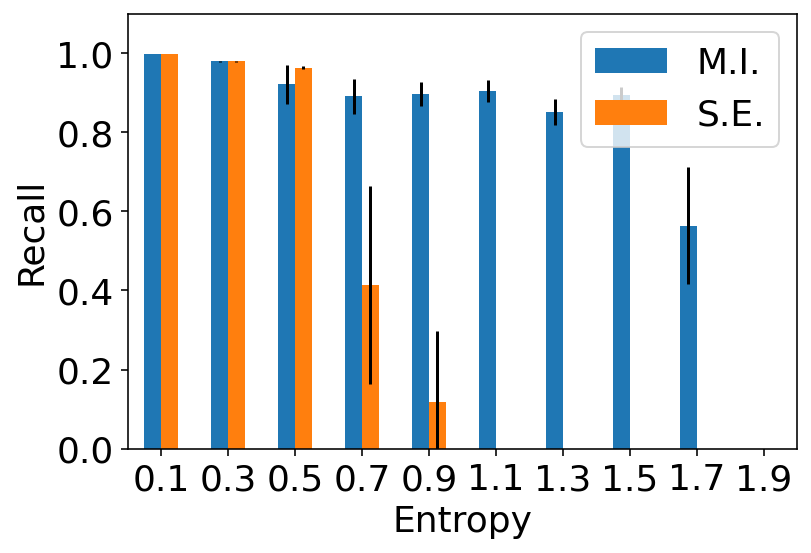}
    \caption{AmbigQA+WordNet} \label{fig:REc}
  \end{subfigure}
  \begin{subfigure}{0.24\textwidth}
       \includegraphics[width=\textwidth]{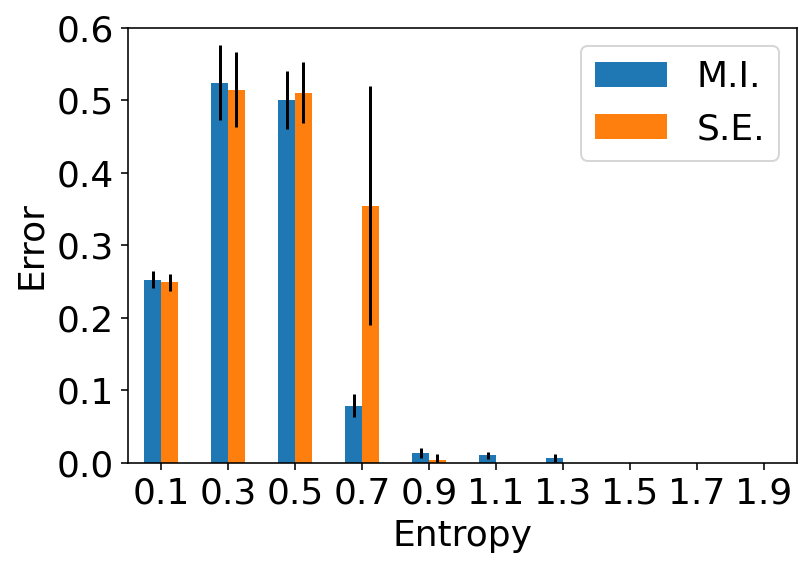}     
       \caption{AmbigQA+WordNet} \label{fig:REd}
  \end{subfigure}
\caption{Recall and error rates (one minus precision: percentage of mistakes when not abstaining) of the proposed and the baseline method on TriviaQA+WordNet and AmbigQA+WordNet datasets. On TriviaQA+WordNet and AmbigQA+WordNet datasets, the methods are calibrated at target loss of $0.05$ and $0.15$, respectively. On the x-axis, the queries are partitioned according to the entropy of the LLM's output. Error bars show 2 standard deviation confidence intervals (based on 10 repetitions). While the first-order S.E.\ method has similar recall and error rates to those of the proposed M.E.\ method on low-entropy queries, its recall values are nearly zero for queries with higher entropy.} 
\label{fig:RE}
\end{figure}

\section{Conclusions}
In this paper we considered \emph{epistemic} uncertainty as a proxy for the truthfulness of LLMs. We proposed a mutual-information-based uncertainty estimator that admits a provable lower bound on the epistemic uncertainty of the LLM's response to a query. That we consider joint distributions of multiple answers allows us to disentangle epistemic and aleatoric uncertainty, which makes it possible to better detect hallucination than first order methods, which can only tackle uncertainty as a whole, not epistemic uncertainty alone. This approach yielded an abstention method that performs significantly better on mixed single-label/multi-label datasets than first-order methods. While earlier methods for classification that aim to quantify epistemic uncertainty are usually based on a modified training method using response-tuples, utilizing the sequential nature of LLMs, our method does not need to change the training procedure, but needs to prompt the model iteratively with multiple responses generated by the LLM for the same query.

%\clearpage

\bibliography{refs}

\clearpage

\appendix

\section{Implementation and usage examples of \Cref{alg:MI} and \Cref{alg:MI-se}}
\label{sec:listing}
\lstset{
language=Python,
upquote=true,
columns=fullflexible,
keywordstyle=\bfseries\color{blue},
stringstyle=\color{olive},
commentstyle=\color{gray},
basicstyle=\scriptsize\ttfamily,
literate={*}{{\char42}}1
         {-}{{\char45}}1
         {\ }{{ }}1
       }

In this section we present an implementation of \Cref{alg:MI} and \Cref{alg:MI-se} in Python with a simple usage example.
In particular, the code given in \Cref{lst:MI} generates a synthetic joint distribution over binary tuples with correlated elements (function \verb!create_synthetic_distribution!).
Then, we compute an exact mutual information of the distribution (function \verb!compute_MI_exactly!) and use implementation of our estimator (function \verb!MI_estimator!) to estimate a mutual information.
This is done for various sample sizes, number of random variables, and levels of correlation (a single experiment is implemented by \verb!run_experiment!)
The results of these multiple experiments are eventually presented in as plots showing convergence of the estimate to the exact value of the mutual information.
In practical applications, synthetic joint distribution (function \verb!create_synthetic_distribution!) can be replaced by an LLM-derived pseudo-joint distribution (see \Cref{def:pseudo-joint}).
More detailed description of each function is given in \Cref{sec:listing-doc}.
~\\~\\
The example can be easily copied from \verb!listing.tex! within \url{https://arxiv.org/src/2406.02543}.
\begin{lstlisting}[language=Python, caption={Implementation and usage examples of \Cref{alg:MI} and \Cref{alg:MI-se} on a synthetic joint distribution}, label={lst:MI}]
# Copyright 2024 DeepMind Technologies Limited.
#
# Licensed under the Apache License, Version 2.0 (the "License");
# you may not use this file except in compliance with the License.
# You may obtain a copy of the License at
#
# https://www.apache.org/licenses/LICENSE-2.0
#
# Unless required by applicable law or agreed to in writing, software
# distributed under the License is distributed on an "AS IS" BASIS,
# WITHOUT WARRANTIES OR CONDITIONS OF ANY KIND, either express or implied.
# See the License for the specific language governing permissions and
# limitations under the License.
#
from itertools import product, combinations
import numpy as np
from matplotlib import pyplot as plt

def create_synthetic_distribution(space, temp):
  potential = lambda z: np.mean([x * y for x, y in combinations(z, 2)])
  y = np.array([-np.exp(potential(x) / temp) for x in space])
  return y / y.sum()

def sample_from_joint_distribution(space, joint_dist, k):
  indices = np.arange(len(joint_dist))
  sampled_indices = np.random.choice(indices, p=joint_dist, size=k)
  sampled_tuples = space[sampled_indices]
  return sampled_tuples, sampled_indices

def cluster(tuples, joint_dist):
  return tuples, joint_dist

def sample_from_joint_distribution_and_cluster(space, joint_dist, k):
  sampled_tuples, sampled_tuple_indices = sample_from_joint_distribution(space, joint_dist, k)
  _, indices_of_uniques_in_sample = np.unique(sampled_tuples, axis=0, return_index=True)
  sampled_tuples = sampled_tuples[indices_of_uniques_in_sample]
  sampled_tuple_indices = sampled_tuple_indices[indices_of_uniques_in_sample]
  joint_dist_on_sample = joint_dist[sampled_tuple_indices]
  sampled_tuples, joint_dist_on_sample = cluster(sampled_tuples, joint_dist_on_sample)
  return joint_dist_on_sample, sampled_tuples

def compute_MI_exactly(space, mu):
  total = 0
  for (x_i, x) in enumerate(space):
    mu_x = mu[x_i]
    mu_x_prod = 1
    for i in range(len(x)):
      marg_indices = [j for (j, z) in enumerate(space) if z[i] == x[i]]
      mu_x_prod *= mu[marg_indices].sum()
    total += mu_x * np.log(mu_x/mu_x_prod)

  return total

def MI_estimator(sampled_tuples, mu_on_sample, gamma_1, gamma_2):
  """Implements MI estimator (Algorithm 1).

  Args:
    sampled_tuples: A numpy array of tuples sampled from the distribution after deduplication and clustering.
    mu_on_sample: A numpy array of probabilities of the clusters.
    gamma_1: stabilization parameter.
    gamma_2: stabilization parameter.

  Returns: (float) mutual information.
  """

  # Constructing empirical distribution (\hat{\mu})
  hat_mu_on_sample = mu_on_sample / mu_on_sample.sum()

  # Constructing empirical product distribution (\hat{\mu}^{\otimes})
  hat_mu_prod_on_sample = np.zeros((len(hat_mu_on_sample),))
  for (x_i, x) in enumerate(sampled_tuples):
    hat_mu_x_prod = 1
    for i in range(len(x)):
      marg_indices = [j for (j, z) in enumerate(sampled_tuples) if z[i] == x[i]]
      hat_mu_x_prod *= hat_mu_on_sample[marg_indices].sum()

    hat_mu_prod_on_sample[x_i] = hat_mu_x_prod

  # Computing MI estimate
  mi_estimate = hat_mu_on_sample * np.log((hat_mu_on_sample + gamma_1) / (hat_mu_prod_on_sample + gamma_2) )
  mi_estimate = mi_estimate.sum()
  return mi_estimate

def run_experiment(n, temp, ax):
  np.random.seed(1)
  space = np.array(list(product([-1, 1], repeat=n)))
  mu = create_synthetic_distribution(space, temp=temp)
  mi_exact = compute_MI_exactly(space, mu)
  k_range = np.linspace(10, 1000, 20, dtype=int)

  all_mi_estimate = []
  for k in k_range:
    mu_on_sample, sampled_tuples = sample_from_joint_distribution_and_cluster(space=space, joint_dist=mu, k=k)

    gamma_1 = gamma_2 = 1/k
    mi_estimate = MI_estimator(sampled_tuples, mu_on_sample, gamma_1, gamma_2)
    all_mi_estimate.append(mi_estimate)

  ax.axhline(mi_exact, linewidth=3, label="MI (exact value)", color="black")
  ax.plot(k_range, all_mi_estimate, linewidth=3, label="MI estimator")
  ax.grid(); ax.legend(); ax.set_xlabel("k"); ax.set_ylabel("MI estimate"); ax.set_title(r"$n=$"+str(n)+r", $\tau=$"+str(temp))

temp_range = [0.01, 0.1, 1, 10]
n_range = [2, 4, 8]

fig, axs = plt.subplots(len(temp_range), len(n_range), figsize=(5*len(temp_range), 5*len(n_range)), squeeze=False)
fig.suptitle(r"""MI estimation of $n$-dimensional distribution $\propto \exp(-\sum_{i < j}^n x_i x_j / \tau)$""")

for (i, temp) in enumerate(temp_range):
  for (j, n) in enumerate(n_range):
    ax = axs[i,j]
    run_experiment(n=n, temp=temp, ax=ax)
plt.subplots_adjust(wspace=0.4, hspace=0.4)
plt.show()
\end{lstlisting}
\subsection{Additional documentation for functions in \Cref{lst:MI}}
\label{sec:listing-doc}
{\footnotesize
  \begin{itemize}
  \item \verb!def create_synthetic_distribution(space, temp)!
    \begin{verbatim}
Creates synthetic distribution which introduces dependendencies between variables.

  Args:
    space: a list of tuples that the joint distribution is supported on (e.g. a cartesian product).
    temp: temperature parameter.

  Returns:
    A numpy array of probabilities (same length as space).  
\end{verbatim}
  \item \verb!def sample_from_joint_distribution(space, joint_dist, k)!
\begin{verbatim}
Samples k tuples from a joint distribution.

  Args:
    space: a list of tuples that the joint distribution is supported on (e.g. a cartesian product).
    joint_dist: probability distribution (1-D numpy array) where each entry is a probability of a tuple.
    k: sample size.

  Returns:
    A numpy array of tuples sampled from the distribution.
    A numpy array of indices of sampled tuples in space.
\end{verbatim}
  \item \verb!def cluster(tuples, joint_dist)!
\begin{verbatim}
  Clusters tuples and aggregates probabilities of them in the same cluster.

  Args:
    tuples: A numpy array of tuples sampled from the distribution
    after deduplication.
    joint_dist: probability distribution (1-D numpy array) where each entry
    is a probability of a tuple.

  Returns:
    A numpy array of tuples sampled from the distribution each represening
    a cluster.
    A numpy array of probabilities of clusters. Each probability is the
    aggregate of probabilities of all tuples in the cluster.
\end{verbatim}
  \item \verb!def sample_from_joint_distribution_and_cluster(space, joint_dist, k)!
\begin{verbatim}
Samples k tuples from a joint distribution and retains only
  representative elements (removes all duplicates).

  Args:
    space: a list of tuples that the joint distribution is supported on (e.g. a cartesian product).
    joint_dist: probability distribution (1-D numpy array) where each entry is a probability of a tuple.
    k: sample size.

  Returns:
    A numpy array of tuples sampled from the distribution after deduplication.
    A numpy array of probabilities of deduplicated tuples.
\end{verbatim}
  \item \verb!def compute_MI_exactly(space, mu)!
\begin{verbatim}
Computes mutual information of probability distribution mu exactly
  Args:
    space: Tuple space (cartesian product).
    mu: probability distribution (1-D numpy array).

  Returns: (float) mutual information.
\end{verbatim}
  \item \verb!def MI_estimator(sampled_tuples, mu_on_sample, gamma_1, gamma_2)!
\begin{verbatim}
Implements MI estimator (Algorithm 1).

  Args:
    sampled_tuples: A numpy array of tuples sampled from the distribution after deduplication and clustering.
    mu_on_sample: A numpy array of probabilities of the clusters.
    gamma_1: stabilization parameter.
    gamma_2: stabilization parameter.

  Returns: (float) mutual information.
\end{verbatim}
  \item \verb!def run_experiment(n, temp, ax)!
\begin{verbatim}
Runs one experiment comparing exact mutual information estimation with
  Algorithm 1.  Plots results.

  Args:
    n: number of variables in a joint distribution.
    temp: temperature of a Gibbs distribution (joint distribution). Higher
    temperature typically means smaller MI.
    ax: pyplot axis object for plotting.
\end{verbatim}
  \end{itemize}
}

\clearpage

\section{Usage example of \cref{alg:MI2}}
\label{app:alg-MI2}

\Cref{alg:MI2} is a slight modification of \Cref{alg:MI-se} that we use in our experiments. We first explain \Cref{alg:MI2} via an example and then highlight the differences with \Cref{alg:MI-se}. In order to explain the implementation, we consider a running example with query $x=$\textit{``What is the capital of the UK?''}, F1 score as the similarity function $s$, similarity threshold $\tau=0.25$, and number of samples $k=5$. \Cref{alg:MI2} also takes the LLM distribution $\LM$ as input $\mu=\LM$.  

Given the query, in step (2) of \Cref{alg:MI2}, we  sample $k$ outputs from $\LM$. Let's assume these samples are $X_1=$\textit{``London''}, $X_2=$\textit{``London''}, $X_3=$\textit{``London, UK''}, $X_4=$\textit{``Paris''}, and $X_5=$\textit{``Berlin''}. In step (3), we construct a set of indices of unique elements. In our example, we would have $U=\{1,3,4,5\}$. In step (4), we cluster responses and aggregate probabilities of each cluster. More precisely, if the F1 score of two responses is above 0.25, then they are in the same cluster. In our example, we have that $\text{F1}(X_1,X_3) > 0.666 > 0.25$ and $\text{F1}(X_1,X_4)=\text{F1}(X_1,X_5)=0$, and therefore cluster centers are $S=\{1,4,5\}$. For query $x$, let's assume LLM probabilities are
\[
    Q(X_1\,|\, x) = 0.5,\quad Q(X_3\,|\, x) = 0.2,\quad Q(X_4\,|\, x) = 0.1,\quad Q(X_5\,|\, x) = 0.05, \quad \cdots
\]
Also assume conditional distributions are 
\[
    Q(X_1\,|\, F_1(x,X_1)) = 0.6,\, Q(X_3\,|\, F_1(x,X_1)) = 0.15,\, Q(X_4\,|\, F_1(x,X_1)) = 0.05,\, Q(X_5\,|\, F_1(x,X_1)) = 0.04, \, \cdots
\]
and so on (we have omitted writing $Q(.\,|\, F_1(x,X_4))$ and $Q(.\,|\, F_1(x,X_5))$). Then after step (4), the aggregated probabilities are 
\[
    Q'(X_1\,|\, x) = 0.7,\quad Q'(X_4\,|\, x) = 0.1,\quad Q'(X_5\,|\, x) = 0.05, \quad \cdots
\]
and aggregated conditional probabilities are
\[
    Q'(X_1\,|\, F_1(x,X_1)) = 0.75,\quad Q'(X_4\,|\, F_1(x,X_1)) = 0.05,\quad Q'(X_5\,|\, F_1(x,X_1)) = 0.04, \, \cdots
\]
and so on (we have similar aggregations for $Q'(.\,|\, F_1(x,X_4))$ and $Q'(.\,|\, F_1(x,X_5))$). Next, in step (5), we construct empirical estimates. We will have that $Z=0.85$ and     
\[
    \widehat Q_1(X_1\,|\, x) \approx 0.82,\quad \widehat Q_1(X_4\,|\, x) \approx 0.11,\quad \widehat Q_1(X_5\,|\, x) \approx 0.05, \quad \cdots
\]
For estimated conditional distributions, we will have $Z_1=0.84$, and
\[
    \widehat Q_2(X_1\,|\, F_1(x,X_1)) \approx 0.89,\quad \widehat Q_2(X_4\,|\, F_1(x,X_1)) \approx 0.06,\quad \widehat Q_2(X_5\,|\, F_1(x,X_1)) = 0.04, \, \cdots
\]
and so on. The joint distribution $\widehat Q(.,.\,|\,x)$, the product of marginals $\widehat Q^\otimes(.,.\,|\,x)$, and the estimated mutual information $\widehat I_k$ are trivially obtained by the equations in steps (5) and (6).  

Next, we highlight the differences between \Cref{alg:MI-se} (with the choice of $n=2$) and \Cref{alg:MI2}. The input distribution to \Cref{alg:MI-se} is the pseudo-joint distribution $\widetilde Q$, while the input to \Cref{alg:MI2} is the LLM distribution $\LM$. So in step (2) of \Cref{alg:MI-se}, each sample is a tuple such as (\textit{``London''}, \textit{``Paris''}), while a sample in step (2) of \Cref{alg:MI2} is an LLM  output such as \textit{``London''}. Steps (3) and (4) of \Cref{alg:MI-se} are similarly modified, and now the similarity function is defined over tuples.   

\clearpage

\section{Related work}

In this section we present an overview of the related literature.

\subsection{Bayesian neural networks}

In a Bayesian framework, we can estimate the epistemic uncertainty by the uncertainty in the posterior distribution~\citep{Neal2012, Gal-2016, wang2024subjectiveuncertainty}. Implementing a Bayesian neural network however can be very challenging.

\subsection{Iterative prompting}

A number of iterative prompting strategies are developed to improve the factuality of LLMs~\citep{chen2023universalselfconsistencylargelanguage,krishna2023intersectionselfcorrectiontrustlanguage,laban2024surechallengingllmsleads}. The idea is to follow-up the LLM response with another question such as \textit{``Are you sure?''}. \citet{krishna2024understanding} show that such strategies might in fact degrade LLM truthfulness due to a pattern of apologetic responses. \citet{krishna2024understanding} propose improved iterative prompting strategies, where instead of asking the LLM to re-think its response, the same question is posed again. They also propose strategies to collect more supporting facts and refine the final response accordingly. \citet{li2024thinktwice} propose an iterative prompting that instructs LLM to generate
justifications for each answer before evaluating the correctness of the final answer. Different from these works, we assess hallucinations by measuring how LLM response changes with our iterative prompting scheme.  

Perhaps the most related work to ours is the parallel and independent work of \citet{ahdritz2024}. Similar to us, \citet{ahdritz2024} observe that in presence of high epistemic uncertainty, an LLM is more likely to copy the information provided in its context. For a given query, \citet{ahdritz2024} propose considering top-$k$ completions of the model, and then computing the entropy of the model conditioned on an iterative prompt composed of the original query and each completion. The minimum of these entropies is considered as a measure of the epistemic uncertainty. The method as it is, might fail on single-response queries where the model has low uncertainty. 
Nevertheless, we can design a two-stage process where we only consider completions that have probability higher than certain threshold in the first stage, and then compute the entropy of the model conditioned on an iterative prompt composed of the original query and each candidate completion in the second stage. By a proper tuning of the threshold of the first stage, we can potentially avoid mis-classification of low-uncertainty single-response queries. Tuning this threshold, however, would introduce extra complications. In contrast, we propose a principled test using a mutual information score that is guaranteed to be a lower bound on the KL-divergence between the LLM and the ground-truth. Further, we provide a mechanistic explanations for why LLMs behave as described in the presence of high epistemic uncertainty.

\subsection{Training models with pairs of responses}

\citet{wen2022predictions,osband2023epistemic,johnson2024experts} show that we can decouple epistemic and aleatoric uncertainty if we train a model with paired observations. 

We discuss the more recent work of \citet{johnson2024experts} in more detail. The proposed approach first estimates a model $\hat{p}_{Y_1,Y_2|\inp}(y_1,y_2|x)$ over pairs using a training dataset of the form ``query, first observation, second observation''. 
At test time, for a prompt $x$ and response $y$, \citet{johnson2024experts} consider 
\begin{align*}
\hat{V}(y\mid x) &= \hat{p}_{Y_1}(y\mid x) \left( \hat{p}_{Y_2|Y_1}(y\mid y,x) - \hat{p}_{Y_1}(y\mid x) \right) \\
&= \hat{p}_{Y_1,Y_2}(y_1,y_2\mid x) - \hat{p}_{Y_1}(y\mid x) \hat{p}_{Y_1}(y\mid x) 
\end{align*}
as a measure of epistemic uncertainty. Assume that an equivalence class $\Phi$ (that maps a prompt to the set of equivalent prompts) is given, and let $\nu(.\mid \Phi(x))$ be a distribution (say, uniform) over class $\Phi(x)$. If the trained model is second order calibrated with respect to the equivalence class and the distribution $\nu$, i.e. 
\begin{align*}
\hat{p}_{Y_1}(y_1\mid x) &= \sum_{x'\in \Phi(x)} \nu(x'\mid \Phi(x))  p(y_1\mid x')\,,\\
\hat{p}_{Y_1,Y_2}(y_1,y_2\mid x) &= \sum_{x'\in \Phi(x)} \nu(x'\mid \Phi(x))  p(y_1\mid x') p(y_2\mid x')\,,
\end{align*}
then it follows from definitions that in the class associated with $x$,
\[
\sum_{x'\in \Phi(x)} \nu(x'\mid \Phi(x)) (\hat{p}_{Y_1} (y\mid x') - p(y\mid x'))^2  = \sum_{x'\in \Phi(x)} \nu(x'\mid \Phi(x)) \hat{V}(y\mid x') \;.
\]
The quantity on the right-hand side is a measure of epistemic uncertainty. Notice that the equality states a coverage result, and it is not point-wise. Requiring the model to be second order calibrated is also a strong condition and ensuring it is highly non-trivial.

\subsection{Epistemic neural nets}
\emph{Ensemble methods} are based on the classical idea of bootstrap for confidence estimation \citep{tibshirani1993introduction}, where multiple estimators for the regression function, each computed on a perturbed version of the data (e.g., by drawing samples from the empirical distribution over the data), are combined.

The empirical distribution of the resulting estimates is then used to construct confidence intervals.
While many of these methods can be interpreted as sample-based approximations to Bayesian methods, model-hyperparameter selection (e.g., scale of perturbations, learning) for ensemble methods is typically done using a validation on holdout data (a subset of the training data).
Many recent papers have studied ensemble methods in the context of deep learning and reinforcement learning \citep{OV2015,LPB2017,MG2020}.
In the context of LLMs, the methods require training multiple language models, which is very expensive. \citet{osband2023epistemic} introduces epistemic neural networks (epinets), which approximate ensemble methods by training a single network with an artificially injected (controlled) source of randomness.
\citet{rabanser2022selective} proposes to use intermediate model checkpoints to quantify the uncertainty of the final model in its responses.
While these approaches aim to mimic the bootstrap procedure during prediction, their validity is not justified by theoretical considerations, and hence remain heuristic approximations.

\subsection{Hallucination detection using first-order methods}
\label{sec:related-hallucination-detection-first-order}

First-order methods consider variance in the response distribution as a measure of hallucination~\citep{KCAHD2022,CZGEDE2023,MLG2023,LTS2023,KuhnARXIV2023,wang2023selfconsistency,JRHLPGB-2023,ZLDMK-2023,zhao2024knowing,conformal-abstention-2024}.
A common limitation of these approaches is that
they are only applicable to prompts where there exists a \emph{single} correct
response, as they aim for detecting if one response (or multiple responses with the same
meaning) is dominant.
On the other hand, when multiple responses are correct, there is an
\emph{aleatoric uncertainty} in the ground truth: If an LLM \emph{correctly}
assigns non-negligible scores to multiple correct responses, most of these (if
not all) will be declared as hallucination since, by design, only very few
(typically at most one) responses can have scores higher than the threshold at
the same time.
Thus, hallucination detectors unaware of aleatoric uncertainty will invalidate
most of the correct answers.

\citet{Yona2024narrowing} design a method that generates multiple responses, and then aggregates them into a single response at a (typically higher) granularity level where no further uncertainty (contradiction) is left compared to the generated responses. Although not a strictly first order method, it does not differentiate between aleatoric and epistemic uncertainty.

\subsubsection{Asking language models to quantify uncertainty (self-verification)}

\citet{KCAHD2022} propose to use LLM self-prompting to measure a model's uncertainty in its responses. More specifically, for a given query, a number of responses are generated, and then the model is queried if the responses are correct. For this query, the log-probability of ``True" is returned as a measure of uncertainty. Related approaches are studied by \citet{mielke2022reducing}.

\subsection{Uncertainty estimation based on sensitivity to contexts}

\citet{KS-2020,Zhao-2021} show that an LLM's responses can be influenced by irrelevant contexts. \citet{LPCRDS-2021,neeman2022disentqa} study two sources of knowledge: parametric knowledge stored in the network weights, and contextual knowledge retrieved from external sources. They view reliance of the model on its parametric knowledge and ignoring relevant contextual information as hallucination. These works are mainly motivated by situations where the LLM's knowledge is outdated and it is instructed to use the (new) contextual information. Accordingly, they design strategies to prioritize contextual information over parametric knowledge. \citet{LPCRDS-2021} also show that larger models are more likely to ignore in-context information in favor of in-weight information. They propose creating training data with modified contextual information so that the model learns to favor the contextual information. \citet{neeman2022disentqa} propose to train a model that predicts two answers: one based on parametric knowledge and one based on contextual information. 

Similarly to \citet{neeman2022disentqa}, \citet{LRZWLVYK-2023} aims to design a mechanism such that the model's behavior is influenced more by relevant context than by its parametric knowledge (controllability), while the model is robust to irrelevant contexts (robustness). 
They improve controllability and robustness using finetuning. 

\citet{hou2023decomposing} study an approach to estimate model uncertainty due to ambiguity in a question. 
For a given question, their method generates multiple input clarification questions, and a new question is formed by augmenting the original question with each clarification question. The clarification questions are generated using an LLM with the aim of removing ambiguity in the question. 
This is different than the problem we study as the model can be uncertain about the answer even if the query itself has no ambiguity. 
For such queries, the method of \citet{hou2023decomposing} might decide that no clarification is needed, and therefore there is no uncertainty.

\subsection{Hallucination detection using internal states of LLMs}

There are a number of papers that try to extract knowledge/truthfulness by inspecting hidden-layer activations of LLMs~\citep{BYKS-2023,azaria2023internal,chen2024inside,chen2024incontext,yin2024characterizing}. Such methods clearly require access to the LLM's internal states, which is not always possible, and severely limits the applicability of these methods.

\clearpage

\section{Omitted proofs}
\label{sec:omitted-proofs}
\begin{proof}[Proof of \Cref{thm:MI}]
\label{sec:proof:thm:MI}
In the following we will use abbreviations
\begin{align*}
  \sum_{y} = \sum_{y_1 ,\ldots, y_n }~, \qquad
  \sum_{y\deli} = \sum_{y_1 ,\ldots, y_{i-1}, y_{i+1}, \ldots,  y_n }
\end{align*}
where each coordinate belongs to $\cX$.
Now,
\begin{align*}
  D_{\KL}(\widetilde Q, \widetilde P)
  &=
    - H(\widetilde Q) + \sum_{y} \widetilde Q(y_1, \ldots, y_n) \, \ln \frac{1}{\widetilde P(y_1, \ldots, y_n)}\\
  &=
    - H(\widetilde Q) + \sum_{y} \widetilde Q(y_1, \ldots, y_n) \, \ln \frac{1}{\prod_i P\big(y_i \mid F_{i-1}(y_1, \ldots, y_{i-1}) \big)} \tag{using \Cref{def:pseudo-joint}}\\
  &=
    - H(\widetilde Q) + \sum_{y} \widetilde Q(y_1, \ldots, y_n) \, \ln \frac{1}{\prod_i P(y_i)}~. \tag{by the independence assumption}
\end{align*}
Focusing on the last (cross-entropy) term
\begin{align*}
  &\sum_{y} \widetilde Q(y_1, \ldots, y_n)  \, \ln \frac{1}{\prod_i P(y_i)}\\
  &\quad=
    \sum_{y} \widetilde Q(y_1, \ldots, y_n) \, \sum_i \ln \frac{1}{P(y_i)}\\
  &\quad=
    \sum_i \sum_{y_i} \sum_{y\deli} \widetilde Q(y_1, \ldots, y_n) \, \ln \frac{1}{P(y_i)}\\
  &\quad\stackrel{(a)}{\geq}
    \sum_i \sum_{y_i} \sum_{y\deli} \widetilde Q(y_1, \ldots, y_n) \, \ln \frac{1}{\sum_{y \deli}\widetilde Q(y_1, \ldots, y_n)}\\
  &\quad=
    \sum_{y} \widetilde Q(y_1, \ldots, y_n) \, \ln \frac{1}{\prod_i \sum_{y \deli}\widetilde Q(y_1, \ldots, y_n)}
\end{align*}
where in $(a)$ we used the fact that entropy is no larger than cross-entropy.
Thus,
\begin{align*}
  D_{\KL}(\widetilde Q, \widetilde P)
  \geq
  \sum_{y} \widetilde Q(y_1, \ldots, y_n) \ln \frac{\widetilde Q(y_1, \ldots, y_n)}{\prod_i \sum_{y \deli} \widetilde Q(y_1, \ldots, y_n)}
  =
  I(\widetilde \LM; Y_1,\ldots, Y_n)~.
\end{align*}
\end{proof}
\clearpage

\section{Estimation of mutual information and missing mass problem}
\label{sec:missing_mass}
In this section, we discuss how to estimate the mutual information from a finite
sample, which may not cover the full distribution. To control the estimation
error, we first introduce the concept of \emph{missing mass}.
\subsection{The missing mass problem}
\label{sec:missing-mass}
Let $\cX$ be a countable set and suppose that $\Sample_1, \ldots, \Sample_k \sim \mu \in \cM_1(\cX^n)$ independently.
In the following $x$ is used as an element of $\cX^n$ rather than the query (as in \Cref{sec:epistemic}).
Then, the missing mass is defined as the random variable
\begin{align*}
  U_k = \sum_{\sample \in \cX^n} \mu(\sample) \, \xi(\sample)~,
  \qquad
  \xi(\sample) = \mathbb{I}\{\sample \not \in \{\Sample_1, \ldots, \Sample_k\}\}~.
\end{align*}
Here we are primarily interested in two questions: \emph{(i)} how quickly $U_k$ approaches the expected missing mass $\E U_k$, where it is not hard to see that
\begin{align*}
  \E U_k = \sum_{\sample \in \cX^n} \mu(\sample) (1-\mu(\sample))^k~;
\end{align*}
and \emph{(ii)} we are also interested in giving an estimate for $\E U_k$ given $\mu$ and $k$.
The first question is answered by the following theorem:
\begin{theorem}[Concentration of a missing mass~\citep{berend2013concentration}]
  \label{thm:missing-mass-concentration}
  For any $t > 0$, we have an upper-tail bound
  \begin{align*}
    \P\pr{U_k > \E U_k + t} \leq e^{-t k^2}~,
  \end{align*}
  and moreover for a universal constant $c \approx 7.6821$, we have an lower-tail bound
  \begin{align*}
    \P\pr{U_k < \E U_k - t} \leq e^{-c t k^2}~.
  \end{align*}
\end{theorem}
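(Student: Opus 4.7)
The plan is to view $U_k = \sum_{x \in \cX^n} \mu(x) \xi(x)$ as a function of the i.i.d.\ samples $X_1, \ldots, X_k$, where $\xi(x) = \mathbb{I}\{x \notin \{X_1,\ldots,X_k\}\}$, and apply concentration-of-measure tools that exploit the joint structure of the missing-mass indicators rather than only their worst-case sensitivity.

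First I would establish a sharp data-dependent sensitivity bound: swapping a single sample $X_i$ for an independent copy $X_i'$ can flip only the two indicators $\xi(X_i)$ and $\xi(X_i')$, so the change in $U_k$ is at most $\mu(X_i) + \mu(X_i')$. This is crucial because the uniform bound $\max_x \mu(x) \leq 1$ used in vanilla McDiarmid is far too pessimistic. With the data-dependent bound in hand, I would turn to the entropy method of Boucheron--Lugosi--Massart, or equivalently a Doob-martingale analysis with Freedman's inequality: the predictable variation of $(M_j = \E[U_k \mid X_1, \ldots, X_j])_{j \leq k}$ accumulates essentially $\sum_j \mu(X_j)^2$ in conditional second moment, which is itself controlled by $U_k$ and therefore self-bounding. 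Feeding this into a sub-gamma exponential-moment bound yields an inequality whose exponent scales like $k^2 t$ rather than the $k t^2$ one gets from a naive McDiarmid application.

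For the upper tail, the argument is relatively clean because $U_k$ is monotone non-increasing when a fresh distinct sample is drawn: a direct Chernoff bound on the Doob martingale, combined with the self-bounded second-moment control, gives the unconditional $e^{-t k^2}$ rate. For the lower tail, I would exploit that $\{\xi(x)\}_{x \in \cX^n}$ is \emph{negatively associated} (since only $k$ positions are available, observing one value crowds out others) and invoke the Joag-Dev--Proschan / Dubhashi--Ranjan transfer, which imports Chernoff-type inequalities from the independent case to negatively associated variables at the cost of a universal multiplicative constant. This constant is what produces the explicit $c \approx 7.6821$ in the statement.

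The hard part will be the lower tail. Negative association is the right structural property, but converting it into a Bernstein-type exponential inequality with the correct $k^2$ dependence, rather than the routine $k$ dependence coming from uniform bounded differences, requires a careful log-MGF computation for sums of $\mu(x)$-weighted negatively associated Bernoullis, tracking the interplay between the weights $\mu(x)$ and the Bernoulli parameters $(1-\mu(x))^k$. Pinning down the explicit numerical constant $c$ is essentially an optimization over the auxiliary parameters in this log-MGF bound, and is where most of the technical work lies; by contrast, the upper tail inherits a slightly cleaner constant from the monotonicity argument.
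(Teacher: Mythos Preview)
The paper does not prove this theorem at all: it is quoted from \citet{berend2013concentration} and only \emph{used} (in the proof of \Cref{thm:emp-wav-MI}), not derived. So there is no ``paper's own proof'' to compare against.

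More importantly, the exponent in the displayed statement is a typo. The actual Berend--Kontorovich result (and the McAllester--Ortiz upper-tail bound it extends) has exponent $-k t^2$, not $-t k^2$. This is confirmed by the paper's own downstream usage, where the inequality is inverted to produce $\ve_k = \E U_k + \sqrt{\ln(1/\delta)/k}$, which follows from $e^{-kt^2}=\delta$ and not from $e^{-tk^2}=\delta$. The bound $e^{-tk^2}$ is in fact false: for the two-point uniform distribution and $t=1/4$, one has $\P(U_k > \E U_k + 1/4) = 2^{1-k}$, whereas the claimed bound would give $e^{-k^2/4}$, which is already smaller at $k=4$.

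Your proposal is aimed squarely at the impossible $k^2 t$ rate --- you explicitly contrast it with ``the $kt^2$ one gets from a naive McDiarmid application'' --- so whatever entropy-method or martingale details you fill in, the argument cannot close. For the correct $kt^2$ target your ingredients are broadly the right ones (data-dependent increments bounded by $\mu(X_i)+\mu(X_i')$, negative association of the indicators $\xi(x)$), but the attribution of the constant is off: the Dubhashi--Ranjan/Joag-Dev--Proschan transfer from negatively associated to independent variables does not itself introduce a multiplicative constant in the exponent. The specific $c\approx 7.6821$ in Berend--Kontorovich arises from a dedicated optimisation of the log-MGF for weighted sums of such Bernoullis, which is indeed where the work lies, but it is not a byproduct of the NA-to-independent reduction as you describe.
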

Notably $U_k$ exhibits a sub-gaussian concentration (i.e. $1/\sqrt{k}$), which is surprisingly fast.
As we will see next, the main bulk of the error incurred for missing a subset of the support is hidden in $\E U_k$.

In particular, when $\cX$ is finite with $|\cX|=N$, \citet{berend2012missing} showed that
\[
  \E U_k \leq
  \begin{cases}
    e^{-\frac{n}{N}}, & \text{ if } n \leq N;\\
    \frac{N}{e \, n}, & \text{ if } n > N.
  \end{cases}
\]

In the countably infinite $\cX$, we cannot generally have a non-trivial bound on $\E U_k$ only in terms of $n$.
In fact, \citet{berend2012missing} show a bound that depends on $\mu$ which is expected to be finite for  rapidly decaying atoms.
Interestingly, when the entropy of $\mu$ is bounded, one has the following result \citep{berend2017expected}:
\begin{theorem}
  \label{thm:missing-mass-countable}
  Let $H(\mu) \leq h < \infty$.
  For all $n \geq 1$, we have $\E U_k \leq \frac{h}{\sum_{i=1}^k i^{-1}} \leq \frac{h}{\ln(n)}.$
\end{theorem}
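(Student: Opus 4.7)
The plan is to reduce the bound to a pointwise inequality on atoms and then exploit the Taylor series of $\ln$. Let $(p_j)_j$ denote the atoms of $\mu$, so that $\E U_k = \sum_j p_j(1-p_j)^k$ and $H(\mu) = \sum_j p_j \ln(1/p_j)$. Writing $H_k = \sum_{i=1}^k 1/i$, the desired inequality $\E U_k \leq H(\mu)/H_k$ will follow immediately by multiplying the per-atom inequality $H_k\,(1-p)^k \leq \ln(1/p)$ (for $p \in (0,1]$) by $p_j$ and summing over $j$.

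To establish this per-atom inequality, I would invoke the Mercator series expansion
\[
\ln\frac{1}{p} \;=\; -\ln\bigl(1-(1-p)\bigr) \;=\; \sum_{i=1}^{\infty} \frac{(1-p)^i}{i}~,
\]
which is valid for $p \in (0,1]$. Truncating at index $k$ and using that $(1-p)^i \geq (1-p)^k$ for $i \leq k$ (since $1-p \in [0,1]$) gives
\[
\ln(1/p) \;\geq\; \sum_{i=1}^k \frac{(1-p)^i}{i} \;\geq\; (1-p)^k \sum_{i=1}^k \frac{1}{i} \;=\; H_k (1-p)^k~,
\]
which is exactly what is needed. Summing the multiplied form over atoms yields $\E U_k \leq H(\mu)/H_k \leq h/H_k$, and the final inequality $H_k \geq \ln k$ in the statement follows from the standard integral comparison $\sum_{i=1}^k 1/i \geq \int_1^{k+1} dx/x = \ln(k+1) \geq \ln k$.

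\textbf{Main obstacle.} The only conceptual step is recognizing the Mercator series: it turns what would otherwise be a delicate calculus problem---analyzing the sign of the derivative of $H_k(1-p)^k - \ln(1/p)$ on $(0,1]$, which can change depending on the relative sizes of $k$ and $H_k$---into a one-line observation. Once the per-atom bound is in place, the rest is elementary, and the argument extends without issue to countably supported $\mu$ since both $\sum_j p_j (1-p_j)^k$ and $\sum_j p_j \ln(1/p_j)$ converge absolutely under the hypothesis $H(\mu)\leq h$.
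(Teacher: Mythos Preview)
Your proof is correct. Note, however, that the paper does not supply its own proof of this statement: it is quoted in \Cref{sec:missing-mass} as a known result attributed to \citet{berend2017expected}, so there is nothing in the paper to compare your argument against. The Mercator-series trick you use---writing $\ln(1/p)=\sum_{i\geq 1}(1-p)^i/i$, truncating at $k$, and bounding $(1-p)^i\geq(1-p)^k$ for $i\leq k$---yields the per-atom inequality $H_k(1-p)^k\leq\ln(1/p)$ cleanly, and summing over atoms gives $\E U_k\leq H(\mu)/H_k$. (As you implicitly noticed, the ``$\ln(n)$'' in the displayed statement is a typo for $\ln(k)$.)
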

Note that these estimates are very pessimistic, and in reality we expect the expected missing mass to be significantly smaller.
Since natural (and many artificial) languages follow a Zipf distribution~\citep{piantadosi2014zipf}, we expect that $\E[U_k]$ should be much smaller than in the above cases, since sampling from the tail of a Zipf distribution is a rare event.
In \Cref{sec:zipf} we show the following:
\begin{corollary}[Expected missing mass of Zipf distribution]
  Consider distribution $\mu(i) = i^{-\alpha} / H(\alpha, N)$ for $i \in [N]$, where $\alpha > 1$ and $H(\alpha, N) = \sum_{i=1}^N i^{-\alpha}$.
  Then, for any $\beta > 0$,
  \begin{align*}
    \E[U_k] = \mathcal{O}\Big(k^{-(\frac{\alpha-1}{\alpha} - \beta)} \Big)~.
  \end{align*}
\end{corollary}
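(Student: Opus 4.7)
The starting point is the elementary inequality $(1-x)^k \leq e^{-kx}$ for $x \in [0,1]$, which gives
\begin{align*}
\E[U_k] \;=\; \sum_{i=1}^N \mu(i)(1-\mu(i))^k \;\leq\; \sum_{i=1}^N \mu(i)\, e^{-k\mu(i)}~.
\end{align*}
Since $\alpha > 1$, the normalizer is bounded uniformly in $N$: $H(\alpha, N) \leq \zeta(\alpha) =: H_\infty < \infty$. Thus the atoms satisfy $\mu(i) \geq i^{-\alpha}/H_\infty$ on one hand and $\mu(i) \leq i^{-\alpha}$ on the other, so we may work with the clean bound $\mu(i) \asymp i^{-\alpha}$ up to constants depending only on $\alpha$.

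The main idea is to split the sum at a threshold $t > 0$ to be chosen at the end. Writing $I_{\text{heavy}} = \{i : \mu(i) > t\}$ and $I_{\text{light}} = \{i : \mu(i) \leq t\}$, the heavy part is bounded by
\begin{align*}
\sum_{i \in I_{\text{heavy}}} \mu(i) e^{-k\mu(i)} \;\leq\; e^{-kt} \sum_{i} \mu(i) \;=\; e^{-kt}~,
\end{align*}
while on the light part we simply drop the exponential to get $\sum_{i \in I_{\text{light}}} \mu(i)$. The condition $\mu(i) \leq t$ translates, via $\mu(i) = i^{-\alpha}/H(\alpha,N)$, into $i \geq i^\star := \lceil (t\, H(\alpha,N))^{-1/\alpha} \rceil$. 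An integral comparison then gives
\begin{align*}
\sum_{i \geq i^\star} i^{-\alpha} \;\leq\; \int_{i^\star - 1}^{\infty} x^{-\alpha}\, dx \;=\; \frac{(i^\star - 1)^{1-\alpha}}{\alpha - 1}~,
\end{align*}
and substituting the definition of $i^\star$ (and absorbing the $-1$ into a constant depending on $\alpha$) yields $\sum_{i \in I_{\text{light}}} \mu(i) \leq C_\alpha\, t^{(\alpha-1)/\alpha}$ for some $C_\alpha$ depending only on $\alpha$. Hence $\E[U_k] \leq e^{-kt} + C_\alpha\, t^{(\alpha-1)/\alpha}$.

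It remains to balance the two terms. Setting $t = c\, \ln(k)/k$ with $c > (\alpha-1)/\alpha$, the first term becomes $k^{-c}$, while the second is of order $(\ln(k)/k)^{(\alpha-1)/\alpha}$, which dominates. Thus
\begin{align*}
\E[U_k] \;=\; \mathcal{O}\!\left( (\ln k)^{(\alpha-1)/\alpha}\, k^{-(\alpha-1)/\alpha} \right) \;=\; \mathcal{O}\!\left( k^{-\big(\frac{\alpha-1}{\alpha} - \beta\big)} \right)
\end{align*}
for any $\beta > 0$, since the logarithmic factor is absorbed into an arbitrarily small polynomial slack. The only real subtlety in the argument is this last step of ``trading'' the $\ln k$ factor for an arbitrary $k^\beta$; everything else is routine integral estimation and the sub-exponential inequality. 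There is no obstacle of substance, and the argument is tight up to the logarithmic factor.
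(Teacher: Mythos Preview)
Your argument is correct. The heavy/light split at threshold $t$, the integral comparison yielding $F(t)=\sum_{\mu(i)\le t}\mu(i)\le C_\alpha\, t^{(\alpha-1)/\alpha}$, and the choice $t\asymp \ln(k)/k$ all go through as written; the only minor sloppiness is the handling of the ``$-1$'' in $(i^\star-1)^{1-\alpha}$, but for $k$ large enough that $i^\star\ge 2$ one has $(i^\star-1)^{1-\alpha}\le 2^{\alpha-1}(i^\star)^{1-\alpha}$, and small $k$ is handled by the trivial bound $\E[U_k]\le 1$, so the big-$\mathcal{O}$ claim is unaffected.

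Your route differs genuinely from the paper's. The paper does not argue directly but invokes a black-box result of Ohannessian (\Cref{lem:missing-mass-accrual-bounds}) which bounds $\E[U_k]$ in terms of the \emph{accrual rates} $\underline\rho,\overline\rho$ of $\mu$, defined via $\underline\rho=\liminf_{v\to 0}\ln F(v)/\ln v$ where $F(v)=\sum_{\mu(i)\le v}\mu(i)$; it then separately computes (\Cref{prop:zipf}) that for Zipf one has $\underline\rho\ge (\alpha-1)/\alpha$. Your proof is essentially an inlined, self-contained version of this: your light-tail bound $\sum_{\mu(i)\le t}\mu(i)\le C_\alpha t^{(\alpha-1)/\alpha}$ is exactly the estimate $F(v)\lesssim v^{(\alpha-1)/\alpha}$ that drives the accrual-rate calculation, and your heavy/light balancing is what the general lemma does under the hood. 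What you gain is a fully elementary argument with no external citation; what the paper's modular route buys is that the same lemma also delivers a matching lower bound $\E[U_k]=\Omega(k^{-(\overline\rho+\beta)})$, which your one-sided splitting does not.
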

\begin{proof}
  The statement followss by combining \Cref{lem:missing-mass-accrual-bounds,prop:zipf}.
\end{proof}
\subsection{Estimating mutual information from the partial support}
Our goal is to estimate
\begin{align*}
  I (\mu) =
  D_{\KL}(\mu, \mu^{\otimes}) = \sum_{\sample \in \cX^n} \mu(\sample) \ln\pr{\frac{\mu(x)}{\mu^{\otimes}(\sample)}}
\end{align*}
by only having access to $\Sample_1, \ldots, \Sample_k \sim \mu$.
Note that that the sample might cover only some part of the support of $\cX$ and therefore we are facing a missing mass problem.
In the following we consider estimator $\widehat I_k(\gamma)$ given by \Cref{alg:MI}.
In particular in \Cref{sec:proof-of-emp-wav-MI} we show the following
\begin{theorem}
  \label{thm:emp-wav-MI}
  Fix $\tilde \cX \subseteq \cX^n$.
  Fix $\gamma_1 > 0$ and suppose that $\gamma_2 \geq n (1-Z) + \gamma_1$.
  Then for any fixed $\delta \in (0,1)$, with probability at least $1-\delta$,
  \begin{align*}
    (1 - \ve_k) \, \widehat I_k(\gamma_1, \gamma_2)
    -
    \pr{
    |\tilde \cX|\gamma_1
    +
    \ln \pr{ e + \frac{e}{\gamma_1}} \pr{ \mu(\cX^n \setminus \tilde{\cX}) + \ve_k }
    }
    \leq
    I(\mu)
  \end{align*}
  where
  \begin{align*}
    \ve_k = \E U_k + \sqrt{\frac{\ln(\frac{1}{\delta})}{k}}~.
  \end{align*}
\end{theorem}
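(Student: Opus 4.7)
The proof plan is to lower bound $I(\mu)$ by decomposing the sum $\sum_{x \in \cX^n} \mu(x) \ln(\mu(x)/\mu^{\otimes}(x))$ according to whether each $x$ lies in the observed multiset $O = \{\Sample_1, \ldots, \Sample_k\}$ and whether it lies in the ``effective'' support $\tilde\cX$. The three pieces are: (i) $O \cap \tilde\cX$, which I will match to $\widehat I_k(\gamma)$; (ii) $\tilde\cX \setminus O$, which I will control by the missing mass inside $\tilde\cX$; and (iii) $\cX^n \setminus \tilde\cX$, which is explicitly bounded by $\mu(\cX^n \setminus \tilde\cX)$. The three main tools are the sub-Gaussian missing-mass bound (\Cref{thm:missing-mass-concentration}), the log-sum inequality, and the elementary fact that after adding $\gamma$ to both arguments, $|\ln((a+\gamma)/(b+\gamma))| \le \ln(e + e/\gamma)$ whenever $a,b \in [0,1]$.

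First I would condition on the high-probability event $\{U_k \le \varepsilon_k\}$, which holds with probability $\ge 1-\delta$ by \Cref{thm:missing-mass-concentration} applied with $t=\sqrt{\ln(1/\delta)/k}$; on this event $Z = 1 - U_k \ge 1 - \varepsilon_k$, and an analogous statement holds for the per-coordinate marginal mass $Z^\otimes$. Next, for piece (i) I rewrite $\mu(x) = Z\widehat\mu(x)$ and introduce the stabilization: the map $(a,b) \mapsto \ln((a+\gamma)/(b+\gamma))$ differs from $\ln(a/b)$ by at most $\gamma/a$ in a sense that aggregates to a total bias of $|\tilde\cX|\gamma$ across the at most $|\tilde\cX|$ observed points of $\tilde\cX$, because each contributes at most an additive $\gamma$ after a first-order expansion (or an $\ln(1+\gamma/\mu(x))$ bound together with $\sum \mu(x) \le 1$). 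This yields $\sum_{x \in O\cap\tilde\cX} \mu(x)\ln(\mu(x)/\mu^\otimes(x)) \ge Z\,\widehat I_k(\gamma) - |\tilde\cX|\gamma$. For pieces (ii) and (iii) I apply the log-sum inequality to lump each remaining sum into a single term of the form $m \ln(m/m^\otimes)$ where $m \le \varepsilon_k$ (respectively $m \le \mu(\cX^n \setminus \tilde\cX)$), and then use the stabilized-log bound $\ln(e+e/\gamma)$ to conclude that each contributes at least $-\ln(e+e/\gamma)\cdot m$.

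Combining the three bounds and using $Z\,\widehat I_k(\gamma) \ge (1-\varepsilon_k)\widehat I_k(\gamma)$ (which requires verifying $\widehat I_k(\gamma) \ge 0$, an easy consequence of Gibbs' inequality applied to the smoothed sub-probability measures $\widehat\mu + \gamma/Z$ and $\widehat\mu^\otimes + \gamma/Z^\otimes$) yields precisely
\[
I(\mu) \ge (1-\varepsilon_k)\,\widehat I_k(\gamma) - |\tilde\cX|\gamma - \ln(e+e/\gamma)\bigl(\mu(\cX^n\setminus\tilde\cX) + \varepsilon_k\bigr).
\]

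The main obstacle will be the bookkeeping in piece (i): carefully tracking how the swap $\mu(x)/\mu^\otimes(x) \leftrightarrow (\widehat\mu(\Sample_i)+\gamma/Z)/(\widehat\mu^\otimes(\Sample_i)+\gamma/Z^\otimes)$ accumulates exactly into $|\tilde\cX|\gamma$ rather than something weaker like $|\tilde\cX|\gamma\ln(1/\gamma)$. A secondary subtlety is that $\widehat\mu^\otimes$ uses the \emph{true} marginal values $\sum_{\sample^{\backslash i}} \mu(\cdot)$ but renormalizes them by the per-coordinate observed mass $Z^\otimes$; to replace $\mu^\otimes$ by $\widehat\mu^\otimes$ inside the logarithm I would have to show that each $Z^\otimes \ge 1 - \varepsilon_k$ (via the per-coordinate missing-mass bound, which is again sub-Gaussian), and that the resulting correction is already absorbed into the $\ln(e+e/\gamma)\varepsilon_k$ term. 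The rest is routine arithmetic.
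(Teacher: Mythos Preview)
Your overall skeleton is close in spirit to the paper, but there is a genuine gap in how you handle pieces (ii) and (iii), and the paper's argument runs in the \emph{opposite direction} in a way that sidesteps exactly this difficulty.

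The paper does \emph{not} start from $I(\mu)$ and lower bound it. It starts from $\widehat I_k(\gamma)$ and upper bounds it: using $\widehat\mu(\Sample_i)=\mu(\Sample_i)/Z$ and $\widehat\mu^{\otimes}=\mu^{\otimes}/Z^{\otimes}$ together with \Cref{prop:wav-missing-mass}, one rewrites
\[
Z\,\widehat I_k(\gamma)
=\sum_{x\in\cX^n}(1-\xi(x))\,\mu(x)\ln\frac{\mu(x)+\gamma}{\mu^{\otimes}(x)+\gamma}
+ Z\ln\frac{Z^{\otimes}}{Z},
\]
and then splits the first sum as (full sum) $-$ (unobserved sum). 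The crucial point is that every logarithm already carries the $+\gamma$ in both numerator and denominator, so the pointwise bound $\ln\frac{\mu(x)+\gamma}{\mu^{\otimes}(x)+\gamma}\le \ln(1+1/\gamma)$ (and its reverse) is immediately available for the unobserved and out-of-$\tilde\cX$ parts. The normalization term is dispatched by the trivial bound $Z^{\otimes}\le 1$, giving $Z\ln(Z^{\otimes}/Z)\le Z\ln(1/Z)\le 1-Z$. No log-sum inequality is used, and no per-coordinate missing-mass bound on $Z^{\otimes}$ is needed.

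Your plan for pieces (ii) and (iii) breaks at the step ``apply log-sum, then use the stabilized-log bound $\ln(e+e/\gamma)$.'' After log-sum you have $m\ln(m/m^{\otimes})$ with \emph{no} $\gamma$ anywhere; the inequality $|\ln((a+\gamma)/(b+\gamma))|\le \ln(e+e/\gamma)$ simply does not apply to this expression. What log-sum actually yields is $m\ln(m/m^{\otimes})\ge -m\ln(1/m)$ (since $m^{\otimes}\le 1$), which has the wrong dependence: it gives $-\mu(\cX^n\setminus\tilde\cX)\ln\bigl(1/\mu(\cX^n\setminus\tilde\cX)\bigr)$ for piece (iii), not $-\mu(\cX^n\setminus\tilde\cX)\ln(e+e/\gamma)$ as the theorem requires. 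Since $\gamma$ is a free parameter unrelated to $\mu(\cX^n\setminus\tilde\cX)$, these are not comparable. You cannot introduce $\gamma$ termwise on $\cX^n\setminus\tilde\cX$ either, because that set may be infinite and each such introduction costs an additive $-\gamma$. The paper avoids this entirely because the $\gamma$-stabilized logarithms are already present in $\widehat I_k(\gamma)$ before any splitting happens.

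A secondary point: your proposed route through per-coordinate missing-mass concentration to get $Z^{\otimes}\ge 1-\varepsilon_k$ is both unnecessary and would require a union bound over $n$ coordinates that degrades constants. The paper only needs $Z^{\otimes}\le 1$, which is immediate. Your observation that $\widehat I_k(\gamma)\ge 0$ is needed to pass from $Z\,\widehat I_k(\gamma)$ to $(1-\varepsilon_k)\,\widehat I_k(\gamma)$ is correct and is also (implicitly) used in the paper's final step.
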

In particular, \Cref{thm:emp-wav-MI} implies the following:
\begin{corollary}
  \label{cor:MI-1}
  Under conditions of \Cref{thm:emp-wav-MI}, there exists $(\gamma^*_1, \gamma^*_2) \in (0, 1)^2$ such that
  \begin{align*}
    (1 - \ve_k) \, \widehat I_k(\gamma^*_1, \gamma_2^*)
    -
    \pr{
    \frac{1}{k}
    +
    (1+
    n \, \ln \big( 1 + k \, |\cX|) \big) \, \ve_k
    }
    \leq
    I(\mu)~.
  \end{align*}
\end{corollary}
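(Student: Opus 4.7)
The plan is to specialize \Cref{thm:emp-wav-MI} by a concrete choice of the subset $\tilde{\cX}$ and the stabilization parameter $\gamma$, then massage the resulting expression into the target form. Since the bound in the corollary contains no effective-support or tail-mass term, the natural choice is $\tilde{\cX} = \cX^n$, which makes $\mu(\cX^n \setminus \tilde{\cX}) = 0$ and $|\tilde{\cX}| = |\cX|^n$. This immediately removes the $\mu(\cX^n \setminus \tilde{\cX})$ contribution in the bound, leaving only an $|\tilde{\cX}|\gamma$ term and a term proportional to $\ln(e+e/\gamma)\,\ve_k$.

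Next, set $\gamma^*_k = 1/(k\,|\cX|^n)$. This value lies in $(0,1)$ whenever $k\,|\cX|^n > 1$, so the existence requirement is satisfied in the regime of interest. The purpose of this choice is to make $|\tilde{\cX}|\gamma^*_k$ equal exactly $1/k$, producing the first summand in the parentheses of the corollary. This is the standard way to balance the two competing $\gamma$-dependent terms in \Cref{thm:emp-wav-MI}: increasing $\gamma$ shrinks the logarithmic factor but grows the linear one, and $\gamma \asymp 1/(k\,|\tilde\cX|)$ is the sweet spot.

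It remains to control the factor $\ln(e+e/\gamma^*_k)$ multiplying $\ve_k$. A direct computation gives $\ln(e + e/\gamma^*_k) = 1 + \ln(1 + k\,|\cX|^n)$, and the target form is $1 + n \ln(1 + k\,|\cX|)$. So the last ingredient is the elementary inequality $\ln(1 + k\,|\cX|^n) \le n\,\ln(1 + k\,|\cX|)$, which follows from $(1+k\,|\cX|)^n \ge 1 + k\,|\cX|^n$ for $k \ge 1$; the latter is immediate from the binomial expansion, since the single term $(k\,|\cX|)^n = k^n |\cX|^n \ge k\,|\cX|^n$ already dominates. Taking logarithms and substituting into \Cref{thm:emp-wav-MI} gives precisely the inequality claimed in the corollary.

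I do not anticipate any significant obstacle: once the identification $\tilde{\cX} = \cX^n$ is made, the rest is a tuning exercise around a single scalar $\gamma$. The only mildly delicate step is the logarithmic inequality above, which replaces the raw support size $|\cX|^n$ inside the logarithm by $n \ln(1+k|\cX|)$ and keeps the final bound from exhibiting an $n\,\ln|\cX|^n$ blow-up outside the logarithm; but this is purely an elementary manipulation, so no deep work is required.
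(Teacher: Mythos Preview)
Your proposal is correct and follows essentially the same route as the paper: take $\tilde{\cX}=\cX^n$ (so that $\mu(\cX^n\setminus\tilde{\cX})=0$) and set $\gamma^*_k = 1/(k\,|\cX|^n)$, which is exactly the choice indicated in the paper (cf.\ \Cref{thm:emp-wav-MI1} and the closing line of the proof of \Cref{thm:emp-wav-MI}). Your elementary binomial step $(1+k|\cX|)^n \geq 1 + k|\cX|^n$ is the missing algebra the paper leaves implicit, so nothing further is needed.
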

Note that, choosing any of the upper bounds on $\E U_k$ discussed in \Cref{sec:missing-mass}, we can see that \Cref{cor:MI-1} implies asymptotic convergence in as a sense
\begin{align*}
  \lim_{k \to \infty} \widehat I_k(\gamma^*_1, \gamma_2^*) \leq I(\mu)~.
\end{align*}

\subsection{Proof of \Cref{thm:emp-wav-MI}}
\label{sec:proof-of-emp-wav-MI}
The proof will heavily rely on the simple fact that
\begin{align}
  \label{eq:PX-i-xi}
 1-\xi(\sample) \,
 = \begin{cases}
 1, & \text{ if $\sample \in \{\Sample_1,\ldots,\Sample_k\}$;} \\
 0, & \text{ otherwise.}
 \end{cases}
\end{align}
Recalling that $S = \big\{i \in [k] ~:~ \Sample_i \neq \Sample_j \quad \forall j < i \big\}$, this immediately implies the following connection between $U_k$ and the quantities used in \Cref{alg:MI}:
\begin{proposition}
  \label{prop:wav-missing-mass}
  We have that
  \begin{align*}
    \sum_{j \in S} \mu(\Sample_j)
    =
    \sum_{\sample \in \cX^n} (1-\xi(\sample)) \, \mu(\sample)
    = 1 - U_k~.
    \end{align*}
\end{proposition}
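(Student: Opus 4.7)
}

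The statement reduces to a simple bookkeeping identity, so the plan is to verify the two equalities separately using the combinatorial meaning of $S$ and the indicator identity \eqref{eq:PX-i-xi}. For the first equality, I would observe that the map $j \mapsto \Sample_j$ restricted to $S$ is, by construction of $S$, a bijection between $S$ and the set of distinct observed samples $\mathcal{D} := \{\Sample_1,\ldots,\Sample_k\} \subseteq \cX^n$. Indeed, each element of $\mathcal{D}$ has a unique first occurrence index in $[k]$, and $S$ is exactly the set of these first-occurrence indices. Hence
\begin{align*}
\sum_{j \in S} \mu(\Sample_j) \;=\; \sum_{\sample \in \mathcal{D}} \mu(\sample) \;=\; \sum_{\sample \in \cX^n} \mathbb{I}\{\sample \in \{\Sample_1,\ldots,\Sample_k\}\} \, \mu(\sample) \;=\; \sum_{\sample \in \cX^n} (1 - \xi(\sample)) \, \mu(\sample),
\end{align*}
where the last equality is \eqref{eq:PX-i-xi}.

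For the second equality, I would use that $\mu$ is a probability distribution on $\cX^n$, so $\sum_{\sample \in \cX^n} \mu(\sample) = 1$, and therefore
\begin{align*}
\sum_{\sample \in \cX^n} (1 - \xi(\sample)) \, \mu(\sample)
\;=\; \sum_{\sample \in \cX^n} \mu(\sample) - \sum_{\sample \in \cX^n} \xi(\sample) \, \mu(\sample)
\;=\; 1 - U_k,
\end{align*}
using the definition of $U_k$ given just before the proposition. Combining the two displays yields the claim.

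There is no real obstacle here: the only subtle point is justifying that summing over $S$ (first-occurrence indices) is the same as summing over the set of observed distinct values, which is immediate from the definition $S = \{i \in [k] : \Sample_i \neq \Sample_j \; \forall j < i\}$. All steps are deterministic identities (no probabilistic argument is needed), which is why this proposition is stated as a preparatory lemma used to connect the normalization constant $Z = \sum_{j \in S} \mu(\Sample_j)$ in \Cref{alg:MI} to the missing mass $U_k$ appearing in the main estimation bound \Cref{thm:emp-wav-MI}.
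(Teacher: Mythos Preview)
Your proposal is correct and matches the paper's approach exactly: the paper states that the proposition follows immediately from \eqref{eq:PX-i-xi} and the definition of $S$, which is precisely the two-step verification you give (bijection between $S$ and the distinct observed values, then subtract the definition of $U_k$ from the total mass $1$).
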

Recall that the product distribution of $\mu$ is defined as
\begin{align*}
  \mu^{\otimes}(\sample)
  = \prod_{i=1}^n \sum_{\sample^{\backslash i}} \mu(\sample_1, \ldots, \sample_{i-1}, \sample_i, \sample_{i+1}, \ldots, \sample_n)~.
\end{align*}
Note that we use $\sum_{\sample^{\backslash i}} \mu(\cdots)$ instead of $\mu(\sample_i)$ since these are not necessarily equal for some $\mu$.
%
% Introducing a one-dimensional version of $\xi$, 
% for $i \in [n]$ and $z \in \cX$, as
% \begin{align*}
%   \xi_i(z) = \mathbb{I}\{z \not \in \{\Sample_{1,i}, \ldots, \Sample_{k,i}\}\}~,
% \end{align*}
% we introduce the normalization factors
% \begin{align*}
%   \Normfactor = \sum_{\sample \in \cX^n} (1-\xi(\sample)) \mu(\sample)~, \qquad
%   \Normfactor^{\otimes} = \prod_{i=1}^n \sum_{z \in \cX} (1-\xi_i(z)) \sum_{\sample^{\backslash i}} \mu(\sample_1, \ldots, \sample_{i-1}, z, \sample_{i+1}, \ldots, \sample_n)~.
% \end{align*}
%
% We first note that by the definition of $\widehat \mu^{\otimes}$, for any $\sample' \in \cX^n$,
% %
% \begin{align*}
%   \widehat \mu^{\otimes}(\sample')
%   &=
%     \prod_{i=1}^n
%     \frac{\sum_{\sample^{\backslash i}} \mu(\sample_1, \ldots, \sample_{i-1}, \sample_i', \sample_{i+1}, \ldots, \sample_n)}{
%     \sum_{j \in S} \sum_{\sample^{\backslash i}} \mu(\sample_1, \ldots, \sample_{i-1}, \Sample_{j,i}, \sample_{i+1}, \ldots, \sample_n)
%     }\\
%   &=
%     \prod_{i=1}^n
%     \frac{\sum_{\sample^{\backslash i}} \mu(\sample_1, \ldots, \sample_{i-1}, \sample_i', \sample_{i+1}, \ldots, \sample_n)}{
%     \sum_{z \in \cX} (1-\xi_i(z)) \sum_{\sample^{\backslash i}} \mu(\sample_1, \ldots, \sample_{i-1}, z, \sample_{i+1}, \ldots, \sample_n)
%     }\\
%   &=
%     \frac{\mu^{\otimes}(\sample')}{\Normfactor^{\otimes}}~,
% \end{align*}
% where the second equality comes from the definition of $\xi_i$ and the fact that the $X_i$ are all different.
%
%
Now, using the definitions of $ \widehat I_k$, $\widehat \mu$, and $\widehat \mu^{\otimes}$,
\begin{align*}
  \widehat I_k(\gamma_1, \gamma_2)
  % &=
  %   \frac{1}{Z} \sum_{i \in S} \mu(\Sample_i) \pr{ \ln\pr{\frac{\mu(\Sample_i)}{Z} + \frac{\gamma_1}{Z}
  %   }
  %   - \ln \pr{\widehat \mu^{\otimes}(\Sample_i) + \frac{\gamma_2}{Z}}
  %   }\\
  &=
    \frac{1}{\Normfactor} \sum_{i \in S} \mu(\Sample_i) \pr{ \ln\pr{\frac{\mu(\Sample_i)}{\Normfactor} + \gamma_1
    }
    - \ln \pr{\widehat \mu^{\otimes}(\Sample_i) + \gamma_2}
    } \\
  &=
    \frac{1}{\Normfactor} \sum_{\sample \in \cX^n} (1-\xi(\sample)) \, \mu(\sample) \pr{ \ln\pr{\frac{\mu(\sample)}{\Normfactor} + \gamma_1
    }
    - \ln \pr{\widehat \mu^{\otimes}(\Sample_i) + \gamma_2}
    } \tag{by \cref{eq:PX-i-xi} }\\
  &=
    \frac{1}{\Normfactor} \sum_{\sample \in \cX^n} (1-\xi(\sample)) \, \mu(\sample) \pr{ \ln\pr{\frac{\mu(\sample) + \gamma_1}{\mu^{\otimes}(\sample) + \gamma_1}
    }
    + \ln\pr{\frac{\mu^{\otimes}(\sample) + \gamma_1}{\widehat \mu^{\otimes}(\sample) + \gamma_2}}
    + \ln\frac{1}{\Normfactor}
    }\\
  % &\le
  %   \frac{1}{\Normfactor} \sum_{\sample \in \cX^n} (1-\xi(\sample)) \, \mu(\sample) \pr{ \ln\pr{\frac{\mu(\sample) + \gamma}{\mu^{\otimes}(\sample) + \gamma}}
  %   }
  %   +
  %   \ln \frac{\Normfactor^{\otimes}}{\Normfactor}\\
    &=
      \underbrace{
      \frac{1}{\Normfactor}\sum_{\sample \in \cX^n} \mu(\sample) \ln \pr{ \frac{\mu(\sample) + \gamma_1}{\mu^{\otimes}(\sample) + \gamma_1}}
      }_{(i)}
      +
      \underbrace{
      \frac{1}{\Normfactor} \sum_{\sample \in \cX^n} \xi(\sample) \, \mu(\sample) \ln \pr{ \frac{\mu^{\otimes}(\sample) + \gamma_1}{\mu(\sample) + \gamma_1}}
      }_{(ii)}
      +
      \underbrace{
      \ln \frac{1}{\Normfactor}
      }_{(iii)}\\
      &+
        \underbrace{
        \frac{1}{\Normfactor} \sum_{\sample \in \cX^n} (1-\xi(\sample)) \, \mu(\sample)
      \ln\pr{\frac{\mu^{\otimes}(\sample) + \gamma_1}{\widehat \mu^{\otimes}(\sample) + \gamma_2}}
      }_{(iv)}
\end{align*}
Now we control each of the terms individually.
To control $(i)$ we will first need the fact that $q \ln((q+\gamma_1)/p) \leq q \ln(q/p) + \gamma_1$ for any $q,p \in [0,1], \gamma_1 > 0$.
Note that this follows since
\begin{align}
  \label{eq:MI-1}
  q \ln\pr{\frac{q+\gamma_1}{p}}
  = q \ln\pr{1+\frac{\gamma_1}{q}} + q \ln\pr{\frac{q}{p}}
  \le \gamma_1 + q \ln\pr{\frac{q}{p}}
\end{align}
using that $\ln(1+a) \le a$ for $a > -1$.
Getting back to $(i)$, and using the aforementioned inequality, we get
\begin{align*}
  (i) &= \frac{1}{\Normfactor}\sum_{\sample \in \cX^n} \mu(\sample) \ln \pr{ \frac{\mu(\sample) + \gamma_1}{\mu^{\otimes}(\sample) + \gamma_1}}\\
      &= \frac{1}{\Normfactor}\sum_{\sample \in \tilde{\cX}} \mu(\sample) \ln \pr{ \frac{\mu(\sample) + \gamma_1}{\mu^{\otimes}(\sample) + \gamma_1}}
        + \frac{1}{\Normfactor}\sum_{\sample \in \cX^n \setminus \tilde{\cX}} \mu(\sample) \ln \pr{ \frac{\mu(\sample) + \gamma_1}{\mu^{\otimes}(\sample) + \gamma_1}}\\
      &\leq \frac{1}{\Normfactor}\sum_{\sample \in \tilde{\cX}} \mu(\sample) \ln \pr{ \frac{\mu(\sample) + \gamma_1}{\mu^{\otimes}(\sample) + \gamma_1}}
        + \frac{1}{\Normfactor} \, \ln \pr{ \frac{1 + \gamma_1}{\gamma_1}} \, \mu(\cX^n \setminus \tilde{\cX}) \\
  &\le
    \frac{1}{\Normfactor} \sum_{\sample \in \tilde \cX} \pr{ \mu(\sample) \ln \pr{ \frac{\mu(\sample)}{\mu^{\otimes}(\sample)}} + \gamma_1 }
    +
    \frac{1}{\Normfactor} \, \ln \pr{1+\frac{1}{\gamma_1}} \, \mu(\cX^n \setminus \tilde{\cX})
    \tag{by \Cref{eq:MI-1}}\\
&= \frac{1}{\Normfactor}\left(D_{\KL}(\mu, \mu^{\otimes}) + |\tilde \cX| \, \gamma_1\right)
        +
    \frac{1}{\Normfactor} \, \ln \pr{1+\frac{1}{\gamma_1}} \, \mu(\cX^n \setminus \tilde{\cX})~.
\end{align*}
Furthermore,
\begin{align*}
  (ii)
  \leq
  \frac{1}{\Normfactor} \, \sum_{\sample \in \cX^n} \xi(\sample) \, \mu(\sample) \ln \pr{1 + \frac{1}{\gamma_1}}
  =
  \frac{1-\Normfactor}{\Normfactor} \, \ln \pr{1 + \frac{1}{\gamma_1}}~.
\end{align*}
Next, observe that $(iii) \leq \ln(1/\Normfactor)$.
Finally, term $(iv)$ is controlled through the following fact shown at the end of this section:
\begin{lemma}
  \label{lem:controlling-product-mu-gap}
  Suppose that $\gamma_1 \geq 0$ while $\gamma_2 \geq \gamma_1 + n (1-Z)$.
  Then,
  \begin{align*}
    \frac{1}{\Normfactor}\sum_{\sample \in \cX^n} (1-\xi(x)) \mu(\sample) \ln \pr{ \frac{\mu^{\otimes}(\sample) + \gamma_1}{\widehat \mu^{\otimes}(\sample) + \gamma_2}}
    \leq 0~.
  \end{align*}
\end{lemma}
Putting everything together, we obtain
\begin{align*}
  \widehat I_k(\gamma_1, \gamma_2)
  &\leq
  \frac{1}{\Normfactor}\left(D_{\KL}(\mu, \mu^{\otimes}) + |\tilde \cX|\gamma_1\right)
  +
  \frac{1}{\Normfactor} \, \ln \pr{ 1 + \frac{1}{\gamma_1}} \pr{ \mu(\cX^n \setminus \tilde{\cX}) + 1-\Normfactor }
  +
  \ln(1/\Normfactor)~.
\end{align*}
Finally, multiplying through by $\Normfactor$ the entire inequality, and using the fact that $\Normfactor \ln(1/\Normfactor) \leq 1-\Normfactor$, we get
\begin{align*}
  \Normfactor \, \widehat I_k(\gamma_1, \gamma_2)
  &\leq
  D_{\KL}(\mu, \mu^{\otimes}) + |\tilde \cX|\gamma_1
  +
  \ln \pr{ 1 + \frac{1}{\gamma_1}} \pr{ \mu(\cX^n \setminus \tilde{\cX}) + 1-\Normfactor }
  +
    1 - \Normfactor\\
  &\leq
    D_{\KL}(\mu, \mu^{\otimes}) + |\tilde \cX|\gamma_1
  +
  \ln \pr{ e + \frac{e}{\gamma_1}} \pr{ \mu(\cX^n \setminus \tilde{\cX}) + 1-\Normfactor }.
\end{align*}
To complete the proof we need to give a lower bound on $\Normfactor$.
Note that $\Normfactor = 1 - U_k$ by the definition of $\Normfactor$ and \Cref{prop:wav-missing-mass}, and so by \Cref{thm:missing-mass-concentration}
\begin{align*}
  \P\pr{1 - \E U_k > 1 - U_k + t} \leq e^{-t k ^2}~.
\end{align*}
Using this concentration bound together with the choices of $\gamma$ (also setting $\delta_{\supp}=0$ for the first inequality in the main statement) completes the proof of \Cref{thm:emp-wav-MI}.
\QED

\begin{proof}[Proof of \Cref{lem:controlling-product-mu-gap}]
Observe that
\begin{align*}
  \widehat \mu^{\otimes}(\sample)
  &=
    (1-\xi(\sample)) \, \prod_{j=1}^n \sum_{t \in S : X_{t,j} = x_j} \hat \mu(X_{t,1}, \ldots, x_j, \ldots, X_{t,n})\\
  &=
    \frac{1}{Z^n} \, (1-\xi(\sample)) \, \prod_{j=1}^n \sum_{t \in S : X_{t,j} = x_j} \mu(X_{t,1}, \ldots, x_j, \ldots, X_{t,n})\\
  &=
    \frac{1}{Z^n} \, (1-\xi(\sample)) \, \prod_{j=1}^n \sum_{x' \in \cX^n} (1-\xi(x')) \, \mathbb{I}\{x'_j = x_j\} \, \mu(x_1', \ldots, x_j, \ldots, x_n')\\
  &=
    \frac{1}{Z^n} \, (1-\xi(\sample)) \, \prod_{j=1}^n \sum_{{x'}^{\delj}} (1-\xi(x'_1, \ldots, x_j, \ldots, x'_n)) \, \mu(x_1', \ldots, x_j, \ldots, x_n')~.
\end{align*}
Now, using that fact that
\begin{align*}
  &\sum_{{x'}^{\delj}} \xi(x'_1, \ldots, x_j, \ldots, x'_n) \, \mu(x_1', \ldots, x_j, \ldots, x_n')\\
  &\qquad\leq
    \sum_{{x'}^{\delj}, x_j} \xi(x'_1, \ldots, x_j, \ldots, x'_n) \, \mu(x_1', \ldots, x_j, \ldots, x_n')\\
  &\qquad= 1-Z
\end{align*}
we arrive at
\begin{align*}
  \widehat \mu^{\otimes}(\sample)
  &\geq
    \frac{1}{Z^n} \, (1-\xi(\sample)) \, \pr{ \prod_{j=1}^n \pr{ \sum_{{x'}^{\delj}} \mu(x_1', \ldots, x_j, \ldots, x_n')
    + Z - 1
    }}_+\\
  &\stackrel{(a)}{\geq}
    \frac{1}{Z^n} \, (1-\xi(\sample)) \, \pr{ \prod_{j=1}^n \sum_{{x'}^{\delj}}  \mu(x_1', \ldots, x_j, \ldots, x_n')
    - n (1-Z) }_+\\
  &=
    \frac{1}{Z^n} \, (1-\xi(\sample)) \pr{ \mu^{\otimes}(x) - n \, (1-Z) }_+
  % &\geq
  %   \frac{1}{Z^n} \, (1-\xi(\sample)) \, \mu^{\otimes}(x)
  %   + \frac{(1 - Z)^n}{Z^n} \, (1-\xi(\sample))
\end{align*}
where to get $(a)$ we used:
\begin{proposition}
  For any $p_1, \ldots, p_n \in [0,1]$ and $a \geq 0$, we have
  \begin{align*}
    \prod_{i=1}^n (p_i - a) \geq \big( \prod_{i=1}^n p_i \big) - n \, a~.
  \end{align*}
\end{proposition}
\begin{proof}
  The statement following by lower-bounding the left-hand side by its linearization in $a$ (derivative at $0$), while realizing that it is a convex function of $a$.
\end{proof}
The above gives us that
\begin{align*}
  &\frac{1}{\Normfactor}\sum_{\sample \in \cX^n} (1-\xi(x)) \mu(\sample) \ln \pr{ \frac{\mu^{\otimes}(\sample) + \gamma_1}{\widehat \mu^{\otimes}(\sample) + \gamma_2}}\\
  &\qquad\leq
    \frac{1}{\Normfactor}\sum_{\sample \in \cX^n} (1-\xi(x)) \mu(\sample) \ln \pr{ \frac{\mu^{\otimes}(\sample) + \gamma_1}{\frac{1}{Z^n} \, (1-\xi(\sample)) \pr{ \mu^{\otimes}(x) - n (1 - Z)}_+ + \gamma_2}}
\end{align*}
and focusing on the case $1 - \xi(x) = 1$ (otherwise both sides are $0$) the above equals to
\begin{align*}
  &\frac{1}{\Normfactor}\sum_{\sample \in \cX^n} \mu(\sample) \ln \pr{ \frac{\mu^{\otimes}(\sample) + \gamma_1}{\frac{1}{Z^n} \, \pr{ \mu^{\otimes}(x) - n (1 - Z)}_+ + \gamma_2}}\\
  &\leq
    \frac{1}{\Normfactor}\sum_{\sample \in \cX^n} \mu(\sample) \ln \pr{ \frac{\mu^{\otimes}(\sample) + \gamma_1}{(\mu^{\otimes}(x) - n (1 - Z))_+ + \gamma_2}}\\
  &\leq
    \frac{1}{\Normfactor}\sum_{\sample \in \cX^n} \mu(\sample) \ln \pr{ \frac{\mu^{\otimes}(\sample) + \gamma_1}{\mu^{\otimes}(x) - n (1 - Z) + \gamma_2}}\\
  &\leq 0
\end{align*}
by setting $\gamma_2 \geq \gamma_1 + n (1-Z)$.
\end{proof}

\subsection{Expected missing mass under Zipf distribution}
\label{sec:zipf}
We will rely on some machinery used by \cite{ohannessian2010distribution} who
established distribution-dependent bounds on the expected missing mass.
As before let $\mu$ be supported on a countable set.
The \emph{accrual function} is defined as
\begin{align*}
  F(v) = \sum_{\mu(i) \leq v} \mu(i) \qquad (v \in [0,1])
\end{align*}
and moreover the \emph{accrual rates} are defined as
\begin{align*}
  \underline{\rho} = \liminf_{v \to 0} \frac{\ln F(v)}{\ln v}~, \qquad
  \overline{\rho} = \limsup_{v \to 0} \frac{\ln F(v)}{\ln v}
\end{align*}

We use the following result:
\begin{lemma}[{\citealp[Theorem~1]{ohannessian2010distribution}}]
  \label{lem:missing-mass-accrual-bounds}
  Let \(\mu\) have lower and upper accrual rates \(0 < \underline{\rho} \leq \overline{\rho} < \infty\). Then for every \(\beta > 0\) there exists \(k_0\) such that for all \(k > k_0\) we have:
  \begin{align*}
    k^{-(\overline{p}+\beta)} \leq \mathbb{E}[U_k] \leq k^{-(\underline{p}-\beta)}
  \end{align*}
or, equivalently, for every \(\beta > 0\) we have that \(\mathbb{E}[U_k]\) is both \(\Omega(k^{-(\overline{p}+\beta)})\) and \(\mathcal{O}(k^{-(\underline{p}-\beta)})\).

\end{lemma}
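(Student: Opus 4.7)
The plan is to relate $\mathbb{E}[U_k] = \sum_i \mu(i)(1-\mu(i))^k$ to the accrual function $F$ evaluated at scale $1/k$, and then convert this into a bound in powers of $k$ via the very definition of the accrual rates $\underline{\rho}, \overline{\rho}$. The intuition is that atoms of size much larger than $1/k$ are almost surely seen in $k$ draws, while atoms of size much smaller than $1/k$ contribute almost their full mass to $U_k$; hence the quantity $F(1/k)$, the cumulative mass of atoms below scale $1/k$, should determine the order of magnitude of $\mathbb{E}[U_k]$.

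First I would use the elementary estimates $(1-p)^k \leq e^{-kp}$ for $p\in[0,1]$ and $(1-p)^k \geq e^{-2kp}$ for $p \leq 1/2$, together with the observation that atoms with $\mu(i) > 1/2$ contribute only $\mathcal{O}(2^{-k})$ to $\mathbb{E}[U_k]$, to sandwich
\[
  \sum_{i:\, \mu(i) \leq 1/2} \mu(i)\, e^{-2k\mu(i)}
  \;\leq\; \mathbb{E}[U_k] \;+\; \mathcal{O}(2^{-k})
  \;\leq\; \sum_i \mu(i)\, e^{-k\mu(i)}.
\]
Next, I would decompose each of these sums along the dyadic scales $A_j = \{i : 2^{-(j+1)} < \mu(i) \leq 2^{-j}\}$, noting that $\sum_{i \in A_j} \mu(i) = F(2^{-j}) - F(2^{-(j+1)})$ and that the exponential weight on $A_j$ lies between $e^{-k 2^{-j}}$ and $e^{-k 2^{-(j+1)}}$. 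This reduces both bounds, up to universal constants, to studying
\[
  S(k) \;=\; \sum_{j \geq 0} \big(F(2^{-j}) - F(2^{-(j+1)})\big)\, e^{-k\,2^{-j}},
\]
in which the exponential factor acts as a soft cutoff at $j^\star \approx \log_2 k$.

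A summation-by-parts (or simply splitting the sum at $j^\star$) then yields $S(k) = \Theta(F(1/k))$, giving $\mathbb{E}[U_k] = \Theta(F(1/k))$ up to the negligible $\mathcal{O}(2^{-k})$ tail. To finish, I would invoke the definition of the accrual rates: for every $\beta > 0$ there exists $v_0$ such that $v^{\overline{\rho}+\beta} \leq F(v) \leq v^{\underline{\rho}-\beta}$ for all $v \in (0, v_0]$. Setting $v = 1/k$ for $k$ large enough gives
\[
  k^{-(\overline{\rho}+\beta)} \;\lesssim\; \mathbb{E}[U_k] \;\lesssim\; k^{-(\underline{\rho}-\beta)},
\]
and the multiplicative constants can be absorbed into a slightly enlarged $\beta$, matching the claim.

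The main obstacle will be justifying $S(k) = \Theta(F(1/k))$, and in particular its lower direction: one needs to show that scales $j \gg \log_2 k$ (tiny atoms) do not accumulate so much additional mass that they dominate $F(1/k)$ itself, while scales $j \ll \log_2 k$ (large atoms) are killed by the exponential weight. The ``large atoms'' side is essentially automatic from $e^{-k 2^{-j}} \to 0$, but the ``tiny atoms'' side requires the finiteness $\overline{\rho} < \infty$ to guarantee that $F(v) \to 0$ polynomially rather than arbitrarily slowly, so that the tail $\sum_{j > j^\star}(F(2^{-j}) - F(2^{-(j+1)}))$ remains comparable to $F(1/k)$ and does not inflate the bound.
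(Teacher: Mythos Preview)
The paper does not supply a proof of this lemma: it is quoted verbatim as Theorem~1 of \citet{ohannessian2010distribution} and used as a black box in the corollary that follows. So there is no ``paper's own proof'' to compare against; your sketch is the only argument on the table.

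Your overall strategy---replace $(1-p)^k$ by exponentials, decompose dyadically, and reduce to the behaviour of $F$ near $v=1/k$---is the standard one and is essentially how the cited result is proved. Two points deserve tightening, though.

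First, the intermediate claim $S(k)=\Theta(F(1/k))$ is stronger than what you need and may fail when $\underline{\rho}<\overline{\rho}$. The lower bound $S(k)\ge e^{-1}F(1/k)$ is immediate (the tiny-atom tail $\sum_{j\ge j^\star}(F(2^{-j})-F(2^{-(j+1)}))$ telescopes exactly to $F(2^{-j^\star})=F(1/k)$, with exponential weight at least $e^{-1}$). But for the upper bound the large-atom block $j<j^\star$ need not be $O(F(1/k))$: if $F$ oscillates between the $v^{\underline\rho}$ and $v^{\overline\rho}$ envelopes, you can have $F(1/k)\asymp k^{-\overline\rho}$ at a particular $k$ while the large-atom contribution is of order $k^{-\underline\rho}\gg F(1/k)$. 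The fix is simply to bypass the $\Theta(F(1/k))$ claim and bound the large-atom block directly by $\sum_{m\ge 1} F(2^{-(j^\star-m)})\,e^{-2^{m-1}}\le k^{-(\underline\rho-\beta)}\sum_{m\ge 1}2^{m(\underline\rho-\beta)}e^{-2^{m-1}}$, a convergent series; combined with $F(1/k)\le k^{-(\underline\rho-\beta)}$ this gives the stated upper bound on $\mathbb{E}[U_k]$.

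Second, your diagnosis of the ``main obstacle'' is inverted. The tiny-atom tail is never the problem---it telescopes exactly, no hypothesis on $\overline\rho$ required. The finiteness of $\overline\rho$ enters only to make the \emph{lower} bound on $\mathbb{E}[U_k]$ nontrivial, via $F(1/k)\ge k^{-(\overline\rho+\beta)}$; and the part that actually needs work is controlling the \emph{large}-atom block in the upper bound, which is handled by the doubly-exponential suppression as above.
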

\begin{proposition}
  \label{prop:zipf}
  Consider the distribution $\mu(v) = i^{-\alpha} / H(\alpha, N)$ for $i \in [N]$ where $\alpha > 1$ and $H(\alpha, N) = \sum_{i=1}^N i^{-\alpha}$.
  Then, $\underline{\rho} = \Omega(\frac{\alpha-1}{\alpha})$ as $N \to \infty$.
\end{proposition}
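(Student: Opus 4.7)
The plan is to compute the accrual function $F(v)$ of the Zipf distribution explicitly up to constants as $v \to 0$, then apply the definition of $\underline{\rho}$ directly. Since $\alpha > 1$, the normalizer $H(\alpha, N)$ converges to $\zeta(\alpha) < \infty$ as $N \to \infty$, so I would first pass to the limiting distribution $\mu(i) = i^{-\alpha}/\zeta(\alpha)$ on $\mathbb{N}$ (the finite-$N$ cutoff at $\mu(N)$ escapes to $0$ and thus plays no role in the limiting behavior at small $v$).

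Next, I would identify the threshold index beyond which $\mu(i) \leq v$: the inequality $i^{-\alpha}/\zeta(\alpha) \leq v$ is equivalent to $i \geq (v\zeta(\alpha))^{-1/\alpha}$, so set $i_v = \lceil (v\zeta(\alpha))^{-1/\alpha} \rceil$. Then $F(v) = \frac{1}{\zeta(\alpha)} \sum_{i \geq i_v} i^{-\alpha}$, and by the integral comparison for the decreasing function $x \mapsto x^{-\alpha}$,
\begin{align*}
\frac{i_v^{1-\alpha}}{\alpha-1} \;\leq\; \sum_{i \geq i_v} i^{-\alpha} \;\leq\; \frac{(i_v-1)^{1-\alpha}}{\alpha-1} \qquad (i_v \geq 2).
\end{align*}
Substituting $i_v \asymp (v\zeta(\alpha))^{-1/\alpha}$ yields $F(v) = \Theta\bigl(v^{(\alpha-1)/\alpha}\bigr)$ as $v \to 0$, with an explicit constant depending only on $\alpha$.

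Finally, taking logarithms gives $\ln F(v) = \frac{\alpha-1}{\alpha} \ln v + O(1)$ as $v \to 0$. Dividing by $\ln v$, which is negative and tends to $-\infty$, the $O(1)/\ln v$ correction vanishes, so
\begin{align*}
\lim_{v \to 0} \frac{\ln F(v)}{\ln v} = \frac{\alpha-1}{\alpha}.
\end{align*}
In particular $\underline{\rho} = \overline{\rho} = \frac{\alpha-1}{\alpha}$, which is strictly stronger than the stated $\Omega$-bound. The main obstacle is not analytic but notational: one must be careful that since $\ln v < 0$, an \emph{upper} bound on $F(v)$ translates to a \emph{lower} bound on the ratio $\ln F(v)/\ln v$, so one must apply the \emph{upper} side of the integral comparison (which uses $i_v - 1$) to obtain the desired lower bound on $\underline{\rho}$. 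The only other subtlety is to verify that the prefactor $1/\bigl((\alpha-1)\zeta(\alpha)\bigr)$ contributes only an $O(1)$ term to $\ln F(v)$, which is swamped by $\ln v$ in the limit.
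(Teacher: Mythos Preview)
Your approach is essentially the same as the paper's: both identify the threshold index $u \asymp (vH)^{-1/\alpha}$ and bound the tail $\sum_{i \geq u} i^{-\alpha}$ via integral comparison to extract the power-law exponent of $F(v)$. Your execution is in fact sharper and more careful—by taking both sides of the comparison you obtain $\underline{\rho} = \overline{\rho} = \tfrac{\alpha-1}{\alpha}$ exactly, and you correctly flag that an \emph{upper} bound on $F(v)$ is what yields the \emph{lower} bound on $\underline{\rho}$ (the paper only establishes a lower bound on $\ln F(v)$ and leaves the direction to loose $\Omega$-notation).
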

\begin{proof}
  The idea is to use \Cref{lem:missing-mass-accrual-bounds} to give an upper bound on
  the missing mass. Therefore, we need to establish a lower bound on $\ln F(v)$.
  For now, abbreviate
  \begin{align*}
    u = (v \, H(\alpha, N))^{-\frac{1}{\alpha}}~.
  \end{align*}

  First note that for some $1 \leq u \leq N$
  \begin{align*}
    \sum_{i \geq u}^N i^{-\alpha} \geq \int_{u}^N (1+i)^{-\alpha} \diff i
    =
    \frac{1}{\alpha - 1} \, \pr{ (1+u)^{1-\alpha} - (1 + N)^{1-\alpha} }~.
  \end{align*}
  On the other hand,
  \begin{align*}
    \sum_{i=1}^N i^{-\alpha} \leq \int_{1}^N (1+i)^{-\alpha} \diff i
    \leq
    \frac{1}{\alpha - 1} \, ( 1 - N^{1-\alpha} )~.
  \end{align*}
  So,
  \begin{align*}
    \ln F(v)
    &\geq
    \ln\pr{
    (1 + u)^{1-\alpha} - (1 + N)^{1-\alpha}
    }
      - \ln(1 - N^{1-\alpha})\\
    &\geq
    \ln\pr{
      (1 + u)^{1-\alpha} - (1 + N)^{1-\alpha}
    }
  \end{align*}
and then
  \begin{align*}
    \ln F(v)
    &= \Omega\pr{(1-\alpha) \ln(1 + u)} \qquad (\text{as} \, N \to \infty)\\
    &= \Omega\pr{(1-\alpha) \ln(u)} \\
    &= \Omega\pr{(1-\alpha) \ln((v \, H(\alpha, N))^{-\frac{1}{\alpha}})} \\
    &= \Omega\pr{\frac{\alpha-1}{\alpha} \, \ln(v) + \frac{\alpha-1}{\alpha} \ln H(\alpha, N)} \\
    &= \Omega\pr{\frac{\alpha-1}{\alpha} \, \ln(v)}~.
  \end{align*}
\end{proof}

\paragraph{Data-dependent estimate of the expected missing mass}
\label{sec:missing-mass-data-dependent}
We perform an experiment designed to give a data-dependent estimate of the expected missing mass $\E[U_k]$ for some specific datasets.
Clearly, we cannot simply apply a concentration bound discussed in \Cref{sec:missing-mass} since the complete support of the pseudo joint distribution derived from the LLM is unknown.
To this end, we approximate it with a finite support driven by the language model itself.
In particular, given a query we sample responses (at temperature $0.9$) until their total probability mass reaches $0.95$ or we reach $1000$ responses per query.
In case of TriviaQA, we performed $1233$ queries in total.
The mean and the median number of unique responses per query was eventually $118.3$ and $22$, respectively.
In case of the AmbigQA dataset, we performed $700$ queries, while the mean and the median number of unique responses
was $277$ and $69$, respectively.
%In both cases we sampled $k=1000$, responses to each query with temperature $0.9$.

At this point, we denote the set of responses by $\tilde \cX$ and let $\tilde U_k$ be the missing mass computed on $\tilde \cX$.
Then, we have
\begin{align*}
  \E[U_k] \leq U_k + \sqrt{\frac{\ln(\frac{1}{\delta})}{k}}
  \leq \tilde U_k + U_k - \tilde U_k + \sqrt{\frac{\ln(\frac{1}{\delta})}{k}}
  \leq \tilde U_k + 1 - P(\tilde\cX) + \sqrt{\frac{\ln(\frac{1}{\delta})}{k}}~,
\end{align*}
which can be computed in practice.
In \Cref{fig:missing-mass} we present our results in the form of empirical distributions of different quantities, where each observation corresponds to a single query.
We compute the bounds for TriviaQA and AmbigQA datasets (see \Cref{sec:experiments} for details about these datasets).
From \Cref{fig:missing-mass} we can conclude that the expected missing mass for both datasets is very small: Both the missing mass computed on $\tilde{\cX}$ and the resulting upper bound on $\E[U_k]$ 
are concentrated close to $0$, while the cumulative probability of the approximate support $\tilde{\cX}$ is close to $1$ most of the time, showing that our approximations are meaningful.
\begin{figure}[H]
  \centering
  \begin{subfigure}[b]{0.48\linewidth}
    \includegraphics[width=\textwidth]{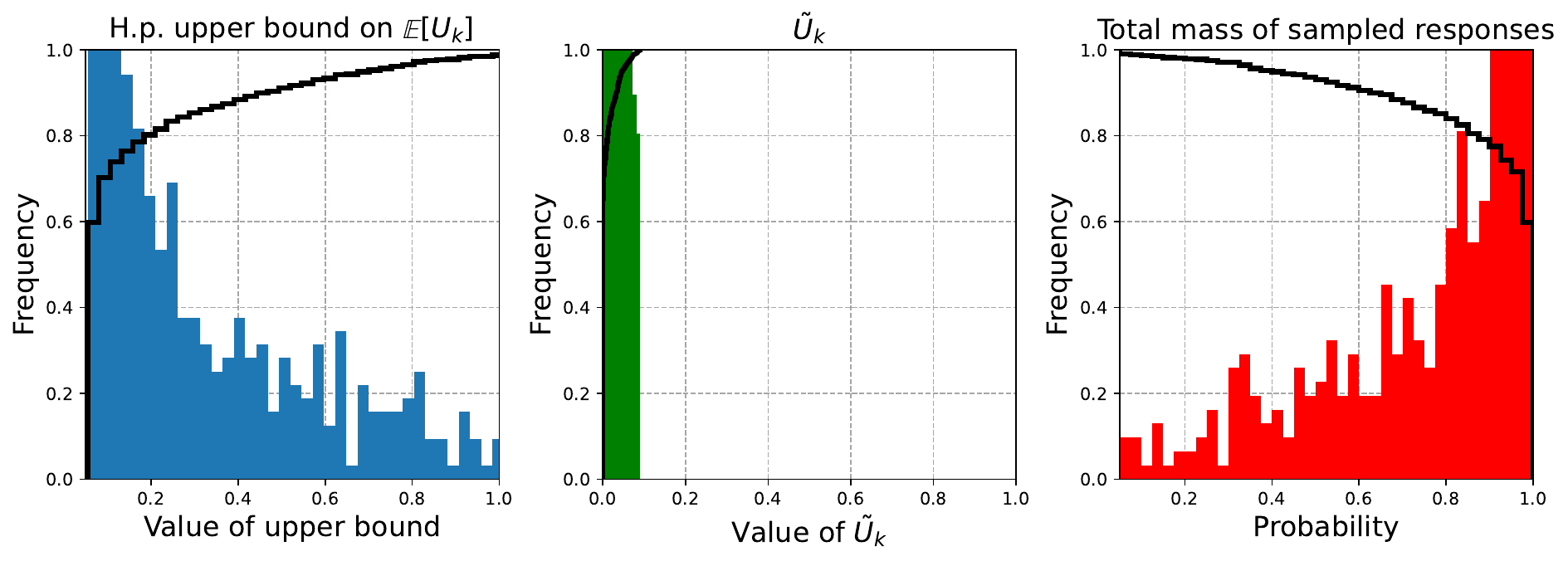}
    \caption*{TriviaQA dataset}
  \end{subfigure}
  \begin{subfigure}[b]{0.48\linewidth}
    \includegraphics[width=\textwidth]{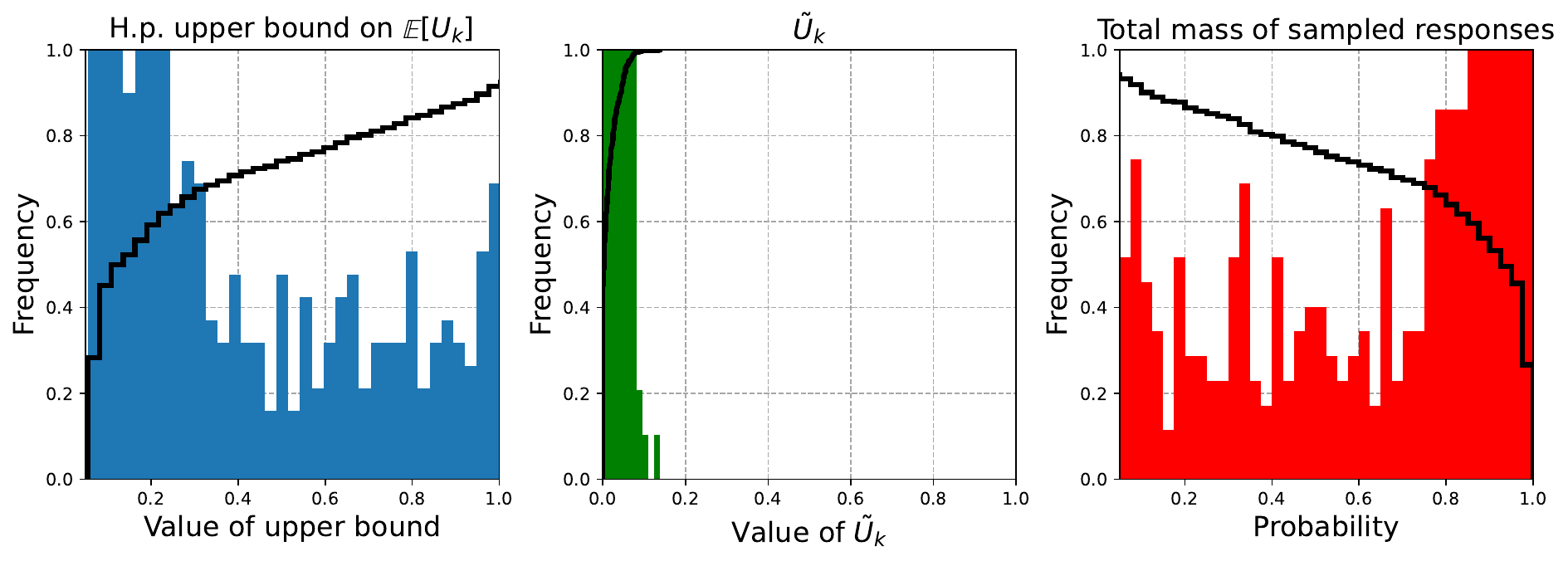}
    \caption*{AmbigQA dataset}
  \end{subfigure}
  \caption{Distributions of bounds on the missing mass. The left figure for each dataset presents the empirical distribution of the upper bounds on the missing mass $\E[U_k]$. %, where each observation is obtained for a separate query.
    The middle figure presents the empirical distribution of $\tilde U_k$, the missing mass computed on a finite support approximation (where the support is obtained by taking samples from the LLM until a cumulative probability of 95\% or 1000 samples are achieved).
    The right graph shows the empirical distribution of $P(\tilde \cX)$, the cumulative probabilities of all responses generated by the language model. For each figure, one observation (sample) corresponds to a single query.
    The black curves represent the corresponding empirical cumulative distribution functions for the upper bounds on $\E{U_k}$ and for $\tilde U_k$, and the empirical survival function (1 minus the empirical distribution function) for the distribution of $P(\tilde \cX)$.
  }
  \label{fig:missing-mass}
\end{figure}
%
%%% Local Variables:
%%% mode: latex
%%% TeX-master: "main_arxiv"
%%% End:
\clearpage

\end{document}